\newtheorem{theorem}{Theorem}
\newtheorem{lemma}[theorem]{Lemma}
\newtheorem{assumption}{Assumption}
\newtheorem*{theorem*}{Theorem}
\def\R{\mathbf{R}}
\def\E{\mathbb{E}}
\def\P{\mathbf{P}}
\def\Q{\mathbf{Q}}
\newcommand{\Pb}{p}
\def\u{\mathbf{u}}
\def\v{\mathbf{v}}
\def\Z{\mathbf{Z}}
\def\V{\textbf{V}}
\def\C{\textbf{C}}
\def\O{\textbf{O}}
\def\U{\textbf{U}}
\def\BSigma{\mathbf{\Sigma}}
\newcommand{\HCal}{\mathcal{H}}
\newcommand{\THCal}{\tilde{\mathcal{H}}}
\newcommand{\Phif}{\mathbf{\Phi}}
\newcommand{\Psif}{\mathbf{\Psi}}
\title{Learning low-dimensional state embeddings and metastable clusters from time series data}
\def\Q{\P}
\author{%
  Yifan Sun \\
  Carnegie Mellon University\\
  \texttt{yifans@andrew.cmu.edu}\\
   \And
   Yaqi Duan \\
   Princeton University\\
   \texttt{yaqid@princeton.edu}\\
   \AND
   Hao Gong\\
   Princeton University\\
   \texttt{hgong@princeton.edu} \\
   \And
   Mengdi Wang\\
   Princeton University\\
   \texttt{mengdiw@princeton.edu} \\
}
\setlist[enumerate]{%
labelsep=1pt,%
labelindent=0.4\parindent,%
itemindent=0pt,%
leftmargin=10pt,%
listparindent=-\leftmargin%
}
\begin{document}

\maketitle

\begin{abstract}
This paper studies how to find compact state embeddings from high-dimensional Markov state trajectories, where the transition kernel has a small intrinsic rank. In the spirit of diffusion map, we propose an efficient method for learning a low-dimensional state embedding and capturing the process's dynamics. This idea also leads to a kernel reshaping method for more accurate nonparametric estimation of the transition function. State embedding can be used to cluster states into metastable sets, thereby identifying the slow dynamics. Sharp statistical error bounds and misclassification rate are proved. Experiment on a simulated dynamical system shows that the state clustering method indeed reveals metastable structures. We also experiment with time series generated by layers of a Deep-Q-Network when playing an Atari game. The embedding method identifies game states to be similar if they share similar future events, even though their raw data are far different.
\end{abstract}

\section{Introduction}

High-dimensional time series is ubiquitous in scientific studies and machine learning. Finding compact representation from state-transition trajectories is often a prerequisite for uncovering the underlying physics and making accurate predictions.
Suppose that we are given a Markov process $\{X_t\}$ taking values in $\Omega\subset \mathbbm{R}^d$.  Let $p(y|x)$ be the one-step transition density function (transition kernel) of the Markov process.  
In practice, state-transition trajectories may appear high-dimensional, but they are often generated by a system with fewer internal parameters and small intrinsic dimension.

 In this paper, we focus on problems where the transition kernel $p(y|x)$ admits a low-rank decomposition structure.
Low-rank or nearly low-rank nature of the transition kernel has been widely identified in scientific and engineering applications, e.g. molecular dynamics \cite{rohrdanz2011determination, schutte2013msm}, periodized diffusion process \cite{gaarding1954asymptotic}, traffic transition data \cite{zhang2018spectral, duan2019stateagg}, Markov decision process and reinforcement learning \cite{krishnamurthy2016pac}. For reversible dynamical systems, leading eigenfunctions of $p$ are related to metastable sets and slow dynamics \cite{schutte2013msm}. Low-rank latent structures also helps state representation learning and dimension reduction in robotics and control \cite{bohmer2015staterep}.



			

Our goal is to estimate the transition kernel $p(\cdot|\cdot)$ from finite time series and find state representation in lower dimensions. 
For nonparametric estimation of probability distributions, one natural approach is the kernel mean embedding (KME). Our approach starts with a kernel space, but we ``open up" the kernel function into a set of features. We will leverage the low-rankness of $p$  in the spirit of diffusion map for dimension reduction.  By using samples of transition pairs $\{(X_t,X_{t+1})\}$, we can estimate the ``projection" of $p$ onto the product feature space and finds its leading singular functions.  This allows us to learn state embeddings that preserve information about the transition dynamics. Our approach can be thought of as a generalization of diffusion map to nonreversible processes and Hilbert space. We show that, when the features can fully express the true $p$, the estimated state embeddings preserve the diffusion distances and can be further used to cluster states that share similar future paths, thereby finding metastable sets and long-term dynamics of the process.

The contributions of this paper are:
\begin{enumerate}
\item {\it KME Reshaping for more accurate estimation of $p$.} The method of KME reshaping is proposed to estimate $p$ from dependent time series data. The method takes advantage of the low-rank structure of $p$ and can be implemented efficiently in compact space. Theorem 1 gives a finite-sample error bound and shows that KME reshaping achieves significantly smaller error than plain KME. 
\item {\it State embedding learning with statistical distortion guarantee.} In light of the diffusion map, we study state embedding by estimating the leading spectrum of the transition kernel. Theorems 2,3 show that the state embedding largely preserves the diffusion distance. 
\item {\it State clustering with misclassification error bound.} Based on the state embeddings, we can further aggregate states to preserve the transition dynamics and find metastable sets. 
Theorem 4 establishes the statistical misclassification guarantee for continuous-state Markov processes.
\item {\it Experiments with diffusion process and Atari game.} The first experiment studies a simulated stochastic diffusion process, where the results validate the theoretical bounds and reveals metastable structures of the process. The second experiment studies the time series generated by a deep Q-network (DQN) trained on an Atari game. The raw time series is read from the last hidden layer as the DQN is run. The state embedding results demonstrate distinctive and interpretable clusters of game states. Remarkably, we observe that game states that are close in the embedding space share similar future moves, even if their raw data are far different. 
\end{enumerate}
To our best knowledge, our theoretical results on estimating $p$ and state embedding are the first of their kind for continuous-state nonreversible Markov time series. Our methods and analyses leverage spectral properties of the transition kernel. We also provide the first statistical guarantee for partitioning the continuous state space according to diffusion distance.

\paragraph{\small Related Work}
Spectral dimension reduction methods find wide use in data analysis and scientific computing.
Diffusion map is a prominent dimension reduction tool which applies to data analysis, graph partitioning and dynamical systems \cite{lafon2006diffusion},\cite{coifman2008diffusion}.  
For molecular dynamics, \cite{schutte2011coreset} showed that leading spectrum of transition operator contains information on slow dynamics of the system, and it can be used to identify coresets upon which a coarse-grained Markov state model could be built. 
\cite{klus2018eigen} extended the transfer operator theory to reproducing kernel spaces and pointed out the these operators
are related to conditional mean embeddings of the transition distributions. 
See \cite{klus2016operator, klus2018data} for surveys on data-driven dimension reduction methods for dynamical systems. 
They did not study statistical properties of these methods which motivated our research. 

Nonparametric estimation of the Markov transition operator has been thoroughly studied, see \cite{yakowitz1979nonpar, lacour2007thesis, sart2014estimation}. 
Among nonparametric methods, kernel mean embeddings are prominent for representing probability distributions \cite{berlinet2004rkhs, smola2007kme}. 
\cite{song2009dynamic} extended kernel embedding methods to conditional distributions. \cite{grunewalder2012mdp} proposed to use conditional mean embedding to model Markov decision processes. 
See \cite{muandet2015kme} for a survey on kernel mean embedding. None of these works considered low-rank estimation of Markov transition kernel, to our best knowledge.

Estimation of low-rank transition kernel was first considered by \cite{zhang2018spectral} in the special case of finite-state Markov chains. 
\cite{zhang2018spectral} used a singular thresholding method to estimate the transition matrix and proves near-optimal error upper and lower bounds. They also proved misclassification rate for state clustering when the chain is lumpable or aggragable. \cite{li2018estimation} studied a rank-constrained maximum likelihood estimator of the transition matrix. \cite{duan2019stateagg} proposed a novel approach for finding state aggregation by spectral decomposing transition matrix and transforming singular vectors.  
For continuous-state reversible Markov chains, \cite{loffler2018spectral} studied the nonparametric estimation of transition kernel via Galerkin projection with spectral thresholding. They proved recovery error bounds when eigenvalues decay exponentially.

\vspace{-.2cm}

\paragraph{\small Notations}{\small
For a function $f:\Omega\to\mathbb{R}$, we define $\| f\|_{L^2}^2:= \int_{\Omega} f(x)^2dx $ and $\| f\|_{L^2(\pi)}^2:= \int_{\Omega} \pi(x)f(x)^2dx $, respectively. For $g(\cdot,\cdot)\to\mathbb{R}$, we define $\|g(\cdot,\cdot)\|_{L^2(\pi)\times L^2}:=(\int \pi(x) g(x,y)^2dydx )^{1/2}$.
We use $\|\cdot\|$ to denote the Euclidean norm of a vector. 
We let $t_{mix}$ denote the mixing time of the Markov process \cite{levin2017markov}, i.e, 
	$t_{mix} = \min\left\{t \mid \hbox{TV}(P^{t}(\cdot\mid x) , \pi(\cdot)) \leq \frac14,\forall x\in\Omega \right\},$
	where $TV$ is the total variation divergence between two distributions. Let $\pi(x)$ be the density function of invariant measure of the Markov chain.  
	Let $p(x,y)$ be the density of the invariant measure of the bivariate chain $\{(X_t,X_{t+1})\}_{t=0}^{\infty}$, i.e. 
	$p(X_t,X_{t+1})=\pi(X_t)p(X_{t+1}|X_t).$ We use $\mathbbm{P}(\cdot)$ to denote probability of an event. }

\section{KME Reshaping for Estimating $p$}

In this section we study the estimation of of transition function $p$ from a finite trajectory $\{X_t\}^n_{t=1}\subset\mathbb{R}^d$. We make following low-rank assumption regarding the transition kernel $p$, which is key to more accurate estimation. 
 \begin{assumption}
 There exist real-valued functions $\{u_k\}_{k=1}^r$, $\{v_k\}_{k=1}^r$ on $\Omega$ such that 
$p(y|x):=\sum_{k=1}^r\sigma_k u_k(x)v_k(y),$ where $r$ is the rank.
\end{assumption}

Due to the asymmetry of $p(\cdot, \cdot)$ and lack of reversibility, we use two reproducing kernel Hilbert spaces {${\HCal}$} and {$\tilde{\HCal}$} to embed the left and right side of $p$. Let $K$ and $\tilde K$ be the kernel functions for {${\HCal}$} and {$\tilde{\HCal}$} respectively. 
The Kernel Mean Embedding (KME) $\mu_{\Pb}(x,y)$ of the joint distribution $p(x,y)$ into the product space $\HCal\times\tilde{\HCal}$ is  defined by
\begin{align*}
\mu_{\Pb}(x,y):=\int K(x,u)\tilde{K}(y,v)p(u,v)dudv.
\end{align*}
Given sample transition pairs	$\{(X_i,X'_i)\}_{i=1}^n$, the natural empirical KME estimator is
$\label{eq-kme}
\tilde{\mu}_\Pb(x,y)=\frac{1}{n}\sum_{i=1}^n K(X_i,x)\tilde K(X_i',y).
$
If data pairs are independent, one can show that the embedding error $\|{\mu}_\Pb-\tilde{\mu}_\Pb\|_{\HCal\times \tilde{\HCal}}$ is approximately 
$\sqrt{\frac{\E_{(X,Y)\sim \Pb}[K(X,X)\tilde{K}(Y,Y)]}{n}}$ {(Lemma 1 in Appendix)}. Next we propose a sharper KME estimator.

Suppose that the kernel functions $K$ and $\tilde K$ are continuous and symmetric semi-definite. Let  $\{\Phi_j(x)\}_{j\in J}$ and $ \{\tilde \Phi_j(x)\}_{j\in J}$ be the real-valued feature functions on $\Omega$ such that
$
K(x,y)=\sum_{j\in J} \Phi_j(x)\Phi_j(y)$, and 
$\tilde K(x,y)=\sum_{j\in J} \tilde \Phi_j(x) \tilde \Phi_j(y).$
In practice, if one is given a shift-invariant symmetric kernel function, we can generate finitely many random Fourier features to approximate the kernel \cite{rahimi2008random}. In what follows we assume without loss of generality that $J$ is finite of size $N$.

Let $\Phi(x) = [\Phi_1(x),\ldots,\Phi_N(x)]^T\in\mathbb{R}^N$. We define the ``projection" of $p$ onto the feature space by 
\begin{align}\label{eqn:Q matrix}
\Q =\int p(x,y) \Phi(x) \tilde \Phi(y)^Tdxdy.
\end{align}
Assumption 1 suggests that $\hbox{rank}(\P)\leq r $ (Lemma 2 in Appendix).
Note that the KME of $p(x,y)$ is equivalent to
$\mu_{\Pb}(x,y) =\Phi(x)^T \Q \tilde \Phi(y)
$ 
(Lemma 3 in Appendix). 
The matrix $\Q$ is of finite dimensions, therefore we can estimate it tractably from the trajectory $\{X_t\}$ by
	\begin{align}\label{eqn:Q estimator}
	\hat{\Q} :=\frac{1}{n} \sum_{t=1}^{n}\Phi(X_t) \tilde{\Phi}(X_{t+1})^T.
	\end{align}
Since the unknown $\Q$ is low-rank, we propose to apply singular value truncation to $\hat{\Q}$ for obtaining a better KME estimator. The algorithm is given below:
	
	\begin{algorithm}[H]\small
		\SetAlgoLined
		{\bf{Input:}}{$\{X_1{,}\ldots{,}X_n\}, r$}\;
		Get $\hat{\Q}$ by (\ref{eqn:Q estimator}), compute its SVD: $\hat{\Q}=\hat{\mathbf{U}}\hat{\mathbf{\Sigma}}\hat{\mathbf{V}}$\;
		Let $\tilde{\Q}:=\hat{\mathbf{U}}\hat{\mathbf{\Sigma}}_{[1...r]}\hat{\mathbf{V}}$ be the best rank $r$ approximation of $\hat{\Q}$\;
		Let
		$\hat{\mu}_p(x,y):= \Phi(x)^T\tilde{\Q}\tilde{\Phi}(y)$\; 
		\KwOut{$\hat{\mu}_p(x,y)$}
		\caption{Reshaping the Kernel Mean Embedding.}\label{algo1}
	\end{algorithm}
	We analyze the convergence rate of $\hat{\mu}_\Pb$ to ${\mu}_\Pb$. 
	Let $K_{max}:=\max\{\sup_{x\in\Omega} K(x,x), \sup_{x\in\Omega} \tilde{K}(x,x)\}$. We define following kernel covariance matrices:
	\begin{align*}
	\V_1= \E_{(X,Y)\sim p} [ \Phi(X)\Phi(X)^T\tilde{K}(Y,Y)] 
	,\qquad
	\V_2= \E_{(X,Y)\sim p} [ {K}(X,X) \tilde{\Phi}({Y})\tilde{\Phi}({Y})^T]. 
	\end{align*}
Let  $\bar \lambda:=\max\{\lambda_{\max}(\V_1),\lambda_{\max}(\V_2)\}$. We show the following finite-sample error bound.
\begin{theorem}[KME Reshaping]\label{thm:5}
Let Assumption 1 hold. For any $\delta\in(0,1)$, we have
		\begin{align*}
		\|\mu_{\Pb}-\hat{\mu}_\Pb\|_{\HCal\times\tilde{\HCal}}=\|\Q-\tilde{\Q}\|_F\leq C\sqrt{r}\bigg(\sqrt{\frac{t_{mix}\bar{\lambda}\log (2t_{mix}N/\delta)}{n}} + \frac{t_{mix} K_{\max}\log (2t_{mix}N/\delta)}{3n} \bigg)
		\end{align*}
		with probability at least $1-\delta$, where $C$ is a universal constant. 
\end{theorem}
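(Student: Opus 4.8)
The plan is to split the statement into the norm identity and the spectral estimation bound, then reduce the latter to a matrix concentration inequality for dependent data. The first equality $\|\mu_p - \hat\mu_p\|_{\HCal\times\tilde{\HCal}} = \|\Q - \tilde{\Q}\|_F$ is essentially bookkeeping: by the representation $\mu_p(x,y) = \Phi(x)^T\Q\tilde{\Phi}(y)$ (and likewise $\hat\mu_p(x,y)=\Phi(x)^T\tilde{\Q}\tilde{\Phi}(y)$), both embeddings are finite linear combinations $\sum_{i,j}A_{ij}\Phi_i(x)\tilde{\Phi}_j(y)$ of the product features. Since the $\{\Phi_i\}$ and $\{\tilde{\Phi}_j\}$ act as orthonormal systems of $\HCal$ and $\tilde{\HCal}$, the products $\Phi_i\otimes\tilde{\Phi}_j$ form an orthonormal basis of the product space, so the RKHS norm of such a combination equals the Frobenius norm $\|A\|_F$ of its coefficient matrix. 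Applying this with $A=\Q-\tilde{\Q}$ gives the identity; this is the content of Lemma 3 in the appendix.

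For the bound itself, I would first reduce from the Frobenius norm of $\Q - \tilde{\Q}$ to the spectral norm of $\hat{\Q} - \Q$, which is where the factor $\sqrt{r}$ originates. By Lemma 2, $\Q$ has rank at most $r$, so $\sigma_{r+1}(\Q)=0$, and Weyl's inequality gives $\|\hat{\Q} - \tilde{\Q}\|_{\mathrm{op}} = \sigma_{r+1}(\hat{\Q})\le \sigma_{r+1}(\Q) + \|\hat{\Q}-\Q\|_{\mathrm{op}} = \|\hat{\Q}-\Q\|_{\mathrm{op}}$, since $\tilde{\Q}$ is the best rank-$r$ approximation of $\hat{\Q}$. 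With the triangle inequality this yields $\|\Q - \tilde{\Q}\|_{\mathrm{op}} \le \|\Q-\hat{\Q}\|_{\mathrm{op}} + \|\hat{\Q} - \tilde{\Q}\|_{\mathrm{op}} \le 2\|\Q-\hat{\Q}\|_{\mathrm{op}}$. Because both $\Q$ and $\tilde{\Q}$ have rank at most $r$, their difference has rank at most $2r$, so $\|\Q-\tilde{\Q}\|_F \le \sqrt{2r}\,\|\Q-\tilde{\Q}\|_{\mathrm{op}} \le 2\sqrt{2r}\,\|\Q-\hat{\Q}\|_{\mathrm{op}}$. It remains to bound the spectral-norm estimation error $\|\hat{\Q} - \Q\|_{\mathrm{op}}$.

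Write $\hat{\Q} - \Q = \frac1n\sum_{t=1}^n Z_t$ with centered summands $Z_t := \Phi(X_t)\tilde{\Phi}(X_{t+1})^T - \Q$; at stationarity each $(X_t,X_{t+1})$ has law $p(x,y)$, so $\E Z_t = 0$. Two ingredients feed a matrix Bernstein argument. First, boundedness: $\|\Phi(X_t)\tilde{\Phi}(X_{t+1})^T\|_{\mathrm{op}} = \sqrt{K(X_t,X_t)\,\tilde K(X_{t+1},X_{t+1})} \le K_{\max}$ and $\|\Q\|_{\mathrm{op}}\le K_{\max}$ by Jensen, so $\|Z_t\|_{\mathrm{op}} \le 2K_{\max}$. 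Second, the marginal matrix variance: a direct computation gives $\E[Z_tZ_t^T] = \E[\tilde K(Y,Y)\Phi(X)\Phi(X)^T] - \Q\Q^T = \V_1 - \Q\Q^T \preceq \V_1$ and symmetrically $\E[Z_t^TZ_t] \preceq \V_2$, so both one-sided variances have spectral norm at most $\bar\lambda$. These two identities explain precisely why $\bar\lambda$ and $K_{\max}$ appear in the stated rate.

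The main obstacle, and the step I would treat most carefully, is that the $Z_t$ are not independent: they are a (one-step overlapping) functional of the Markov chain, so the classical i.i.d.\ matrix Bernstein inequality does not apply directly, and this is exactly where the mixing time enters. I would handle it by a blocking/decoupling argument: partition the indices into $\mathcal O(t_{mix})$ interleaved subsequences spaced $\sim t_{mix}$ apart, so that within each subsequence the samples are approximately independent up to a total-variation error controlled by the definition of $t_{mix}$, then apply a rectangular matrix Bernstein inequality (dimension factor $2N$) to each block sum and combine via a union bound. The effective sample size degrades from $n$ to $n/t_{mix}$, inflating the variance proxy to $t_{mix}\bar\lambda/n$ and the bounded term to $t_{mix}K_{\max}/n$, while the union over the $t_{mix}$ blocks together with the $2N$ matrix dimension produces the $\log(2t_{mix}N/\delta)$ factor. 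Assembling the resulting tail bound on $\|\hat{\Q}-\Q\|_{\mathrm{op}}$ with the $2\sqrt{2r}$ factor from the reduction above gives the claimed inequality, with $C$ absorbing the numerical constants. An alternative to explicit blocking is to invoke an off-the-shelf matrix concentration bound for Markov chains, which packages the same $t_{mix}$ dependence.
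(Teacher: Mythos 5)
Your reduction steps coincide exactly with the paper's: the identity $\|\mu_{\Pb}-\hat{\mu}_\Pb\|_{\HCal\times\THCal}=\|\Q-\tilde{\Q}\|_F$ (the paper proves it by writing $\Phi=\W\Phi^{\circ}$, $\tilde{\Phi}=\tilde{\W}\tilde{\Phi}^{\circ}$ for orthonormal bases $\Phi^{\circ},\tilde{\Phi}^{\circ}$ and showing $\W^T\W=\I$, so that $\mathbf{M}\mapsto\Phi(\cdot)^T\mathbf{M}\tilde{\Phi}(\cdot)$ is a Frobenius isometry; note your claim that the features themselves are orthonormal systems is not assumed in Theorem 1, but the isometry gives the same conclusion), then $\mathrm{rank}(\Q-\tilde{\Q})\leq 2r$ yielding the $\sqrt{2r}$ factor, then Weyl's inequality $\|\hat{\Q}-\tilde{\Q}\|=\sigma_{r+1}(\hat{\Q})\leq\|\hat{\Q}-\Q\|$ plus the triangle inequality, giving $\|\Q-\tilde{\Q}\|_F\leq 2\sqrt{2r}\,\|\Q-\hat{\Q}\|$. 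Your bounds $\|Z_t\|\leq 2K_{\max}$ and $\E[Z_tZ_t^T]\preceq\V_1$, $\E[Z_t^TZ_t]\preceq\V_2$ also mirror the computations in the paper's Lemma 4.

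The genuine gap is in how you handle dependence. You space the interleaved subsequences $\sim t_{mix}$ apart and assert that samples within a subsequence are ``approximately independent up to a total-variation error controlled by the definition of $t_{mix}$.'' But that definition only guarantees TV distance at most $\tfrac14$ after $t_{mix}$ steps --- a constant, not a vanishing quantity --- so a coupling argument that swaps the $\sim n/t_{mix}$ samples in a subsequence for independent copies pays a constant failure probability \emph{per sample} and yields nothing; to make the total coupling cost negligible you would need spacing $\tau(\epsilon')$ with $\epsilon'$ of order $\delta t_{mix}/n$, i.e.\ roughly $t_{mix}\log(n/\delta)$, together with a careful sequential coupling construction, neither of which appears in your sketch. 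The paper avoids this issue entirely: within each subsequence it centers each term by its conditional expectation given the previous term, $\check{\Q}_k^{(l)}=\hat{\Q}_{k\alpha+l}-\E[\hat{\Q}_{k\alpha+l}\mid\hat{\Q}_{(k-1)\alpha+l}]$, producing an \emph{exact} matrix martingale to which the Matrix Freedman inequality applies; mixing is then needed only to bound the conditional second moments by $2\bar{\lambda}$ and the conditional bias $\|\E[\hat{\Q}_{k\alpha+l}\mid x_{(k-1)\alpha+l}]-\Q\|\leq\epsilon/2$, for which the modest accuracy $\alpha(\epsilon)=\tau\big(\frac{\epsilon}{2K_{\max}}\wedge\frac{\bar{\lambda}}{K_{\max}^2}\big)+1$ suffices (and grows only logarithmically). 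Your closing suggestion of invoking an off-the-shelf matrix concentration inequality for Markov chains would be a legitimate alternative repair, but as written the blocking-plus-i.i.d.-Bernstein step does not go through.
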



The KME reshaping method and Theorem 1 enjoys the following advantages:
\begin{enumerate} \itemsep0em \vspace{-0.1cm}
\item
{\bf Improved accuracy compared to plain KME.}
The plain KME $\tilde \mu_p $'s estimation error  is approximately 
$\sqrt{\frac{\E_{(X,Y)\sim \Pb}[K(X,X)\tilde{K}(Y,Y)]}{n}}$ (Appendix Lemma 1). 
Note that $\hbox{Tr}(\V_1) = \hbox{Tr}(\V_2)= \E_{(X,Y)\sim p}[K(X,X)\tilde{K}(Y,Y)]$.
When $r\ll N$, we typically have $ r \bar \lambda  \ll \hbox{Tr}(\V_1) = \hbox{Tr}(\V_2)  $, therefore the reshaped KME has a significantly smaller estimation error. 
\item
{\bf Ability to handle dependent data.} Algorithm 1 applies to time series consisting of highly dependent data. The proof of Theorem 1 handles dependency by constructing a special matrix martingale and using the mixing properties of the Markov process to analyze its concentration.
\item {\bf Tractable implementation.} Kernel-based methods usually require memorizing all the data and may be intractable in practice. Our approach is based on a finite number of features and only needs to low-dimensional computation. As pointed out by \cite{rahimi2008random}, one can approximate any shift-invariant kernel function using $N$ features where $N$ is linear with respect to the input dimension $d$. Therefore Algorithm 1 can be approximately implemented in $O(nd^2)$ time and $O(d^2)$ space.
\end{enumerate}\vspace{-.3cm}

\section{Embedding States into Euclidean Space} 

%

	In this section we want to learn low-dimensional representations of the state space $\Omega$ to capture the transition dynamics. 
	We need following extra assumption that $p$ can be fully represented in the kernel space.
\begin{assumption} The transition kernel belongs to the product Hilbert space, i.e., $p(\cdot\mid \cdot) \in  {\HCal} \times \tilde{\HCal}.$
\end{assumption}
For two arbitrary states $x,y\in\Omega$, we consider their distance given by 
\begin{align}\label{eqn:diffusion dist1}
dist(x,y) &:= \|p(\cdot|x)-p(\cdot|y)\|_{L^2} =\bigg(\int \big(p(z|x)-p(z|y)\big)^2dz\bigg)^{1/2}.
\end{align}
Eq.\ \eqref{eqn:diffusion dist1} is known as the diffusion distance \cite{nadler2006diffusion}. It measures the similarity between future paths of two states.
We are motivated by the diffusion map approach for dimension reduction \cite{lafon2006diffusion, coifman2008diffusion,klus2018eigen}. Diffusion map refers to the leading eigenfunctions of the transfer operator of a reversible dynamical system. We will generalize it to nonreversible processes and feature spaces.

For simplicity of presentation, we assume without loss of generality that $\{\Phi_i \}_{i=1}^N$ and $\{\tilde{\Phi}_i\}_{i=1}^N$ are $L_2(\pi)$ and $L_2$ orthogonal bases of $\mathcal{H}$ and $\mathcal{\tilde H}$ respectively, with squared norms $\rho_1\geq\cdots\geq \rho_N$ and $\tilde \rho_1\geq\cdots \geq \tilde \rho_N$ respectively. Any given features can be orthogonalized to satisfy this condition. 
In particular, 
let the matrix $\C:=diag[\rho_1,\cdots,\rho_N]$, $\tilde{\C}:=diag[\tilde{\rho}_1,\cdots,\tilde{\rho}_N]$, it is easy to verify that
$p(y|x)=\Phi(x)^T\C^{-1}\P\tilde{\C}^{-1}\tilde{\Phi}(y)$.
Let $\C^{-1/2}\P\tilde{\C}^{-1/2} = \U^{(\rho)}\BSigma_{[1\cdots r]}^{(\rho)}\V^{(\rho)}$ be its SVD. We define the state embedding as 
\begin{align*}
&\Psif(x):=\bigg(\Phi(x)^T \C^{-1/2} \U^{(\rho)}\BSigma_{[1\cdots r]}^{(\rho)}\bigg)^T.  
\end{align*}	
	It is straightforward to verify that $
	dist(x,z)=\|\Psif(x)-\Psif(z)\|$. We propose to estimate $\Psif$ in Algorithm \ref{algo2}.

	\begin{algorithm}[H]\small
		\SetAlgoLined
		{\bf Input:}{$\{X_1,\ldots,X_n\}, r$}\;
		Get $\hat{\P}$ from (\ref{eqn:Q estimator}), compute SVD  $\hat{\U}^{(\rho)}\hat{\BSigma}^{(\rho)}\hat{\V}^{(\rho)}=\C^{-1/2}\hat{\P}\tilde{\C}^{-1/2}$\;
        Compute state embedding using first $r$ singular pairs $\hat{\Psif}(x) = \bigg(\Phi(x)^T \C^{-1/2} \hat{\U}^{(\rho)}\hat{\BSigma}_{[1\cdots r]}^{(\rho)}\bigg)^T$\;
		\KwOut{$x\mapsto \hat{\Psif}(x)  $ }
		\caption{Learning State Embedding}\label{algo2}
	\end{algorithm}
Let $\widehat{dist}(x,z):=\|\hat \Psif(x)-\hat\Psif(z)\|.$
	We show that the estimated state embeddings preserve the diffusion distance with an additive distortion. 
	\begin{theorem}[Maximum additive distortion of state embeddings]
	Let Assumptions 1,2 hold.  Let $L_{max}:=\sup_{x\in\Omega} \Phi(x)^T\C^{-1}\Phi(x)$ and let $\kappa$ be the condition number of $\sqrt{\pi(x)}p(y|x)$. For any $0<\delta<1$ and for all $x,z\in\Omega$, $|dist(x,z)-\widehat{dist}(x,z)|$ is upper bounded by:
		\begin{align*}
		{C}\sqrt{\frac{ L_{\max}}{\rho_N\tilde{\rho}_N}}\bigg[\sqrt{2}\kappa+1\bigg]\bigg(\sqrt{\frac{t_{mix}\bar{\lambda}\log (2t_{mix}N/\delta)}{n}} + \frac{t_{mix} K_{\max}\log (2t_{mix}N/\delta)}{3n}\bigg)
		\end{align*}
		with probability at least $1-\delta$ for some constant $C$.
	\end{theorem}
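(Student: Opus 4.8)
The plan is to reduce the distortion to the spectral-norm perturbation of a single matrix and then inherit the concentration already built for Theorem~1. Write $M:=\C^{-1/2}\P\tilde{\C}^{-1/2}$ and $\hat M:=\C^{-1/2}\hat{\P}\tilde{\C}^{-1/2}$, with SVDs $M=\U^{(\rho)}\BSigma^{(\rho)}(\V^{(\rho)})^T$ and $\hat M=\hat{\U}^{(\rho)}\hat{\BSigma}^{(\rho)}(\hat{\V}^{(\rho)})^T$, and let $\hat M_r$ be the rank-$r$ truncation of $\hat M$. For a fixed pair $x,z$ set $\xi:=\C^{-1/2}(\Phi(x)-\Phi(z))$. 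From the definitions of the embeddings one has $\Psif(x)-\Psif(z)=\BSigma^{(\rho)}_{[1\cdots r]}(\U^{(\rho)})^T\xi$ and $\hat{\Psif}(x)-\hat{\Psif}(z)=\hat{\BSigma}^{(\rho)}_{[1\cdots r]}(\hat{\U}^{(\rho)})^T\xi$, so that $dist(x,z)=\|M^T\xi\|$ (using Lemma~2, $\mathrm{rank}(\P)\le r$, so the singular values of $M$ past the $r$-th vanish and $M$ acts through its top-$r$ part) and $\widehat{dist}(x,z)=\|\hat M_r^T\xi\|$. The reverse triangle inequality then gives the clean bound $|dist(x,z)-\widehat{dist}(x,z)|\le\|(M-\hat M_r)^T\xi\|\le\|M-\hat M_r\|_{\mathrm{op}}\,\|\xi\|$, which decouples the geometric factor $\|\xi\|$ from the statistical factor $\|M-\hat M_r\|_{\mathrm{op}}$.

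Next I would control each factor. For the geometric factor, $\|\xi\|\le\|\C^{-1/2}\Phi(x)\|+\|\C^{-1/2}\Phi(z)\|\le 2\sqrt{L_{\max}}$ uniformly in $x,z$, directly by definition of $L_{\max}=\sup_x\Phi(x)^T\C^{-1}\Phi(x)$; this uniformity is exactly what lets the final bound hold simultaneously for all $x,z$. For the statistical factor I would peel off truncation and preconditioning separately. Since $\sigma_{r+1}(M)=0$ (Lemma~2), Weyl's inequality gives $\hat\sigma_{r+1}(\hat M)=|\hat\sigma_{r+1}(\hat M)-\sigma_{r+1}(M)|\le\|M-\hat M\|_{\mathrm{op}}$, hence $\|M-\hat M_r\|_{\mathrm{op}}\le\|M-\hat M\|_{\mathrm{op}}+\|\hat M-\hat M_r\|_{\mathrm{op}}\le 2\|M-\hat M\|_{\mathrm{op}}$. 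Factoring the diagonal preconditioners, $\|M-\hat M\|_{\mathrm{op}}=\|\C^{-1/2}(\P-\hat{\P})\tilde{\C}^{-1/2}\|_{\mathrm{op}}\le\|\C^{-1/2}\|_{\mathrm{op}}\|\tilde{\C}^{-1/2}\|_{\mathrm{op}}\|\P-\hat{\P}\|_{\mathrm{op}}=(\rho_N\tilde\rho_N)^{-1/2}\|\P-\hat{\P}\|_{\mathrm{op}}$, since $\C,\tilde{\C}$ are diagonal with smallest entries $\rho_N,\tilde\rho_N$. Multiplying the three pieces yields $|dist-\widehat{dist}|\lesssim\sqrt{L_{\max}/(\rho_N\tilde\rho_N)}\,\|\P-\hat{\P}\|_{\mathrm{op}}$.

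It remains to bound $\|\P-\hat{\P}\|_{\mathrm{op}}$ with probability $1-\delta$, and here I would reuse the matrix-martingale argument behind Theorem~1 verbatim: the same Bernstein-type concentration for the sum $\hat{\P}=\frac1n\sum_t\Phi(X_t)\tilde\Phi(X_{t+1})^T$ over the mixing chain yields $\|\P-\hat{\P}\|_{\mathrm{op}}\le C'\big(\sqrt{t_{mix}\bar{\lambda}\log(2t_{mix}N/\delta)/n}+t_{mix}K_{\max}\log(2t_{mix}N/\delta)/(3n)\big)$, i.e.\ exactly the parenthetical factor $\epsilon_n$ that appears in both theorems (the extra $\sqrt r$ of Theorem~1 came only from converting spectral to Frobenius norm, and is absent here because we act on a single vector $\xi$). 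Combining the three displays proves the claim. The prefactor $[\sqrt{2}\kappa+1]$ in the stated bound is what one obtains on the alternative route that bounds the perturbation of the embedding \emph{map} rather than the images $M^T\xi,\hat M_r^T\xi$: there the singular-value perturbation contributes the ``$+1$'' and the singular-subspace rotation, controlled by Wedin's theorem with gap $\sigma_r$ and amplified by $\sigma_1$, contributes a term of order $\kappa=\sigma_1/\sigma_r$; the image-based route above instead absorbs both into a universal constant.

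The main obstacle is not in Theorem~2 proper — granted Theorem~1, the remaining steps are standard perturbation theory — but in handling the truncation consistently: Algorithm~2 truncates the \emph{preconditioned} matrix $\hat M=\C^{-1/2}\hat{\P}\tilde{\C}^{-1/2}$ to rank $r$, which is not the same object as preconditioning the rank-$r$ truncation of $\hat{\P}$ used in Algorithm~1, so one cannot simply quote the Frobenius bound of Theorem~1 and must instead route through the spectral-norm concentration of $\|\P-\hat{\P}\|_{\mathrm{op}}$ as above. The genuinely hard probabilistic content — controlling $\|\P-\hat{\P}\|_{\mathrm{op}}$ for dependent, non-reversible Markov data — is inherited from Theorem~1's martingale construction and mixing-time argument.
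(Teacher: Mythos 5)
Your proof is correct, and it takes a genuinely different route from the paper's --- one that is in fact sharper. The paper compares the embedding vectors themselves: since $\Psif$ and $\hat{\Psif}$ are defined only up to the rotational ambiguity of the SVD, it introduces an aligning orthogonal matrix $\O$, applies the triangle inequality, and bounds $\|\U^{(\rho)}\BSigma^{(\rho)}_{[1\cdots r]}\O^T-\hat{\U}^{(\rho)}\hat{\BSigma}^{(\rho)}_{[1\cdots r]}\|$ by splitting it into a singular-subspace rotation term --- controlled by Wedin's $\sin\Theta$ theorem with gap $\sigma_r(\R)$ and amplified by $\|\R\|$, which is exactly where the factor $\sqrt{2}\kappa$ enters --- plus a truncation term handled by Weyl's inequality. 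You instead observe that $dist(x,z)=\|\R^T\xi\|$ and $\widehat{dist}(x,z)=\|\tilde{\R}^T\xi\|$ with $\xi=\C^{-1/2}(\Phi(x)-\Phi(z))$ are rotation-invariant, so the reverse triangle inequality eliminates the alignment problem entirely; the only perturbation input you need is the same Weyl step the paper uses for the truncation, $\|\R-\tilde{\R}\|\leq 2\|\R-\hat{\R}\|$, and both arguments then invoke the identical matrix-martingale concentration bound for $\|\P-\hat{\P}\|$. The payoff of your route is the bound $4\sqrt{L_{\max}/(\rho_N\tilde{\rho}_N)}\,\|\P-\hat{\P}\|$ with no condition-number factor, which implies the stated theorem (since $\sqrt{2}\kappa+1\geq 1$) and strictly improves on it when $\kappa$ is large. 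What the paper's map-level analysis buys instead is control of $\|\U^{(\rho)}\BSigma^{(\rho)}_{[1\cdots r]}\O^T-\hat{\U}^{(\rho)}\hat{\BSigma}^{(\rho)}_{[1\cdots r]}\|$ itself up to rotation, which is the quantity the downstream clustering analysis (Theorem 4 and its key lemma) actually consumes and which cannot be recovered from your image-level bound; for the distortion statement alone, however, your argument is cleaner and stronger. Your closing observation --- that Algorithm 2 truncates the preconditioned matrix $\hat{\R}=\C^{-1/2}\hat{\P}\tilde{\C}^{-1/2}$ rather than $\hat{\P}$, so Theorem 1's Frobenius bound cannot simply be quoted --- is also accurate and matches how the paper itself routes through the spectral-norm concentration.
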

	

Under Assumption 2, we can recover the full transition kernel from data by
\begin{align*}\hat p(y|x)=\Phi(x)^T\C^{-1/2}\hat{\U}^{(\rho)}\hat{\BSigma}_{[1\cdots r]}^{(\rho)} (\hat{\V}^{(\rho)})^T  \tilde{\C}^{-1/2}\tilde{\Phi}(y).
\end{align*}
\begin{theorem}[Recovering the transition density]\label{thm:p l2 conv} 
Let Assumptions 1,2 hold.
For any $ \delta\in(0,1)$, 
		\begin{align*}\label{eqn: conv rate1}
		\|p(\cdot|\cdot)-\hat{p}(\cdot|\cdot)\|_{L^2(\pi)\times L^2}
		&\leq C\sqrt{\frac{r}{\rho_N\tilde{\rho}_N}}\bigg(\sqrt{\frac{t_{mix}\bar\lambda\log (2t_{mix}N/\delta)}{n}} + \frac{t_{mix} K_{\max}\log (2t_{mix}N/\delta)}{3n}\bigg) 
		\end{align*} 
		with probability at least $1-\delta$ for some constant $C$.
\end{theorem}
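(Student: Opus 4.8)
The plan is to reduce the functional error $\|p-\hat p\|_{L^2(\pi)\times L^2}$ to a matrix Frobenius norm — which, thanks to the orthogonality normalization of the features, is an \emph{exact isometry} — and then to control that Frobenius norm by the very operator-norm concentration that already underlies Theorem~\ref{thm:5}.

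\textbf{Step 1 (reduction to a matrix norm).} Writing $M:=\C^{-1/2}\P\tilde{\C}^{-1/2}$ and $\hat M:=\hat{\U}^{(\rho)}\hat{\BSigma}^{(\rho)}_{[1\cdots r]}(\hat{\V}^{(\rho)})^T$, the two defining formulas give $p(y|x)=\Phi(x)^T\C^{-1/2}M\tilde{\C}^{-1/2}\tilde{\Phi}(y)$ and $\hat p(y|x)=\Phi(x)^T\C^{-1/2}\hat M\tilde{\C}^{-1/2}\tilde{\Phi}(y)$, so that
\begin{align*}
p(y|x)-\hat p(y|x)=a(x)^T(M-\hat M)\,b(y),\qquad a(x):=\C^{-1/2}\Phi(x),\ \ b(y):=\tilde{\C}^{-1/2}\tilde{\Phi}(y).
\end{align*}
Because $\{\Phi_i\}$ and $\{\tilde{\Phi}_i\}$ are $L^2(\pi)$- and $L^2$-orthogonal with squared norms $\rho_i,\tilde\rho_i$, the rescaled coordinates satisfy $\int\pi(x)a_i(x)a_k(x)\,dx=\delta_{ik}$ and $\int b_j(y)b_l(y)\,dy=\delta_{jl}$; i.e.\ $a$ is orthonormal in $L^2(\pi)$ and $b$ in $L^2$. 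Integrating over $y$ first and then $x$, this orthonormality collapses all cross terms and yields the identity $\|p-\hat p\|_{L^2(\pi)\times L^2}=\|M-\hat M\|_F$.

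\textbf{Step 2 (best-rank-$r$ perturbation).} By Assumption~1 and Lemma~2, $\P$ — hence $M$ — has rank at most $r$, while $\hat M$ is the best rank-$r$ approximation of $\hat A:=\C^{-1/2}\hat\P\tilde{\C}^{-1/2}$. The key move is to pass through the operator norm rather than bounding $\|M-\hat M\|_F$ via $\|\P-\hat\P\|_F$, which would be vacuous since $\hat\P$ is full rank and swamped by tail noise. As $M-\hat M$ has rank $\le 2r$, I have $\|M-\hat M\|_F\le\sqrt{2r}\,\|M-\hat M\|_{op}$. Weyl's inequality gives $\sigma_{r+1}(\hat A)\le\sigma_{r+1}(M)+\|\hat A-M\|_{op}=\|\hat A-M\|_{op}$, hence $\|\hat A-\hat M\|_{op}=\sigma_{r+1}(\hat A)\le\|\hat A-M\|_{op}$, and the triangle inequality yields $\|M-\hat M\|_{op}\le 2\|\hat A-M\|_{op}$. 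Finally, $\|\hat A-M\|_{op}=\|\C^{-1/2}(\hat\P-\P)\tilde{\C}^{-1/2}\|_{op}\le(\rho_N\tilde\rho_N)^{-1/2}\|\hat\P-\P\|_{op}$ using $\|\C^{-1/2}\|_{op}=\rho_N^{-1/2}$ and $\|\tilde{\C}^{-1/2}\|_{op}=\tilde\rho_N^{-1/2}$. Combining, $\|p-\hat p\|_{L^2(\pi)\times L^2}\le \tfrac{2\sqrt{2r}}{\sqrt{\rho_N\tilde\rho_N}}\,\|\hat\P-\P\|_{op}$.

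\textbf{Step 3 (concentration).} It remains to bound $\|\hat\P-\P\|_{op}$, which is exactly the quantity controlled inside the proof of Theorem~\ref{thm:5}: the matrix martingale built from $\Phi(X_t)\tilde{\Phi}(X_{t+1})^T-\E[\cdot]$, analyzed by a Freedman-type matrix Bernstein bound through the mixing time, delivers
\begin{align*}
\|\hat\P-\P\|_{op}\le C'\Big(\sqrt{\tfrac{t_{mix}\bar\lambda\log(2t_{mix}N/\delta)}{n}}+\tfrac{t_{mix}K_{\max}\log(2t_{mix}N/\delta)}{3n}\Big)
\end{align*}
with probability at least $1-\delta$ (the $\sqrt r$ prefactor of Theorem~\ref{thm:5} comes from the same $\|\cdot\|_F\le\sqrt{2r}\|\cdot\|_{op}$ conversion performed here). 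Substituting into Step 2 gives the claim with $C=2\sqrt2\,C'$. The genuinely delicate point is Step 2 — exploiting that both $M$ and $\hat M$ live on a $2r$-dimensional subspace so one can convert to operator norm \emph{before} invoking concentration; the exact isometry of Step 1 is what makes the $L^2(\pi)\times L^2$ geometry coincide with the matrix Frobenius geometry, so that (unlike Theorem~2) no condition-number factor $\kappa$ is incurred.
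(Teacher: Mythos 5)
Your proposal is correct and takes essentially the same route as the paper's own proof: the feature-orthonormality isometry reducing $\|p-\hat p\|_{L^2(\pi)\times L^2}$ to $\|\R-\tilde{\R}\|_F$ (with $\R=\C^{-1/2}\P\tilde{\C}^{-1/2}$ and $\tilde{\R}$ the rank-$r$ truncation of $\hat{\R}$), the rank-$2r$ Frobenius-to-operator conversion, the Weyl-inequality step giving $\|\R-\tilde{\R}\|\leq 2\|\R-\hat{\R}\|$, the factor $(\rho_N\tilde{\rho}_N)^{-1/2}$ from the diagonal scaling, and finally the matrix-martingale concentration bound on $\|\P-\hat{\P}\|$ (Lemma 4 in the appendix). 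The only cosmetic difference is your $\sqrt{2r}$ versus the paper's $\sqrt{r}$ in the rank-based conversion, which is absorbed into the constant $C$.
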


Theorems 2,3 provide the first statistical guarantee for learning state embeddings and recovering the transition density for continuous-state low-rank Markov processes. The state embedding learned by Algorithm 2 can be represented in $O(Nr)$ space since $\Phi$ is priorly known. When $\Omega$ is finite and the feature map is identity, Theorem \ref{thm:p l2 conv}  nearly matches the the information-theoretical error lower bound given by \cite{li2018estimation}.



\section{Clustering States Using Diffusion Distances}
\label{Clustering}

We want to find a partition of the state space into $m$ disjoint sets $\Omega_1\cdots \Omega_m$.  The principle is if $x,y\in \Omega_i$ for some $i$, then  $p(\cdot|x)\approx p(\cdot|y)$, meaning that states within the same set share similar future paths. 
This motivates us to study the following optimization problem, which has been considered in studies for dynamical systems \cite{schutte2013msm},
		\begin{equation}\label{kmeans}
		 \min_{\Omega_1,\cdots, \Omega_m}\min_{q_1 \in\THCal,\cdots, q_m\in\THCal} \sum_{i=1}^m\int_{\Omega_i}\pi(x)\|p(\cdot|x)-q_i(\cdot)\|_{L^2}^2 dx,
		\end{equation}
		We assume without loss of generality that it admits a unique optimal solution, which we denote by $(\Omega^*_1,\ldots,\Omega^*_m)$ and $(q^*_1,\ldots, q^*_m) $.  Under Assumption 2, each $q^*_i(\cdot)$ is a probability distribution and can be represented by right singular functions $\{v_k(\cdot)\}_{k=1}^r$ of $p(\cdot|\cdot)$ (Lemma 7 in Appendix). We propose the following state clustering method:


%


	\begin{algorithm}[H]\small
		\SetAlgoLined
		\KwData{$\{X_1,\ldots,X_n, r, m\}$}
		Use Alg.\ \ref{algo2} to get state embedding $\hat{\Psif}:\Omega\mapsto\mathbb{R}^r$\;
		Solve k-means problem:
	\begin{align*}
				 \min_{\Omega_1,\cdots, \Omega_m}\min_{s_1 \cdots, s_m \in \mathbbm{R}^r} \sum_{i=1}^m\int_{\Omega_i}\pi(x)\|\hat{\Psif}(x)-s_i\|^2 dx;
	\end{align*}\\
		\KwOut{$\hat{\Omega}^*_1\cdots\hat{\Omega}^*_m$}
		\caption{Learning metastable state clusters}\label{algo3}
	\end{algorithm}

 The k-means method uses the invariant measure $\pi$ as a weight function. In practice if $\pi$ is unknown, one can pick any reasonable measure and the theoretical bound can be adapted to that measure. 
 
 We analyze the performance of the state clustering method on finite data. 
Define the misclassification rate as
\begin{align*}
M(\hat{\Omega}^*_1,\cdots,\hat{\Omega}^*_m)
	:=\min_{\sigma}\sum_{j=1}^m \frac{\pi(\{x:x\in\Omega^*_j, i\notin \hat{\Omega}^*_{\sigma(j)}\})}{\pi(\Omega^*_j)},
\end{align*}
where $\sigma$ is taken over all possible permutations over $\{1,\ldots,m\}$. The misclassification rate is always between $0$ and $m$.
We let $\Delta_1^2:=\min_k\min_{l\neq k}  \pi(\Omega^*_k) \|q^*_l-q^*_k\|_{L^2}^2$
	and let $\Delta_2^2$ be the minimal value of \eqref{kmeans}.
	\begin{theorem}[Misclassification error bound for state clustering]
	Let Assumptions 1,2 hold.
	Let $\kappa$ be the condition number of $\sqrt{\pi(x)}p(y|x)$. If $\Delta_1>4\Delta_2$, then for any $0<\delta<1$ and $\epsilon > 0$, by letting 
	\begin{align*}
	n=\Theta\bigg(\frac{\kappa^2r\bar{\lambda} t_{mix}\log(2t_{mix}N/\delta)}{\rho_N\tilde{\rho}_N} \cdot \max\bigg\{ \frac{1}{(\Delta_1-4\Delta_2)^2}, \frac{1}{\epsilon \Delta_1^2}, \frac{\Delta_2^2}{\epsilon^2\Delta_1^4} \bigg\} \bigg),
	\end{align*}
we have
	$M(\hat{\Omega}^*_1,\cdots,\hat{\Omega}^*_m) \leq \frac{16\Delta_2^2}{\Delta_1^2} + \epsilon$ with probability at least $1-\delta$.
\end{theorem}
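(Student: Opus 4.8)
The plan is to reduce the bound to a deterministic $\pi$-weighted $k$-means perturbation argument fed by the embedding error controlled in Theorems 1--3. First I would use the exact isometry stated before Algorithm 2: since $dist(x,z)=\|\Psif(x)-\Psif(z)\|$, problem \eqref{kmeans} is equivalent to the $\pi$-weighted $k$-means problem on the true embedding $\Psif$. Hence the optimal partition $(\Omega_1^*,\ldots,\Omega_m^*)$ is shared by the two problems, the optimal centers $s_i^\star:=\pi(\Omega_i^*)^{-1}\int_{\Omega_i^*}\pi(x)\Psif(x)\,dx$ are the Euclidean images of $q_i^*$ (so that $\|q_l^*-q_k^*\|_{L^2}=\|s_l^\star-s_k^\star\|$), and the optimal value equals $\Delta_2^2$. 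I may therefore work entirely in $\mathbb{R}^r$, with separation $\Delta_1^2=\min_k\min_{l\neq k}\pi(\Omega_k^*)\|s_l^\star-s_k^\star\|^2$.

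Next I would convert the spectral error into an integrated embedding error. Writing $\hat M:=\C^{-1/2}\hat\P\tilde\C^{-1/2}$ and $M:=\C^{-1/2}\P\tilde\C^{-1/2}$, a Wedin/Davis--Kahan bound produces an orthogonal $O\in\mathbb{R}^{r\times r}$ with $\|\hat\U^{(\rho)}\hat\BSigma^{(\rho)}_{[1\cdots r]}-\U^{(\rho)}\BSigma^{(\rho)}_{[1\cdots r]}O\|$ controlled by $\|\hat M-M\|$ divided by the spectral gap $\sigma_r$ and multiplied by $\sigma_1$; this is exactly where the condition number $\kappa$ of $\sqrt{\pi(x)}p(y|x)$ enters. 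Combining this with $L_{\max}=\sup_x\Phi(x)^T\C^{-1}\Phi(x)$ and the matrix concentration bound underlying Theorem 1 for $\|\hat\P-\P\|$, I obtain, with probability $1-\delta$, that $D:=\big(\int\pi(x)\|\hat\Psif(x)-O\,\Psif(x)\|^2dx\big)^{1/2}$ satisfies $D^2\le \tfrac{C\kappa^2 r\bar\lambda\, t_{mix}\log(2t_{mix}N/\delta)}{\rho_N\tilde\rho_N\, n}$ (the higher-order $1/n^2$ term being absorbed). Since $k$-means is invariant under the global rotation $O$, I may compare $O^{-1}\hat\Psif$ directly against $\Psif$.

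The heart of the argument is then purely deterministic. Comparing the estimated partition against the feasible competitor $(\Omega_1^*,\ldots,\Omega_m^*)$ by optimality, and using the triangle inequality between $\Psif$ and $\hat\Psif$, I would show that the $\Psif$-distortion of the estimated clustering is at most of order $(\Delta_2+D)^2$, hence close to $\Delta_2^2$. The separation hypothesis $\Delta_1>4\Delta_2$ then forces a well-defined matching permutation $\sigma$ under which every estimated center lies within a controlled distance of the corresponding true center $s^\star$. Finally, any state $x\in\Omega_j^*$ assigned to a wrong cluster must have $\|\Psif(x)-s_j^\star\|$ at least a constant fraction of the inter-center gap; integrating these contributions against $\pi$ and dividing by $\Delta_1^2$ yields $M(\hat\Omega_1^*,\ldots,\hat\Omega_m^*)\le \tfrac{16\Delta_2^2}{\Delta_1^2}+\tfrac{C'D^2+C''\Delta_2 D}{\Delta_1^2}$. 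The stated choice of $n$ makes the residual at most $\epsilon$: its three regimes in the $\max$ arise from requiring $D$ of order at most $\Delta_1-4\Delta_2$ to validate the permutation, $D^2$ of order at most $\epsilon\Delta_1^2$ to kill the pure-noise term, and $\Delta_2 D$ of order at most $\epsilon\Delta_1^2$ (equivalently $D^2$ at most $\epsilon^2\Delta_1^4/\Delta_2^2$) to kill the cross term.

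I expect the main obstacle to be the deterministic $k$-means step in the continuous, $\pi$-weighted setting. Unlike the finite block-model analyses on which this argument is modeled, the misclassified set is a measurable subset of $\Omega$ rather than a finite collection of points, so the combinatorial estimates that bound how centers can collapse and how far a misclassified point lies from its true center must be recast as statements about $\pi$-measure, and the constant $16$ together with the precise role of $\Delta_1>4\Delta_2$ must be tracked carefully through these substitutions. A secondary difficulty is ensuring that the single Davis--Kahan alignment $O$ is simultaneously valid for the integrated bound $D^2$ and compatible with the rotation invariance exploited in the $k$-means comparison.
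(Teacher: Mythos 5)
Your plan follows essentially the same route as the paper's proof: reduction to a $\pi$-weighted $k$-means problem in $\mathbb{R}^r$ via the isometry, a Wedin-plus-matrix-concentration bound on the integrated embedding error (exactly where $\kappa$ enters), and a Lei--Rinaldo-style deterministic $k$-means perturbation argument recast in terms of $\pi$-measure, with the three regimes of the $\max$ playing precisely the roles you describe. The only superfluous ingredient is $L_{\max}$: the paper shows the integrated error $D^2$ equals $\|\U^{(\rho)}\BSigma^{(\rho)}_{[1\cdots r]}\O-\hat{\U}^{(\rho)}\hat{\BSigma}^{(\rho)}_{[1\cdots r]}\|_F^2$ exactly, by orthonormality of $\C^{-1/2}\Phi$ in $L^2(\pi)$, so no sup-norm quantity is needed for this theorem.
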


The full proof is given in Appendix. The condition $\Delta_1>4\Delta_2$ is a separability condition needed for finding the correct clusters with high probability, and $\frac{16\Delta_2^2}{\Delta_1^2}$ is non-vanishing misclassification error.
In the case of reversible finite-state Markov process, the clustering problem is equivalent to finding the {\it optimal metastable $m$-full partition} given by
$
\hbox{argmax}_{\Omega_1,\cdots,\Omega_m} \sum_{k=1}^m p(\Omega_k|\Omega_k),$	
where  $p(\Omega_j|\Omega_i):=\frac{1}{\pi(\Omega_i)}\int_{x\in \Omega_i, y\in \Omega_j}\pi(x)p(y|x)dydx$ \cite{e2008optimal, schutte2013msm}. The optimal partition $(\Omega^*_1,\cdots,\Omega^*_m)$ gives metastable sets that can be used to construct a reduced-order Markov state model \cite{schutte2013msm}. 
In the more general case of nonreversible Markov chains, the proposed method will cluster states together if they share similar future paths. It provides an unsupervised learning method for state aggregation, which is a widely used heuristic for dimension reduction of control and reinforcement learning \cite{bertsekas1996neuro,singh1995reinforcement}. 

\section{Experiments}


\subsection{Stochastic Diffusion Processes}

We test the proposed approach on simulated diffusion processes of the form
$ dX_t = -\nabla V(X_t) dt + \sqrt{2} d B_t, X_t \in \mathbb{R}^d $,
where $V(\cdot)$ is a potential function and $\{ B_t \}_{t \geq 0}$ is the standard Brownian motion. For any interval $\tau > 0$, the discrete-time trajectory $\{X_{k\tau}\}_{k=1}^{\infty}$ is a Markov process. We apply the Euler method to generate sample path $\{X_{k\tau}\}_{k=1}^{n}$ according to the stochastic differential equation. 
We use the Gaussian kernels $ K(x,y) = \tilde{K}(x,y) = \frac{1}{(2\pi\sigma^2)^{d/2}} e^{-\frac{\|x-y\|_2^2}{2\sigma^2}}$ where $\sigma >0$, and 
construct RKHS $\mathcal{H} = \tilde{\mathcal{H}}$ from $L^2(\pi)$. 
To get the features ${\Phi}$, we generate 2000 random Fourier features ${\bf h} = [h_1, h_2, \ldots, h_{N}]^{\top}$ such that $ K(x,y) \approx \sum_{i=1}^{N} h_i(x)h_i(y) $  (\cite{rahimi2008random}),
and then orthogonalize ${\bf h}$ to get ${\Phi}$.

\begin{wrapfigure}{r}{0.4\linewidth}
	\label{Figure1}
	\centering 
	\includegraphics[width=0.85\linewidth]{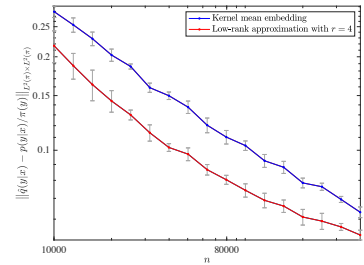}
	\caption{{\small {\bf Reshaped KME versus plain KME.} The error curve approximately satisfies a convergence rate of $n^{-1/2}$.}}
	\vspace{-0.4cm}
\end{wrapfigure}

\noindent {\bf Comparison between reshaped and plain KME} \vspace{-0.05cm}

We apply Algorithm 1 to find the reshaped KME $\hat{\mu}_{p}$ and compare its error with the plain KME $\tilde{\mu}_{p}$ given by \eqref{eq-kme}. 
The experiment is performed on a four-well diffusion on $\mathbb{R}$, and we take rank $r=4$. By orthogonalizing $N=2000$ random Fourier features, we obtain $J = 82$ basis functions. Figure \ref{Figure1} shows that the reshaped KME consistently outperforms plain KME with varying sample sizes. 







\noindent{\bf State clustering to reveal metastable structures} \vspace{-0.05cm}

We apply the state clustering method to analyze metastable structures of a diffusion process whose potential function $V(x)$ is given by Figure \ref{Figure2} (a). We generate trajectories of length $n = 10^6$ and take the time interval to be $\tau = 0.1, 1, 5$ and $10$. We conduct the state embedding and clustering procedures with rank $r=4$.
Figure \ref{Figure2} (c) shows the clustering results for $\tau = 1$ with a varying number of clusters. The partition results reliably reveal metastable sets, which are also known as invariance sets that characterize slow dynamics of this process.
Figure \ref{Figure2} (d) shows the four-cluster results with varying values of $\tau$, where the contours are based on diffusion distances to the centroid in each cluster.
One can see that the diffusion distance contours are dense when $\tau$ takes small values. This is because, when $\tau$ is small, the state embedding method largely captures fast local dynamics.  
By taking $\tau$ to be larger values, the state embedding method begins to capture slower dynamics, which corresponds to low-frequency transitions among the leading metastable sets. 


\begin{figure}[t] 
	\label{Figure2}
	\centering
	\begin{minipage}{0.01\linewidth}
		\centering
		(a) \\ ~ \\
	\end{minipage}
	\hspace{0.05cm}
	\begin{minipage}{0.19\linewidth}
		\centering
		\includegraphics[width = \linewidth]{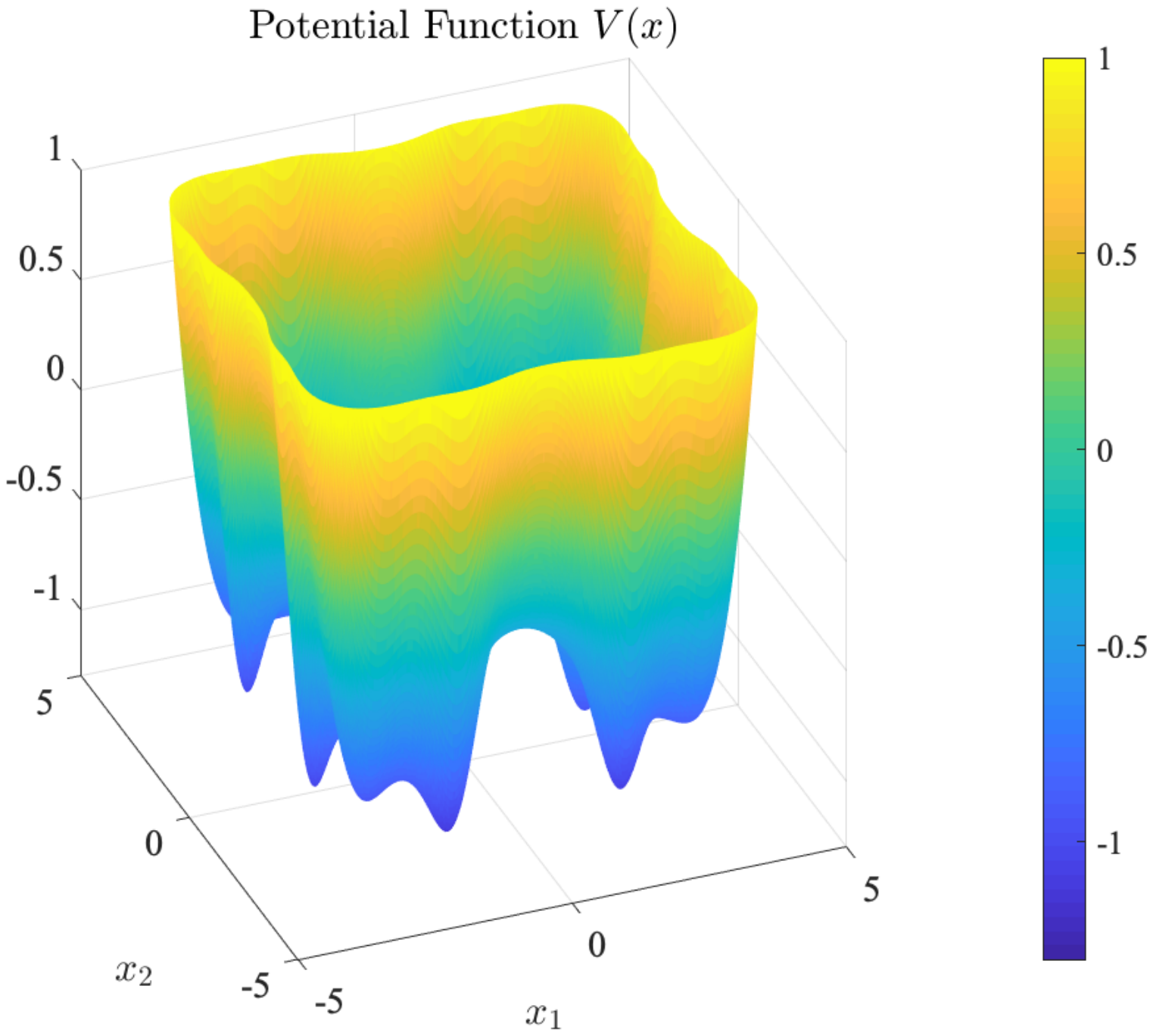}\\
		\vspace{-0.22cm}{\small $V(x)$}
	\end{minipage}
	\hspace{0.01cm}
	\begin{minipage}{0.01\linewidth}
		\centering
		(c) \\ ~ \\
	\end{minipage}
	\hspace{0.03cm}
	\begin{minipage}{0.18\linewidth}
		\centering
		\includegraphics[width = 0.95\linewidth]{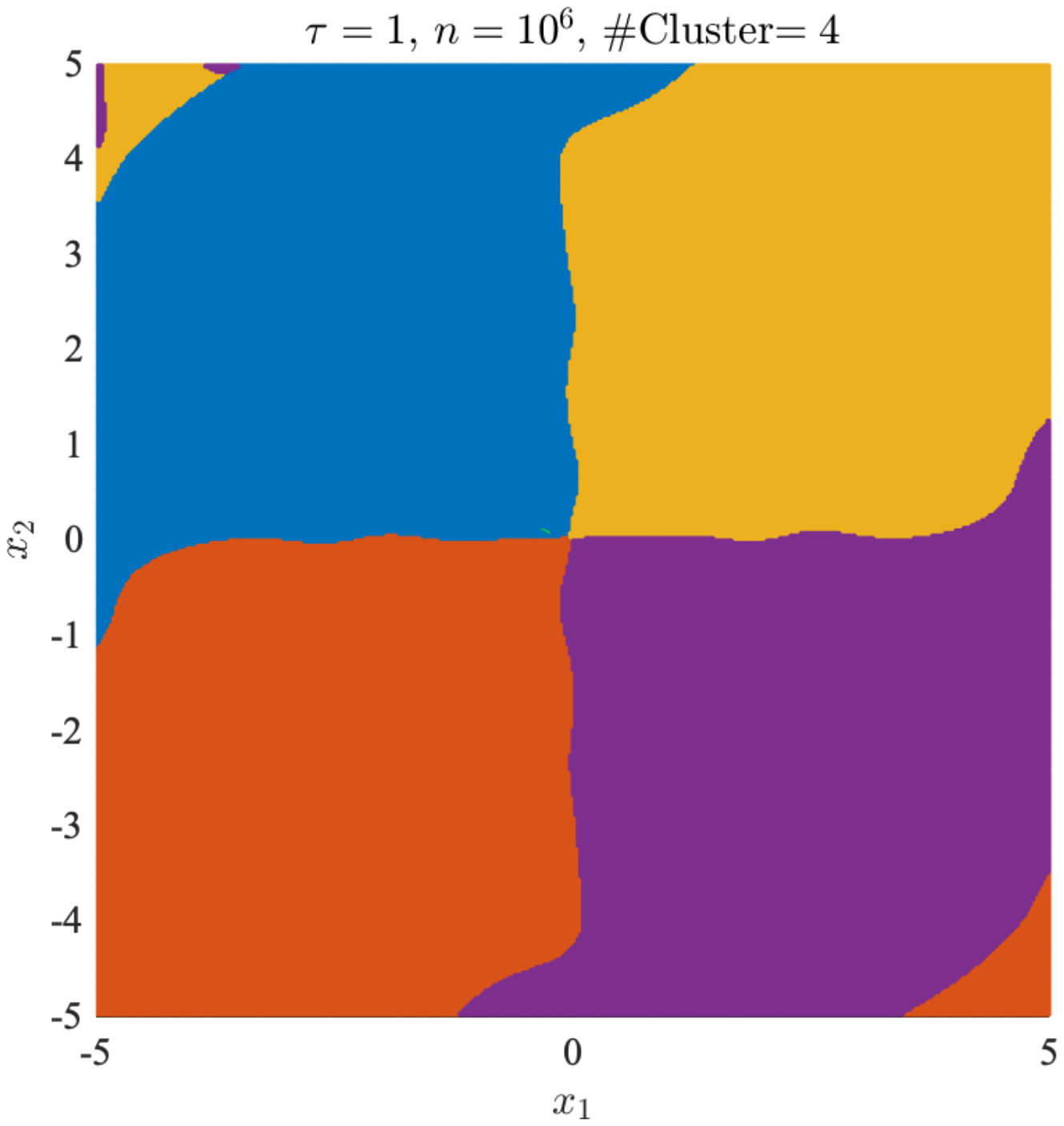}\\
		\vspace{-0.22cm}{\small \# Cluster = 4}
	\end{minipage}
	\hspace{-0.06cm}
	\begin{minipage}{0.18\linewidth}
		\centering
		\includegraphics[width = 0.95\linewidth]{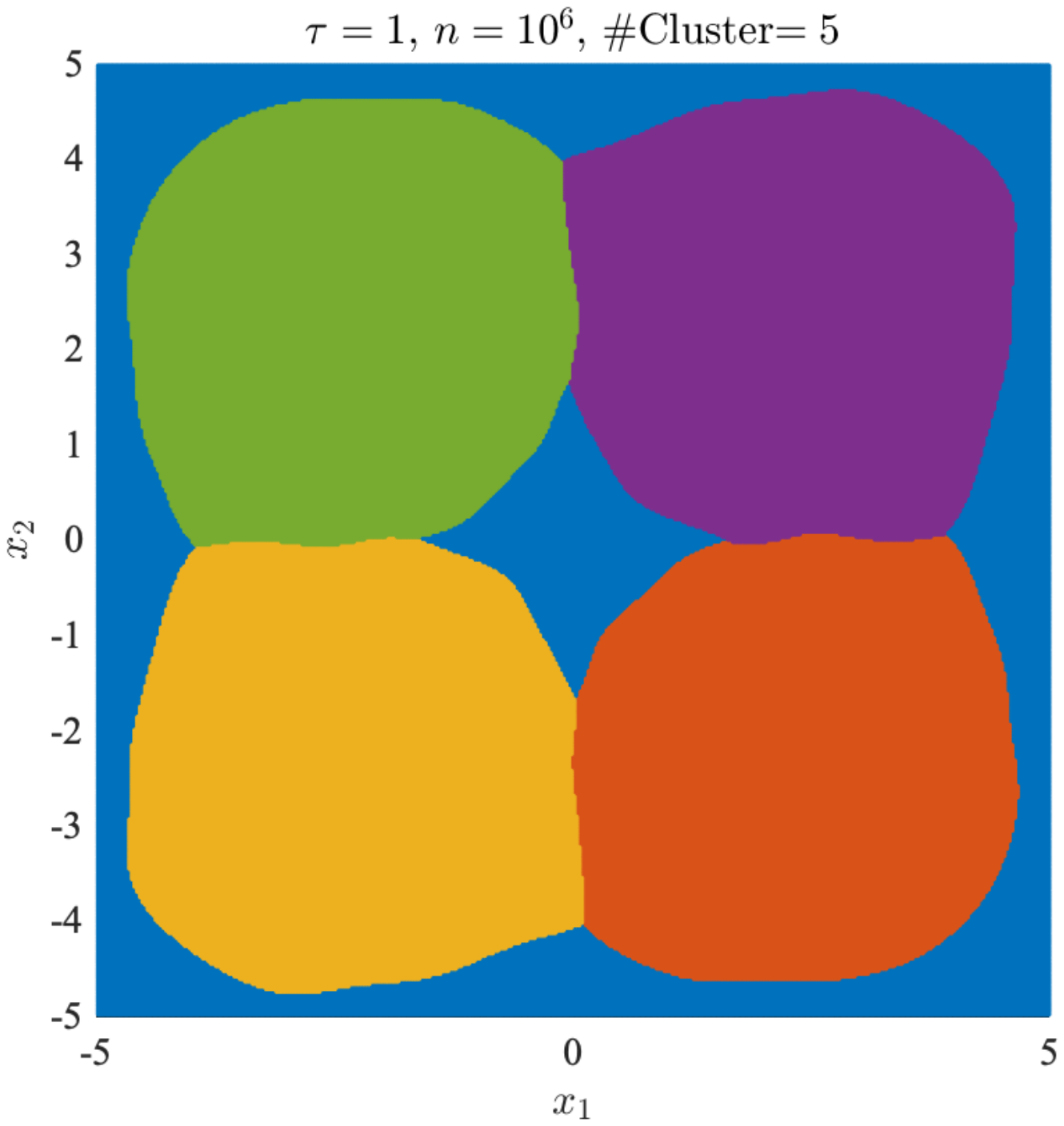}\\
		\vspace{-0.22cm}{\small \# Cluster = 5}
	\end{minipage}
	\hspace{-0.06cm}
	\begin{minipage}{0.18\linewidth}
		\centering
		\includegraphics[width = 0.95\linewidth]{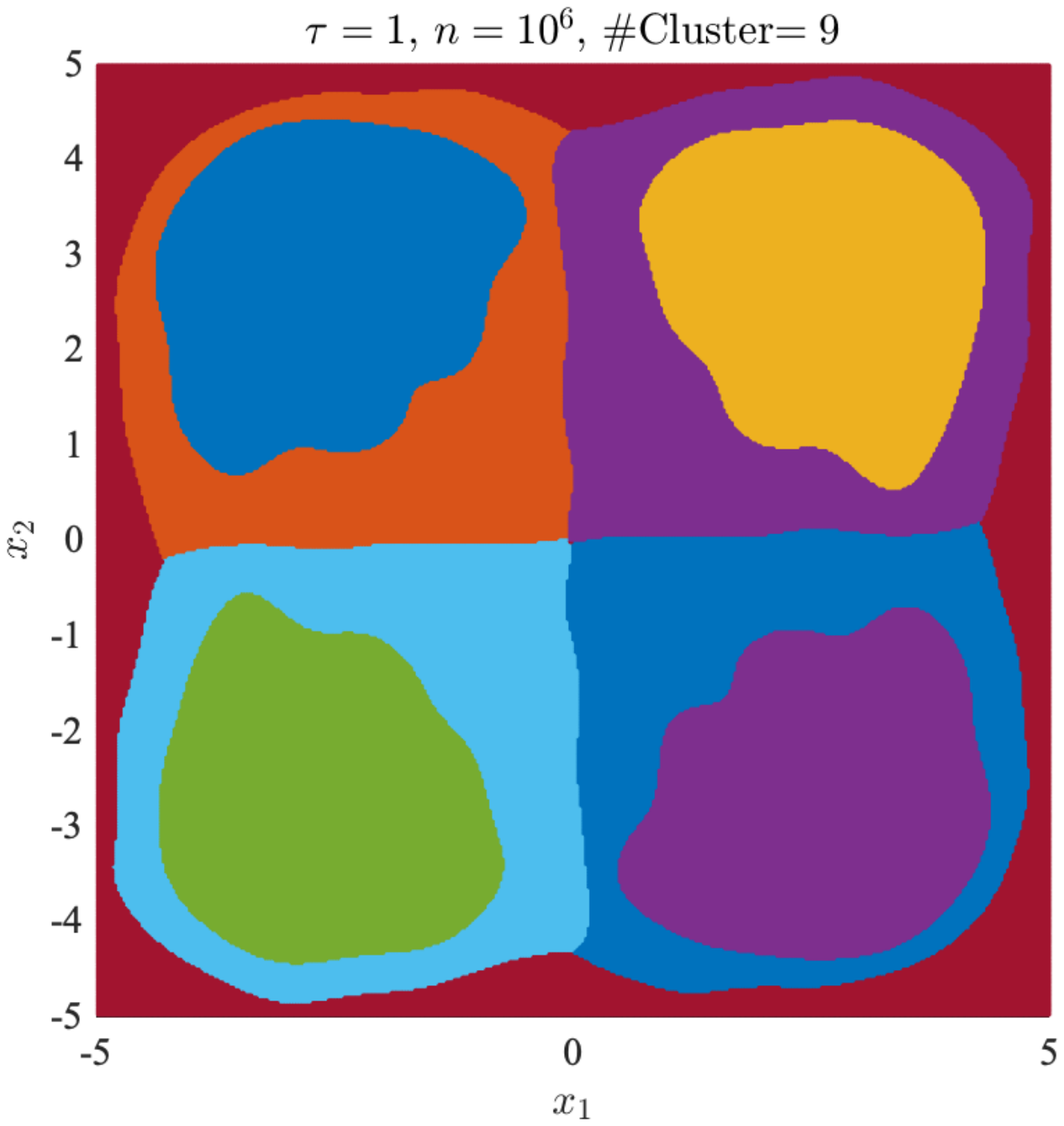}\\
		\vspace{-0.22cm}{\small \# Cluster = 9}
	\end{minipage}
	\hspace{-0.06cm}
	\begin{minipage}{0.18\linewidth}
		\centering
		\includegraphics[width = 0.95\linewidth]{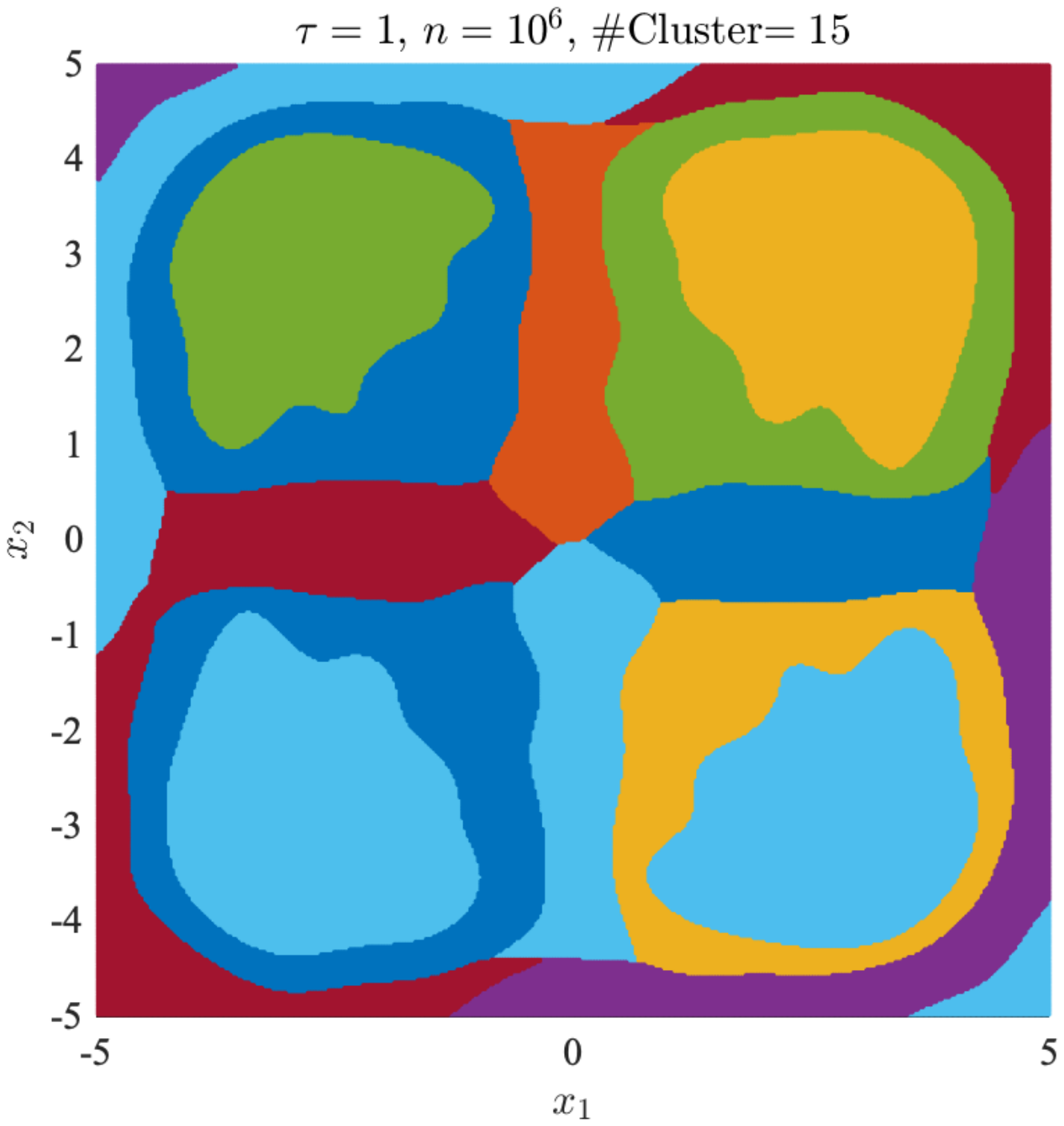}\\
		\vspace{-0.22cm}{\small \# Cluster = 15}
	\end{minipage}
	\hspace{0.05cm}
	\\ \vspace{0cm}
		\hspace{-0.06cm}
	\begin{minipage}{0.01\linewidth}
		\centering
		(b) \\ ~ \\ ~ \\
	\end{minipage}
	\hspace{0.07cm}
	\begin{minipage}{0.19\linewidth}
		\centering
		\includegraphics[width = \linewidth]{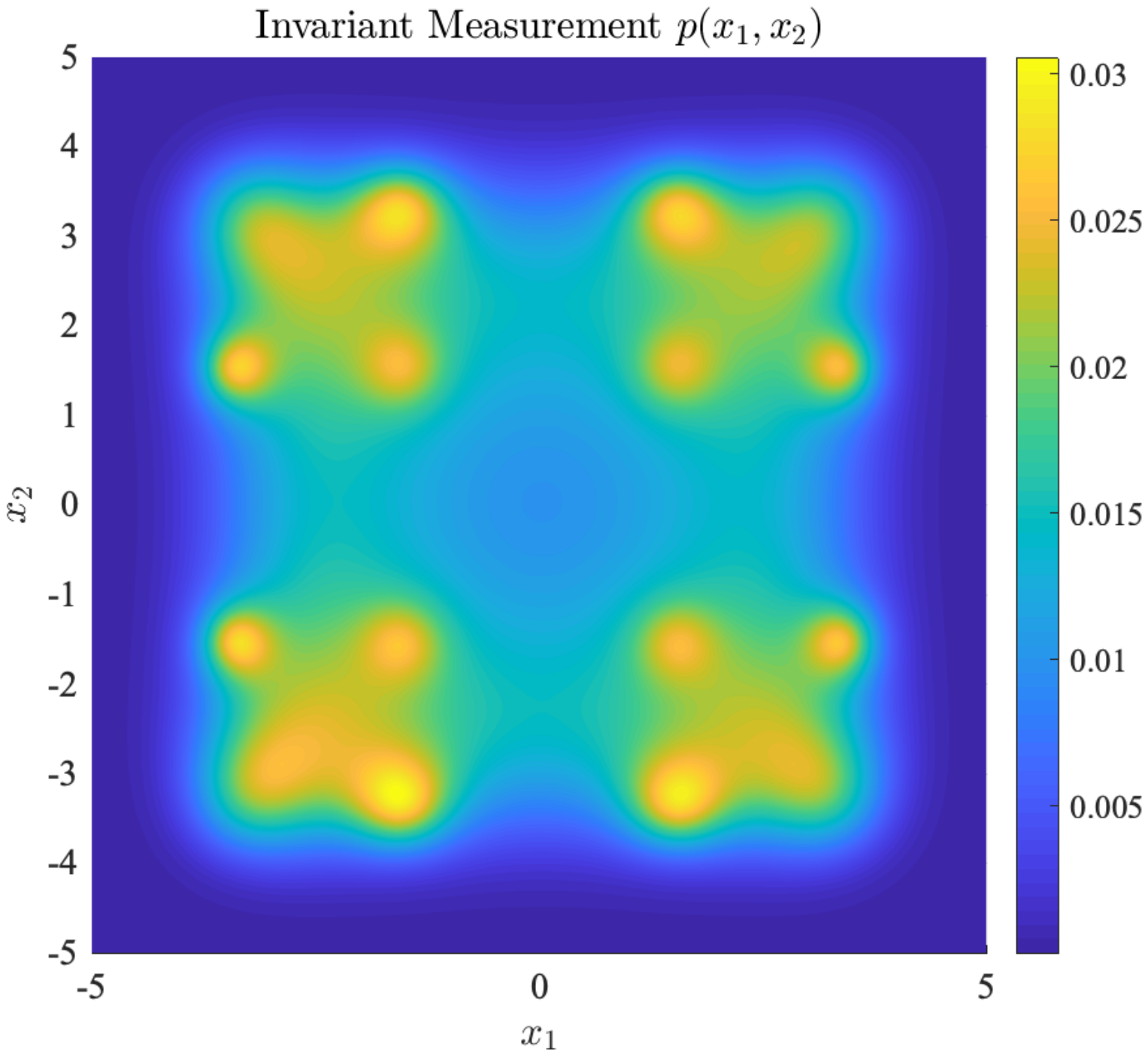}\\
		\vspace{-0.22cm}{\small $\pi(x)$}
	\end{minipage}
	\begin{minipage}{0.01\linewidth}
		\centering
		(d) \\ ~ \\ ~ \\
	\end{minipage}
	\hspace{0.07cm}
	\begin{minipage}{0.18\linewidth}
		\centering
		\includegraphics[width = \linewidth]{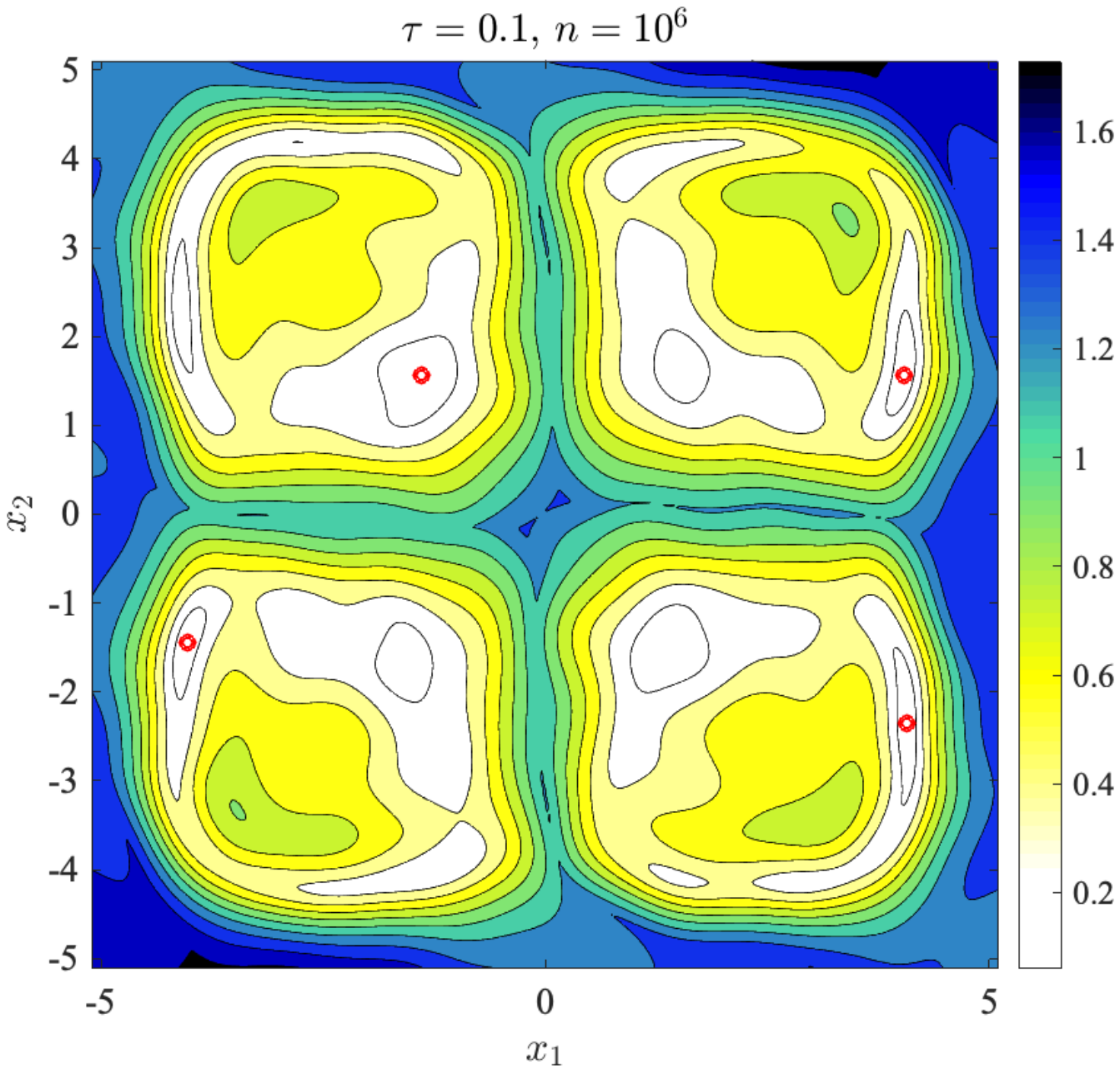}\\
		\vspace{-0.22cm}{\small $\tau = 0.1$ } 
	\end{minipage}
	\begin{minipage}{0.18\linewidth}
		\centering
		\includegraphics[width = \linewidth]{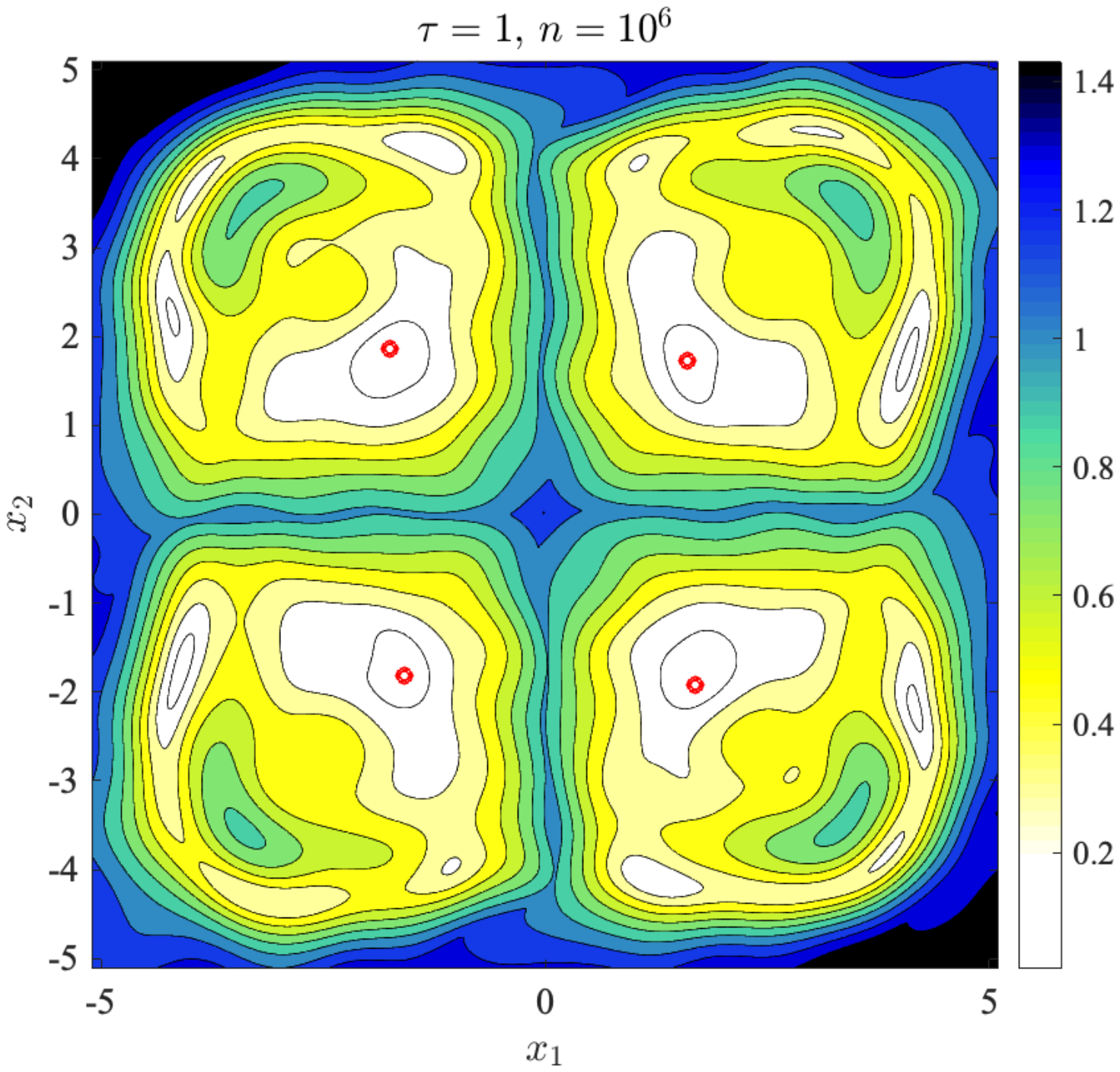}\\
		\vspace{-0.22cm}{\small $\tau = 1$  }
	\end{minipage}
	\begin{minipage}{0.18\linewidth}
		\centering
		\includegraphics[width = \linewidth]{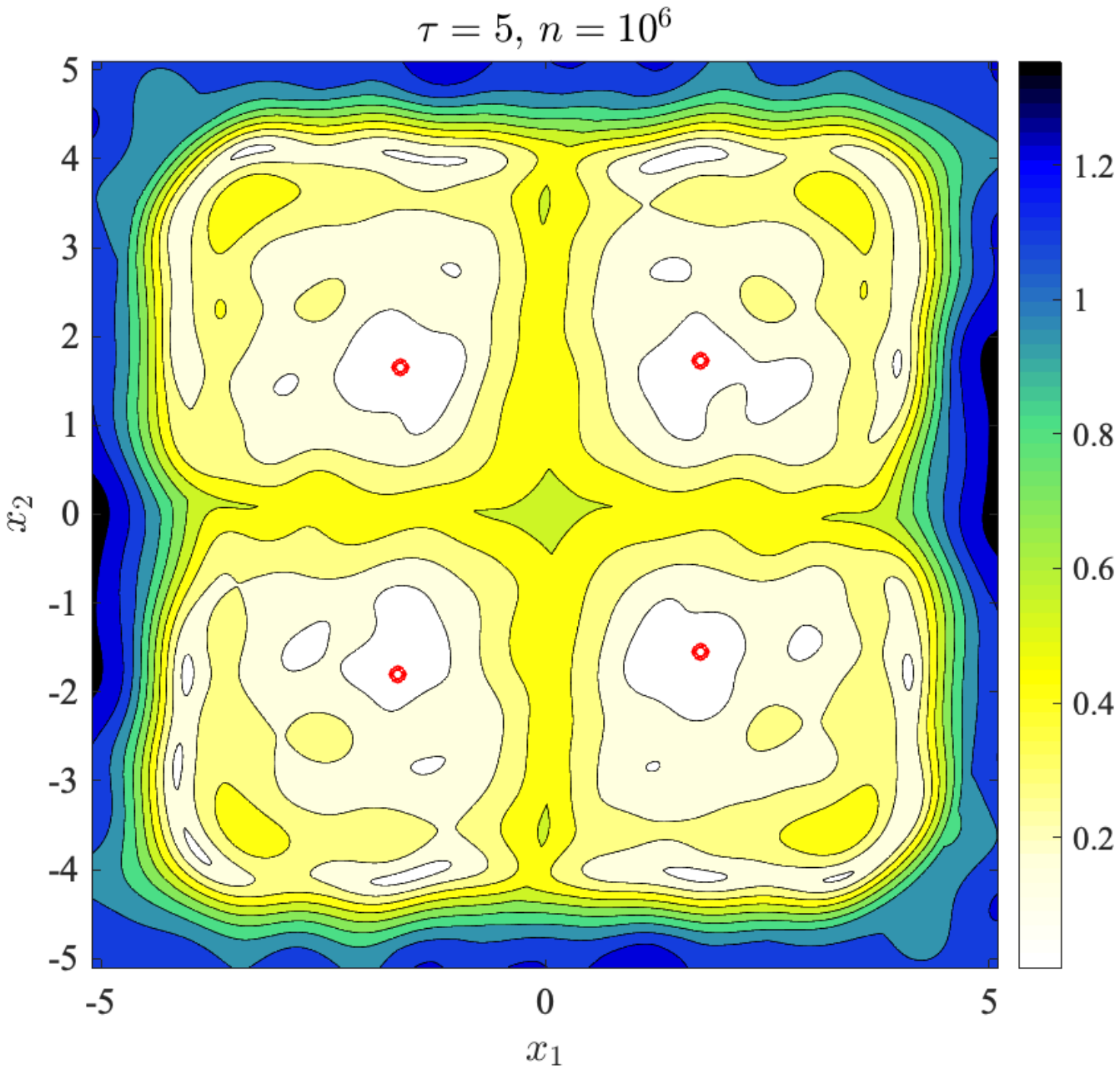}\\
		\vspace{-0.22cm}{\small $\tau = 5$ }
	\end{minipage}
	\begin{minipage}{0.18\linewidth}
		\centering
		\includegraphics[width = \linewidth]{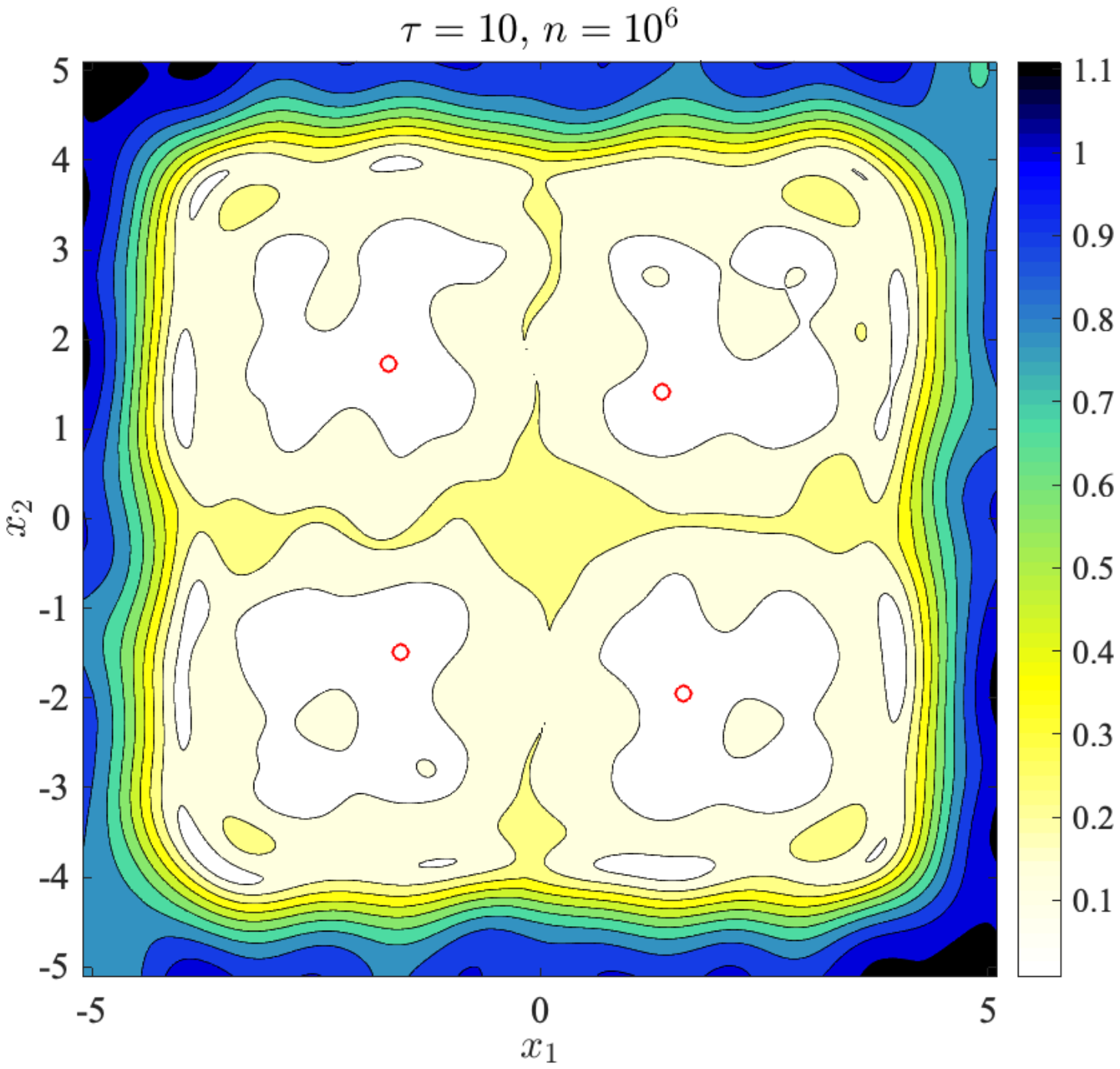}\\
		\vspace{-0.22cm}{\small $\tau = 10$ }
	\end{minipage}
	\\ \vspace{-0.2cm}
	\caption{{\small {\bf Metastable state clusters learned from a stochastic diffusion process.} (a) Potential function $V(x)$ of the diffusion process. (b) Invariant measure $\pi(x)$. (c) State clusters based on a state embedding $\hat{\bf \Psi}: x \mapsto \mathbb{R}^4$. 
		(d) Diffusion distance to the nearest cluster centroid (red dot) illustrated as contour plots.}}
	\vspace{-0.4cm}
\end{figure}

\subsection{DQN for Demon Attack}

We test the state embedding method on the game trajectories of Demon Attack, an Atari 2600 game. In this game, demons appear in waves, move randomly and attack from above, where the player moves to dodge the bullets and shoots with a laser cannon. We train a Deep Q-Network using the architecture given by \cite{mnih2015humanlevel}. The DQN takes recent image frames of the game as input, processes them through three convolutional layers and two fully connected layers, and outputs a single value for each action, among which the action with the maximal value is chosen. Please refer to Appendix for more details on DQN training. 
In our experiment, we take the times series generated by the last hidden layer of a trained DQN when playing the game as our raw data. The raw data is a time series of length 47936 and dimension 512, comprising 130 game trajectories.
 We apply the state embedding method by approximating the Gaussian kernel with 200 random Fourier features. Then we obtain low-dimensional embeddings of the game states in $\mathbb{R}^3$. 

\noindent{\bf Before embedding vs. after embedding} \vspace{-0.05cm}

 Figure 3 visualizes the raw states and the state embeddings using t-SNE, a visualization tool to illustrate multi-dimensional data in two dimensions \cite{van2008tsne}. In both plots, states that are mapped to nearby points tend to have similar ``values" (expected future returns) as predicted by the DQN, as illustrated by colors of data points.
 
 Comparing Figure 3(a) and (b), the raw state data are more scattered, while after embedding they exhibit clearer structures and fewer outliers. 
The markers {\raisebox{-0.027in}{\LARGE $\circ$}}, {\scriptsize $\boldsymbol{\bigtriangleup}$}, {\raisebox{-0.017in}{\Large $\diamond$}} identify the same pair of game states before and after embedding. They suggest that the embedding method maps game states that are far apart from each other in their raw data to closer neighbors. It can be viewed as a form of compression. 
The experiment has been repeated multiple times. We consistently observe that state embedding leads to improved clusters and higher granularity in the t-SNE visualization.

\noindent{\bf Understanding state embedding from examples} \vspace{-0.05cm}

Figure 4 illustrates three examples that were marked by {\raisebox{-0.027in}{\LARGE $\circ$}}, {\scriptsize $\boldsymbol{\bigtriangleup}$}, {\raisebox{-0.017in}{\Large $\diamond$}} in Figure 3. In each example, we have a pair of game states that were far apart in their raw data but are close to each other after embedding. Also note the two images are visually not alike, therefore any representation learning method based on individual images alone will not consider them to be similar.
Let us analyze these three examples:
\begin{enumerate}[noitemsep]\vspace{-.2cm}
\item[\raisebox{-0.027in}{\LARGE $\circ$}]: Both streaming lasers (purple) are about to destroy a demon and generate a reward; Both cannons are moving towards the left end.
\item[\scriptsize $\boldsymbol{\bigtriangleup}$]: In both images, two new demons are emerging on top of the cannon to join the battle and there is an even closer enemy, leading to future dangers and potential rewards.
\item[\raisebox{-0.017in}{\Large $\diamond$}]: Both cannons are waiting for more targets to appear, and they are both moving towards the center from opposite sides.
\end{enumerate}\vspace{-.2cm}
These examples suggest that state embedding is able to identify states as similar {\it if they share similar near-future events and values}, even though they are visually dissimilar and distant from each other in their raw data. 



\begin{figure}[h]
  \begin{minipage}[t]{0.5\linewidth}
    \centering 
    \vspace{0cm}{(a)$\ $Before Embedding}
    \hspace{-0.3in}
    \includegraphics[width=2.7in]{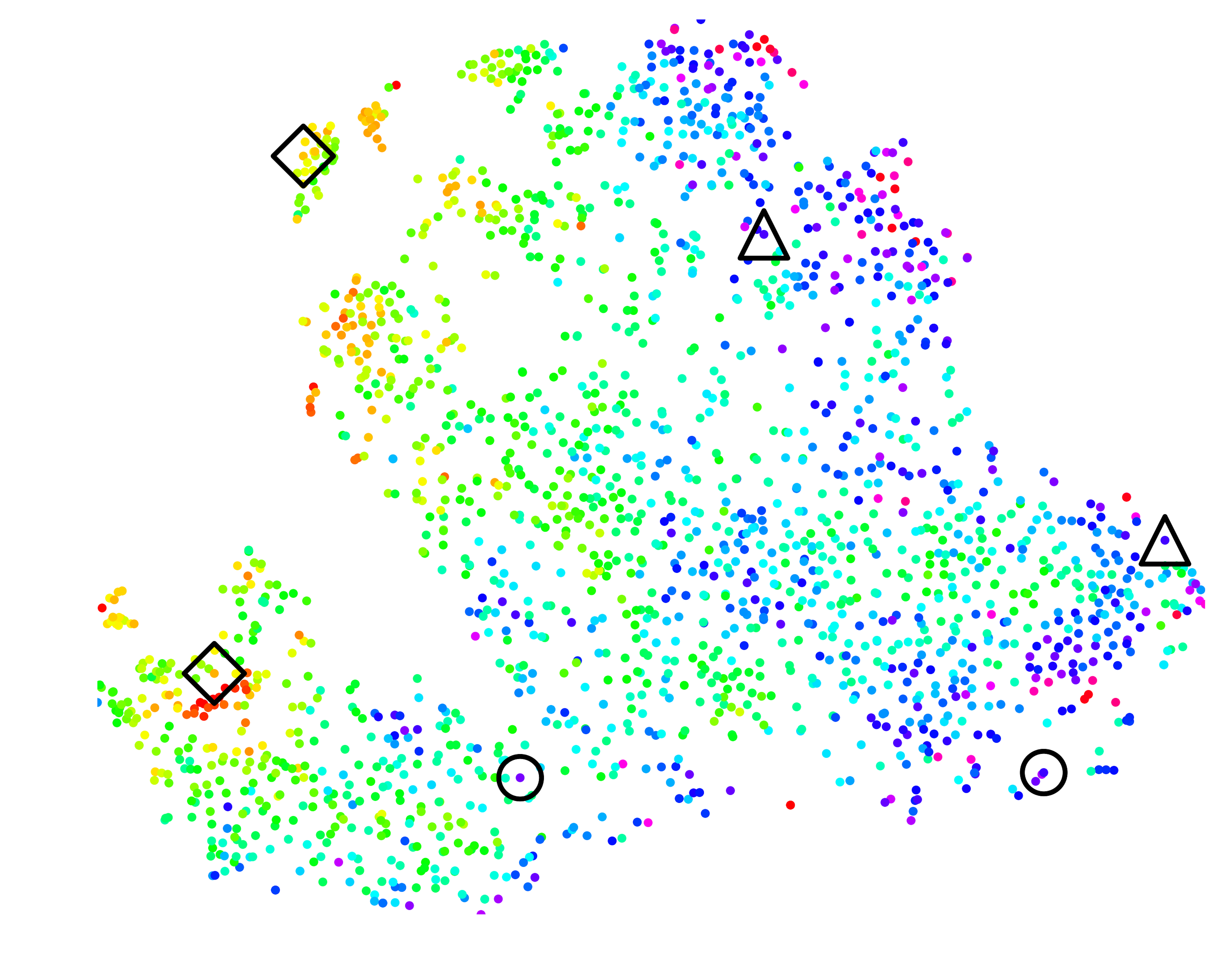}
    \label{tsne:1} 
  \end{minipage}%
  \begin{minipage}[t]{0.5\linewidth} 
    \centering 
    \vspace{0cm}{(b)$\ $After Embedding} 
    \includegraphics[width=2.7in]{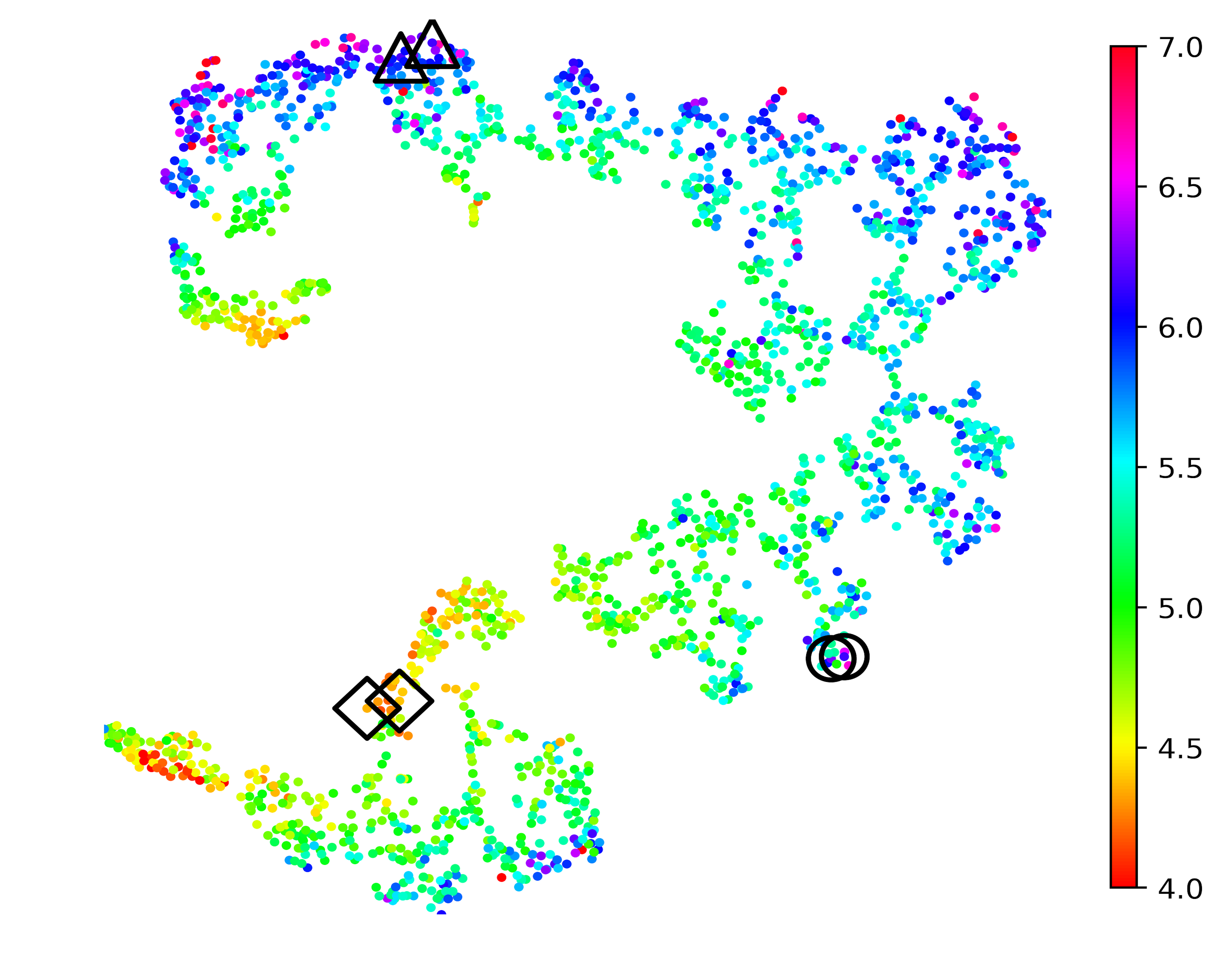}
    \label{tsne:2} 
  \end{minipage}
 \caption{\small {\bf Visualization of game states before and after embedding in t-SNE plots.} The raw data is a time series of 512 dimensions, which generated by the last hidden layer by the DQN while it is playing Demon Attack. State embeddings are computed from the raw time series using a Gaussian kernel with 200 random Fourier features. Game states are colored by the ``value" of the state as predicted by the DQN. The markers {\raisebox{-0.027in}{\LARGE $\circ$}}, {\scriptsize $\boldsymbol{\bigtriangleup}$}, {\raisebox{-0.017in}{\Large $\diamond$}} identify the same pair of game states before and after embedding. Comparing (a) and (b), state embedding improves the granularity of clusters and reveals more structures of the data. 
 } 
 \end{figure}
 
 
 \vspace{-0.7cm}

\begin{figure}[h]
 \begin{minipage}[t]{0.16\linewidth}
   \vspace{0.5cm}
   \centering
   \vspace{-0.1cm}{\raisebox{-0.027in}{\LARGE $\circ$}{\small : $V=6.27$}}
   \includegraphics[width=0.9in]{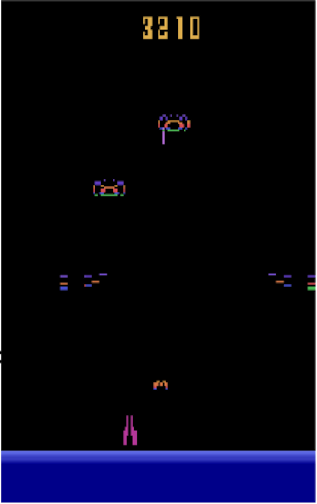}
  \end{minipage}
 \begin{minipage}[t]{0.16\linewidth}
   \vspace{0.5cm}
   \centering
   \vspace{-0.1cm}{\raisebox{-0.027in}{\LARGE $\circ$}{\small : $V=6.14$}}
   \includegraphics[width=0.9in]{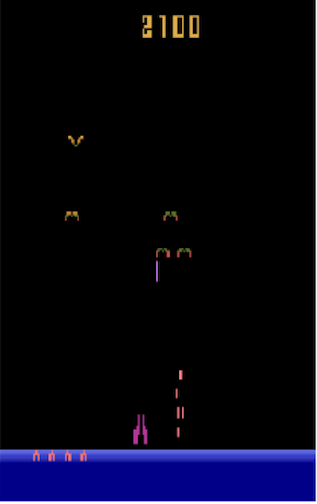}
  \end{minipage}
 \begin{minipage}[t]{0.16\linewidth}
   \vspace{0.5cm}
   \centering
   \vspace{-0.1cm}{\small $\boldsymbol{\bigtriangleup}$: $V=6.17$}
   \includegraphics[width=0.9in]{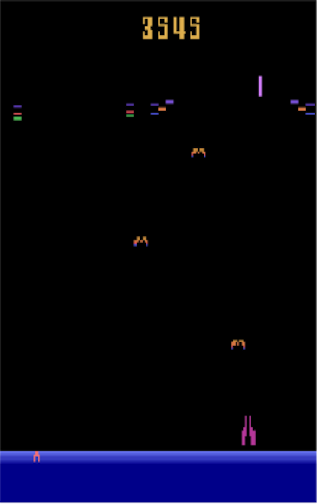}
 \end{minipage}
 \begin{minipage}[t]{0.16\linewidth}
   \vspace{0.5cm}
   \centering
   \vspace{-0.1cm}{\small $\boldsymbol{\bigtriangleup}$: $V=6.16$}
   \includegraphics[width=0.9in]{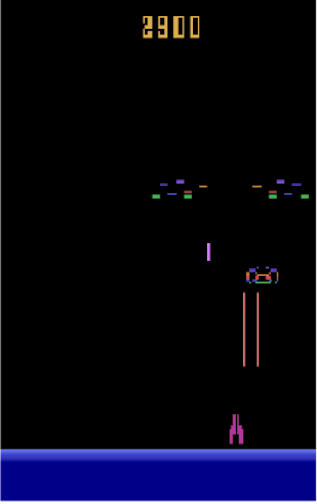}
  \end{minipage}
   \begin{minipage}[t]{0.16\linewidth}  
     \vspace{0.5cm}
   \centering
   \vspace{-0.1cm}{\raisebox{-0.027in}{\LARGE $\diamond$}{\small : $V=4.44$}}
   \includegraphics[width=0.9in]{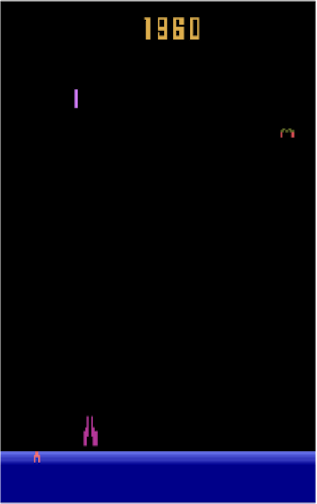}
  \end{minipage}
   \begin{minipage}[t]{0.16\linewidth}
     \vspace{0.5cm}
   \centering
   \vspace{-0.1cm}{\raisebox{-0.027in}{\LARGE $\diamond$}{\small : $V=4.35$}}
   \includegraphics[width=0.9in]{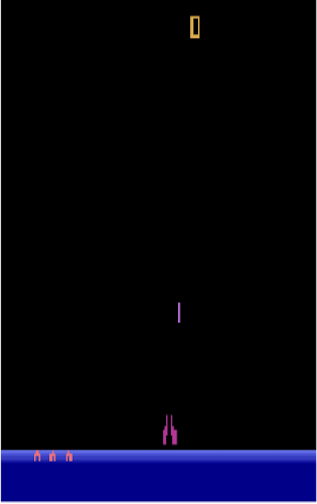}
  \end{minipage}
 \caption{\small {\bf Pairs of game states that are close after embedding ({\raisebox{-0.027in}{\LARGE $\circ$}}, {\scriptsize $\boldsymbol{\bigtriangleup}$}, {\raisebox{-0.017in}{\Large $\diamond$}} in Figure 3).} Within each pair, the two states share similar ``V" values as predicted by the DQN, but they were not close in the raw data and are visually dissimilar.
 \ \ {\raisebox{-0.027in}{\LARGE $\circ$}}: Both streaming lasers (purple) are about to destroy a demon and generate a reward; Both cannons are moving towards the left end.\protect
 \ \ {\scriptsize $\boldsymbol{\bigtriangleup}$}: In both images, two new demons are emerging on top of the cannon to join the battle and there is an even closer enemy, leading to future dangers and potential rewards\protect
 \ \  {\raisebox{-0.017in}{\Large $\diamond$}}: Both cannons are waiting for more targets to appear, and they are both moving towards the center from opposite sides.\protect\ \ 
The examples above suggest that state embedding is able to identify states as similar if they share similar near-future paths and values.
 }
\end{figure}

\noindent{\bf Summary and Future Work} \vspace{-0.05cm}

The experiments validate our theory and lead to interesting discoveries: estimated state embedding captures what would happen in the future conditioned on the current state. Thus the state embedding can be useful to decision makers in terms of gaining insights into the underlying logic of the game, thereby helping them to make better predictions and decisions. 

Our methods are inspired by dimension reduction methods from scientific computing and they further leverage the low-rankness of the transition kernel to reduce estimation error and find compact state embeddings. Our theorems provide the basic statistical theory on state embedding/clustering from finite-length dependent time series. They are the first theoretical results known for continuous-state Markov process.
We hope our results would motivate more work on this topic and lead to broader applications in scientific data analysis and machine learning. A natural question to ask next is how can one use state embedding to make control and reinforcement learning more efficient. This is a direction for future research.

\small
\newpage
\bibliography{reference,reference-ys}
\bibliographystyle{alpha}
\newpage

\newcounter{alphasect}
\def\alphainsection{0}

\let\oldsection=\section
\def\section{%
	\ifnum\alphainsection=1%
	\addtocounter{alphasect}{1}
	\fi%
	\oldsection}%

\renewcommand\thesection{%
	\ifnum\alphainsection=1%
	\Alph{alphasect}
	\else%
	\arabic{section}
	\fi%
}%

\newenvironment{alphasection}{%
	\ifnum\alphainsection=1%
	\errhelp={Let other blocks end at the beginning of the next block.}
	\errmessage{Nested Alpha section not allowed}
	\fi%
	\setcounter{alphasect}{0}
	\def\alphainsection{1}
}{%
	\setcounter{alphasect}{0}
	\def\alphainsection{0}
}%


\begin{LARGE}
	\textbf{Appendices}
\end{LARGE}
\begin{alphasection}
\section{Proofs for Section 3}

\subsection{Technical Lemmas}
\begin{lemma}[Theorem 1 of \cite{lopez-paz2014cause}] 
		\label{prop:1}
	Suppose $\{(X_i,X'_i)\}_{i=1}^n\overset{i.i.d}{\sim} p(\cdot,\cdot)$, assume that $\|f\otimes g\|_\infty\leq 1$ for all $f\otimes g\in\HCal\times\THCal$ with $\|f\otimes g\|_{\HCal\times \THCal}\leq 1$. Then with probability at least $1-\delta$:
		\begin{equation*}
		\|{\mu}_p-\hat{\mu}_p\|_{\HCal\times \tilde{\HCal}}\leq 2\sqrt{\frac{\E_{(X,Y)\sim \Pb}[K(X,X)\tilde{K}(Y,Y)]}{n}} + \sqrt{\frac{2\log(1/\delta)}{n}}.
		\end{equation*}
\end{lemma}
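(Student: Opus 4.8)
The plan is to recognize this as a textbook concentration inequality for an empirical mean of i.i.d.\ Hilbert-space-valued random variables, and to combine McDiarmid's bounded-differences inequality (for the deviation term) with a symmetrization argument (for the expectation term). The crucial structural point is that, in contrast to the dependent-data setting of Theorem 1, the transition pairs here are i.i.d., so no matrix martingale or mixing-time machinery is required; everything takes place inside the product RKHS $\HCal\times\THCal$.

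First I would reformulate the statement in the product space. For each sample define the random feature $\xi_i := K(\cdot,X_i)\otimes\tilde K(\cdot,X_i')\in\HCal\times\THCal$. By the reproducing property the plain empirical KME is exactly $\hat{\mu}_p=\frac1n\sum_{i=1}^n\xi_i$, while $\mu_p=\E[\xi]$, so that $\|\mu_p-\hat{\mu}_p\|_{\HCal\times\THCal}$ is precisely the fluctuation of an i.i.d.\ empirical mean about its expectation. Two facts about $\xi_i$ drive the argument: its squared norm factorizes as $\|\xi_i\|_{\HCal\times\THCal}^2=K(X_i,X_i)\,\tilde K(X_i',X_i')$, and the boundedness hypothesis (that $\|f\otimes g\|_\infty\le1$ whenever $\|f\otimes g\|_{\HCal\times\THCal}\le1$) forces $\sup_{x,y}K(x,x)\tilde K(y,y)\le1$, hence $\|\xi_i\|_{\HCal\times\THCal}\le1$ almost surely.

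Next I would regard $G:=\|\mu_p-\hat{\mu}_p\|_{\HCal\times\THCal}$ as a function of the $n$ independent inputs $(X_i,X_i')$ and verify bounded differences: replacing one pair $(X_j,X_j')$ by an independent copy changes $\hat{\mu}_p$ by $\frac1n(\xi_j-\xi_j')$, whose norm is at most $\frac2n$ by the triangle inequality and $\|\xi\|\le1$; since the norm is $1$-Lipschitz, $G$ changes by at most $c_j=2/n$. McDiarmid's inequality then gives, with probability at least $1-\delta$,
$$G \le \E G + \sqrt{\tfrac12\sum_{j=1}^n c_j^2\,\log(1/\delta)} = \E G + \sqrt{\tfrac{2\log(1/\delta)}{n}},$$
which already produces the second term of the bound.

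It remains to control $\E G$, and this is where the leading constant $2$ and the variance proxy $\E[K(X,X)\tilde K(Y,Y)]$ enter. I would symmetrize: introducing Rademacher signs $\{\epsilon_i\}$ via the standard ghost-sample argument gives $\E G\le 2\,\E\|\frac1n\sum_i\epsilon_i\xi_i\|$, after which Jensen's inequality together with $\E[\epsilon_i\epsilon_j]=\mathbbm{1}\{i=j\}$ collapses the cross terms,
$$\E\Big\|\tfrac1n\sum_i\epsilon_i\xi_i\Big\| \le \Big(\tfrac1{n^2}\sum_i\E\|\xi_i\|^2\Big)^{1/2} = \sqrt{\tfrac{\E[K(X,X)\tilde K(Y,Y)]}{n}},$$
using $\E\|\xi_i\|^2=\E_{(X,Y)\sim p}[K(X,X)\tilde K(Y,Y)]$ (with $Y=X'$). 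Substituting back yields $\E G\le 2\sqrt{\E[K(X,X)\tilde K(Y,Y)]/n}$, and combining with the McDiarmid step completes the proof. The only delicate point is bookkeeping: confirming the bounded-differences constant is exactly $2/n$, and that it is symmetrization (rather than the sharper direct variance bound $\E G\le\sqrt{\E\|\xi\|^2/n}$, which would omit the factor $2$) that reproduces the stated constant. Everything else is routine i.i.d.\ Hilbert-space concentration.
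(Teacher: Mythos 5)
Your proof is correct, but it is worth noting that the paper itself does not prove this lemma at all: the statement is quoted as Theorem~1 of the cited reference (Lopez-Paz et al.), and the paper's ``proof'' is a one-line pointer to that work. Your argument is therefore a genuine, self-contained reconstruction, and it follows the standard route for such KME concentration results: (i) the identification $\hat{\mu}_p=\frac1n\sum_i \xi_i$ with $\xi_i=K(\cdot,X_i)\otimes\tilde K(\cdot,X_i')$ and $\|\xi_i\|^2_{\HCal\times\THCal}=K(X_i,X_i)\tilde K(X_i',X_i')$; (ii) the observation that the hypothesis $\|f\otimes g\|_\infty\le 1$ on the unit ball is equivalent to $\sup_{x,y}K(x,x)\tilde K(y,y)\le 1$ (take $f=K(\cdot,x)/\sqrt{K(x,x)}$, $g=\tilde K(\cdot,y)/\sqrt{\tilde K(y,y)}$), which gives $\|\xi_i\|\le 1$ and hence the bounded-differences constant $c_j=2/n$; (iii) McDiarmid for the deviation term $\sqrt{2\log(1/\delta)/n}$; and (iv) symmetrization plus Jensen for $\E\|\mu_p-\hat\mu_p\|\le 2\sqrt{\E[K(X,X)\tilde K(Y,Y)]/n}$. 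All constants check out, and your closing remark is also accurate: a direct variance bound $\E\|\mu_p-\hat\mu_p\|\le\sqrt{\E\|\xi\|^2/n}$ would actually be sharper by a factor of $2$, so it is indeed the symmetrization step that reproduces the constant as stated. What your writeup buys over the paper's citation is self-containedness and an explicit audit of where the boundedness hypothesis enters; what the citation buys is brevity and credit to the original source. One presentational caution: the lemma's conclusion is stated for the plain empirical KME (denoted $\tilde\mu_p$ in the main text but $\hat\mu_p$ in the lemma), not the reshaped estimator of Algorithm~1, and your proof correctly treats the former --- keep that notational clash in mind so the i.i.d.\ hypothesis is not mistakenly attached to the dependent-data setting of Theorem~1.
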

\begin{proof}
	See \cite{lopez-paz2014cause} for detailed proof. 
\end{proof}

\begin{lemma}\label{lemma:rank of P}
	Under Assumption 1, the projection matrix
	$$\Q =\int \pi(x)p(y|x) \Phi(x) \tilde \Phi(y)^Tdxdy$$
	has rank at most $r$.
\end{lemma}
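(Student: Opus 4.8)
The plan is to substitute the low-rank form of $p$ supplied by Assumption 1 directly into the definition of $\Q$ and to exhibit the result explicitly as a sum of $r$ rank-one matrices. First I would recall the factorization $p(x,y)=\pi(x)p(y\mid x)$, so that Assumption 1 gives
\[
\pi(x)p(y\mid x)=\sum_{k=1}^{r}\sigma_k\,\pi(x)u_k(x)\,v_k(y).
\]
Plugging this into $\Q=\int \pi(x)p(y\mid x)\,\Phi(x)\tilde\Phi(y)^T\,dx\,dy$ and using Fubini's theorem to pull the finite sum outside the integral and to separate the $x$- and $y$-integrations, the integrand factorizes across the two variables term by term: for each fixed $k$ the factor $\pi(x)u_k(x)\Phi(x)$ depends only on $x$ and $v_k(y)\tilde\Phi(y)$ only on $y$.

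Concretely, I would introduce the vectors
\[
\a_k:=\int \pi(x)u_k(x)\Phi(x)\,dx\in\mathbb{R}^N,
\qquad
\b_k:=\int v_k(y)\tilde\Phi(y)\,dy\in\mathbb{R}^N,
\]
for $k=1,\dots,r$. The computation then collapses to
\[
\Q=\sum_{k=1}^{r}\sigma_k\,\a_k\b_k^T,
\]
a sum of $r$ outer products, each of rank at most one. Subadditivity of rank immediately yields $\mathrm{rank}(\Q)\le r$, which is the claim.

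The only step requiring any care is the application of Fubini to justify interchanging the (finite) summation with the double integral and splitting the joint integral into a product of marginal integrals; this is where I expect the ``obstacle'' to lie, though it is a mild one. It is controlled because $J$ is finite, so $\Phi$ and $\tilde\Phi$ are finite-dimensional vectors of continuous (hence locally bounded) features, and because each $\a_k,\b_k$ is a finite-dimensional integral of the integrable quantities $\pi(x)u_k(x)$ and $v_k(y)$ against bounded features. Under the standing assumption that the kernels are continuous and the relevant feature moments are finite, every integral above converges absolutely and the interchange is valid. No spectral or probabilistic argument is needed; the lemma is a purely algebraic consequence of the rank-$r$ ansatz on $p$.
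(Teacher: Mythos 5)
Your proposal is correct and follows essentially the same route as the paper's proof: substitute the rank-$r$ decomposition of $p(y\mid x)$ into the definition of $\Q$, define the vectors $\int \pi(x)u_k(x)\Phi(x)\,dx$ and $\int v_k(y)\tilde\Phi(y)\,dy$, and write $\Q$ as a sum of $r$ rank-one outer products. The extra remarks on Fubini are a harmless elaboration the paper leaves implicit.
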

\begin{proof}
We define $N\times 1$ vectors $\vec{u}_k$ and $\vec{v}_k$ as:
\begin{align}
\vec{u}_k&:=\int \pi(x)u_k(x)\Phi(x)dx,\\
\vec{v}_k&:=\int v_k(y)\tilde{\Phi}(y)d{y}.
\end{align}
Then under Assumption 1, we can write:
\begin{align}
\Q&=\int \pi(x)p(y|x) \Phi(x) \tilde \Phi(y)^Tdxdy\\
&=\sum_{k=1}^r\int \sigma_k \pi(x) u_k(x)v_k(y)\Phi(x)\tilde{\Phi}(y)dy\\
&=\sum_{k=1}^r\sigma_{{k}} \vec{u}_k\vec{v}_k^{{T}}.
\end{align}
\end{proof}

\begin{lemma}\label{lemma:KME from P}
Suppose that $K(x,u)=\Phi(x)^T\Phi(u)$, $\tilde{K}(y,v)=\tilde{\Phi}(v)^T\tilde{\Phi}(y)$.
With $\P$ given by 
$$\Q =\int p(x,y) \Phi(x) \tilde \Phi(y)^Tdxdy.$$
We can represent KME of joint distrubution $\mu_p(u,v)$ as:
\begin{align*}
{\mu_p}(x,y)&=\int K(x,u)\tilde{K}(y,v)p(u,v)dudv =\Phi(x)^T \Q \tilde \Phi(y).
\end{align*}
\begin{proof}
	
By definition of KME for $p(u,v)$ into $\HCal\times\THCal$, we have:
$$
{\mu_p}
(x,y)=\int K(x,u)\tilde{K}(y,v)p(u,v)dudv {.} $$
{W}e then write kernels in terms of features and get $K(x,u)=\Phi(x)^T\Phi(u)$, $\tilde{K}(y,v)=\tilde{\Phi}(v)^T\tilde{\Phi}(y)${. Plugging} this into the definition of KME we get:
	\begin{align}
	\mu_{p}(x,y)&=\int K(x,u)\tilde{K}(y,v)p(u,v)dudv\\
	&=\int \Phi(x)^T\Phi(u)\tilde{\Phi}(v)^T\Phi(y) {p(u,v)} dudv\\
	&=\Phi(x)^T\left(\int {p(u,v)}\Phi(u)\tilde{\Phi}(v)^Tdudv\right)\Phi(y)\\
	&=\Phi(x)^T \P \Phi(y).
	\end{align}
\end{proof}
\end{lemma}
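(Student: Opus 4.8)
The plan is to establish the identity by direct substitution, reducing everything to linearity of the integral. I would begin from the definition $\mu_p(x,y)=\int K(x,u)\tilde K(y,v)\,p(u,v)\,dudv$ and insert the two feature expansions $K(x,u)=\Phi(x)^T\Phi(u)$ and $\tilde K(y,v)=\tilde\Phi(v)^T\tilde\Phi(y)$ that are given in the hypothesis. Each of these is a scalar inner product of $N$-vectors, so their product can be regrouped by associativity of matrix multiplication as $\Phi(x)^T\big(\Phi(u)\tilde\Phi(v)^T\big)\tilde\Phi(y)$, where $\Phi(u)\tilde\Phi(v)^T$ is an $N\times N$ outer-product matrix depending only on the integration variables $u,v$.

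The key step is then to pull the factors $\Phi(x)^T$ and $\tilde\Phi(y)$, which do not depend on $u$ or $v$, outside the integral. By linearity of the integral this yields $\mu_p(x,y)=\Phi(x)^T\big(\int p(u,v)\,\Phi(u)\tilde\Phi(v)^T\,dudv\big)\tilde\Phi(y)$. After relabeling the dummy variables, the bracketed integral is precisely the definition of $\Q$ in \eqref{eqn:Q matrix}, so the expression collapses to $\Phi(x)^T\Q\tilde\Phi(y)$, which is the claimed identity.

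I do not anticipate a genuine obstacle, since this is essentially a bookkeeping lemma rather than a substantive estimate. The only points deserving minor care are the matrix-algebra rearrangement — recognizing that a product of two scalar inner products equals a quadratic form in the outer-product matrix $\Phi(u)\tilde\Phi(v)^T$ — and, for full rigor, the interchange of the finite summations implicit in the feature expansions with the integration, which is immediate because $J$ has finite size $N$ and the kernels are bounded by $K_{\max}$. With these observations in hand, the proof reduces to a short chain of equalities.
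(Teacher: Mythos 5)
Your proposal is correct and follows essentially the same route as the paper's own proof: substitute the feature expansions into the KME definition, regroup the product of inner products as a quadratic form in the outer-product matrix $\Phi(u)\tilde\Phi(v)^T$, pull the $x$- and $y$-dependent factors outside the integral by linearity, and identify the remaining integral with $\Q$. Your added remark on interchanging the finite sums with the integral is a harmless extra precaution the paper omits.
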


\begin{lemma}\label{lemma:matrix concentration} Consider the KME of $p(\cdot|\cdot)$ into $\HCal\times\THCal$ where $\HCal$ has kernel $K(x,y)$ and $\THCal$ has kernel $\tilde{K}(x,y)$, $K(x,y)={\Phi}(x)^T{\Phi}(y)$ and $\tilde{K}(x,y)=\tilde{{\Phi}}(x)^T\tilde{{\Phi}}(y)$. Let $\V_1=\E_{(X,Y)\sim p}[{\Phi}(X){\Phi}(X)^T\tilde{K}(Y,Y)]$ and $\V_2=\E_{(X,Y)\sim p}[K(X,X)\tilde{{\Phi}}(Y)\tilde{{\Phi}}(Y)^T]$, $\bar{\lambda}=\max\{\lambda_{\max}(\V_1),\lambda_{\max}(\V_2)\}$. Let $\tau(\epsilon):=\min\{t\mid TV(P^t(\cdot|x),\pi(\cdot))\leq\epsilon, \forall x\in\Omega\}$ denote the mixing time. Let $\epsilon>0$ , $\alpha(\epsilon)=\tau(\frac{\epsilon}{2K_{max}}\wedge \frac{\bar{\lambda}}{K_{max}^2}) + 1$, then:
	\begin{align}\label{eqn:matrix concentration 1}
     \mathbbm{P}\big(\|\hat{\Q}-\Q\|\geq \epsilon\big)\leq 2\alpha(\epsilon) N\exp\bigg(- \frac{n\epsilon^2/8}{2\alpha(\epsilon)\bar{\lambda} + \epsilon\alpha(\epsilon) K_{\max} / 6}\bigg).
	\end{align}
    For any $0<\delta<1$, we have
    \begin{align}\label{eqn:matrix concentration 2}
    \|\Q-{\hat{\Q}}\| \leq C\bigg(\sqrt{\frac{t_{mix}\bar{\lambda}\log(2t_{mix}N/\delta)}{n}} + \frac{t_{mix}K_{\max}\log(2t_{mix}N/\delta)}{3n} \bigg)
    \end{align}
    with probability at least $1-\delta$, where $C$ is an universal constant. 
\end{lemma}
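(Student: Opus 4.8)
The plan is to prove a matrix Bernstein bound for the dependent sum $\hat{\Q}-\Q = \frac1n\sum_{t=1}^n M_t$, where $M_t := \Phi(X_t)\tilde{\Phi}(X_{t+1})^T$, by combining a blocking argument with a matrix martingale so as to neutralize the Markov dependence. First I would reduce to the Hermitian case via the self-adjoint dilation $\mathcal{D}(M)=\bigl(\begin{smallmatrix} 0 & M \\ M^T & 0\end{smallmatrix}\bigr)$, which is linear and satisfies $\|M\|=\lambda_{\max}(\mathcal{D}(M))$ and $\mathcal{D}(M)^2=\mathrm{diag}(MM^T,M^TM)$. Since $M_tM_t^T=\tilde{K}(X_{t+1},X_{t+1})\Phi(X_t)\Phi(X_t)^T$ and $M_t^TM_t=K(X_t,X_t)\tilde{\Phi}(X_{t+1})\tilde{\Phi}(X_{t+1})^T$, taking expectations under stationarity yields $\E[M_tM_t^T]=\V_1$ and $\E[M_t^TM_t]=\V_2$, so the matrix variance proxy is exactly $\bar{\lambda}$; the uniform bound $\|M_t\|\le \sqrt{K(X_t,X_t)\tilde{K}(X_{t+1},X_{t+1})}\le K_{\max}$ supplies the large-deviation term.

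The difficulty is that $\{M_t\}$ is not a martingale difference sequence, since $\E[M_t\mid X_{1:t-1}]$ still depends on $X_{t-1}$. To handle this I would interleave: partition $\{1,\dots,n\}$ into $\alpha:=\alpha(\epsilon)$ subsequences $G_j=\{j,\,j+\alpha,\,j+2\alpha,\dots\}$, so that within each $G_j$ consecutive indices are $\alpha$ steps apart. Along $G_j$ I define a matrix martingale by centering each dilated term by its conditional expectation given the previous element of $G_j$; the increments are bounded by $2K_{\max}$ and their predictable quadratic variation is within a constant factor of $(n/\alpha)\bar{\lambda}$. Two errors arise: the bias between $\E[M_t\mid\text{previous element of }G_j]$ and $\Q$, and the perturbation of the conditional second moments away from $\V_1,\V_2$. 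The choice $\alpha(\epsilon)=\tau\bigl(\tfrac{\epsilon}{2K_{\max}}\wedge\tfrac{\bar{\lambda}}{K_{\max}^2}\bigr)+1$ is exactly what forces both errors to be negligible: the level $\tfrac{\epsilon}{2K_{\max}}$ caps the accumulated bias at $\epsilon/2$ (so a deviation bound at scale $\epsilon$ survives with the remaining $\epsilon/2$), while the level $\tfrac{\bar{\lambda}}{K_{\max}^2}$ keeps the variance within a constant factor of $\bar{\lambda}$.

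Then I would apply a matrix Bernstein inequality for martingales (matrix Freedman) to each subsequence, with variance of order $(n/\alpha)\bar{\lambda}$ and increment bound $2K_{\max}$, requiring the partial sum to exceed $n\epsilon/\alpha$. Using $\|\hat{\Q}-\Q\|\le\sum_{j=1}^{\alpha}\|\frac1n\sum_{t\in G_j}M_t\|$ and a union bound over the $\alpha$ subsequences produces the prefactor $2\alpha(\epsilon)N$ and the denominator $2\alpha(\epsilon)\bar{\lambda}+\epsilon\alpha(\epsilon)K_{\max}/6$, i.e.\ inequality \eqref{eqn:matrix concentration 1}. To pass to \eqref{eqn:matrix concentration 2}, I would set the right-hand side of \eqref{eqn:matrix concentration 1} equal to $\delta$ and invert the resulting quadratic in $\epsilon$; the submultiplicativity $\tau(\eta)\le t_{mix}\lceil\log_2(1/\eta)\rceil$ bounds $\alpha(\epsilon)$ by a logarithmic multiple of $t_{mix}$, and all logarithmic factors collapse into $\log(2t_{mix}N/\delta)$.

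The main obstacle is controlling the dependency sharply: obtaining the small variance proxy $\bar{\lambda}$ rather than a crude trace bound requires the conditional-centering step, and making the martingale bias and the variance perturbation simultaneously small is precisely what dictates the two-level mixing choice in $\alpha(\epsilon)$. A secondary nuisance is that $\alpha(\epsilon)$ depends on $\epsilon$, so the inversion from \eqref{eqn:matrix concentration 1} to \eqref{eqn:matrix concentration 2} is implicit and must be resolved through the logarithmic mixing bound.
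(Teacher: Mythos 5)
Your proposal is correct and follows essentially the same route as the paper's proof: the same interleaved blocking into $\alpha(\epsilon)$ subsequences, conditional centering to form matrix martingales, the identical two-level mixing choice (bias controlled at level $\epsilon/(2K_{\max})$, variance perturbation at level $\bar{\lambda}/K_{\max}^2$), the matrix Freedman inequality with increment bound $2K_{\max}$ and variance $\approx (n/\alpha)\bar{\lambda}$, a union bound over the $\alpha$ subsequences, and inversion via the logarithmic growth of $\alpha(\epsilon)$ in terms of $t_{mix}$. The only cosmetic difference is that you invoke the self-adjoint dilation explicitly, whereas the paper applies the rectangular-matrix form of Freedman's inequality directly (which encapsulates the same dilation internally).
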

\begin{proof}
	Let $n_0=\lfloor n/\alpha\rfloor$. Define 
	{\[ \hat{\Q}_t := \Phi(X_{t-1})\tilde{\Phi}(X_t)^{T} \qquad \text{for $t = 2,3,\ldots,n$,} \] and}
	the ``thin'' sequence:
	\begin{align}
	\check{\Q}_k^{(l)}:=\hat{\Q}_{k\alpha+l}-\E(\hat{\Q}_{k\alpha+l}|\hat{\Q}_{(k-1)\alpha+l}).
	\end{align}
	We first bound $\|\hat{\Q}_{k\alpha+l}\|$:
	\begin{align}
	\|\hat{\Q}_{k\alpha+l}\|&=\sup_{\substack{\|\v\|_{\ell_2}\leq 1\\\|\u\|_{\ell_2}\leq 1}}\v^T\hat{\Q}_{k\alpha+l}\u\\
	&=\sup_{\substack{\|\v\|_{\ell_2}\leq 1\\\|\u\|_{\ell_2}\leq 1}} \sum_{i,j=1}^N \v_i\Phi_i(X_{k\alpha+l-1})\tilde{\Phi}_j(X_{k\alpha+l})\u_j\\
	&\leq K_{max}.
	\end{align}
	In the last inequality we use the fact that for any vector ${\bf v}$ with $\|{\bf v}\|_{\ell_2} \leq 1$,
	\begin{align}\label{eqn:inf norm bound}
		\sum_{i=1}^N {\bf v}_i \Phi_i(x) \leq \|{\bf v}\|_{\ell_2} \|\Phi(x)\|_{\ell_2} \leq \sqrt{K(x,x)} \leq K^{1/2}_{max},
	\end{align}
	where we used $\|\Phi(x)\|_{\ell_2}^2 = K(x,x)$ and the definition of $K_{max}$. Similar argument applies for $\sum_{j=1}^N {\bf u}_j \Phi_j(x)$.
	Hence $\|\E(\hat{\Q}_{k\alpha+l}|\hat{\Q}_{(k-1)\alpha+l})\|\leq \E(\|\hat{\Q}_{k\alpha+l}\|{|\hat{\Q}_{(k-1)\alpha+l}})\leq K_{max}$ and we have:
	\begin{align}
	\|\check{\Q}_k^{(l)}\|\leq 2K_{max}.
	\end{align}
	Let $p^{\alpha}(x|x_{(k-1)\alpha+l})$ be the ${(\alpha-1)}$-step transition density starting from $x_{(k-1)\alpha+l}$. We use {``} $\cdot$ {''} to denote dot product of two vectors, then we have:
	\begin{align}
	(\hat{\Q}_{k\alpha+l}\hat{\Q}_{k\alpha+l}^T)_{ij}&:=(\hat{\Q}_{k\alpha+l})_{[i,:]}\cdot (\hat{\Q}_{k\alpha+l})_{[j,:]}\\
	&=\sum_{p=1}^N\Phi_i(X_{k\alpha+l-1})\Phi_j(X_{k\alpha+l-1})\tilde{\Phi}_p^2(X_{k\alpha+l}).
	\end{align}
	Taking conditional expectation yields:
	\begin{align}
	\E\bigg(\hat{\Q}_{k\alpha+l}\hat{\Q}_{k\alpha+l}^T | x_{(k-1)\alpha+l}\bigg)_{ij}&=\int\Phi_i(x)\Phi_j(x)p^\alpha(x|x_{(k-1)\alpha+1})p(y|x)\sum_{p=1}^N\tilde{\Phi}_p^2(y)dxdy\\
	&=\int\Phi_i(x)\Phi_j(x)p^\alpha(x|x_{(k-1)\alpha+1})p(y|x)\tilde{K}(y,y)dxdy.
	\end{align}
	We can write 
	\begin{align}
	\E\bigg(\hat{\Q}_{k\alpha+l}\hat{\Q}_{k\alpha+l}^T | x_{(k-1)\alpha+l}\bigg)_{ij} &= \bigg(\E\bigg(\hat{\Q}_{k\alpha+l}\hat{\Q}_{k\alpha+l}^T | x_{(k-1)\alpha+l}\bigg) - \V_1\bigg)_{ij} +(\V_1)_{ij}\\
	&=\int\Phi_i(x)\Phi_j(x)\bigg(p^\alpha(x|x_{(k-1)\alpha+1})-\pi(x)\bigg)p(y|x)\tilde{K}(y,y)dxdy \\
	&\quad + \int\Phi_i(x)\Phi_j(x)\pi(x)p(y|x)\tilde{K}(y,y)dxdy.
	\end{align}
	Therefore:
	\begin{align}
	&\|\E\bigg(\hat{\Q}_{k\alpha+l}\hat{\Q}_{k\alpha+l}^T | x_{(k-1)\alpha+l}\bigg)\|\\
	&=\|\bigg(\E\bigg(\hat{\Q}_{k\alpha+l}\hat{\Q}_{k\alpha+l}^T | x_{(k-1)\alpha+l}\bigg) - \V_1\bigg) +\V_1 \|\\
	&\leq \|\bigg(\E\bigg(\hat{\Q}_{k\alpha+l}\hat{\Q}_{k\alpha+l}^T | x_{(k-1)\alpha+l}\bigg) - \V_1\| + \|\V_1\|\\\label{eqn:aux65}
	&=\underbrace{\sup_{\substack{\|{\bf v}\|_{\ell_2}\leq 1, \\\|{\bf u}\|_{\ell_2}\leq 1}} \int \big[ {\bf v}^T\Phi(x) \big] \big[ {\bf u}^T\Phi(x) \big] p^\alpha(x|x_{(k-1)\alpha+l}-\pi(x))p(y|x)\tilde{K}(y,y)dydx}_{T_1} + \|\V_1\|.
	\end{align}
	By (\ref{eqn:inf norm bound}) we have $\|{\bf v}^T\Phi\|_\infty\leq K_{max}^{1/2}$ and similarly $\|{\bf u}^T\Phi\|_\infty\leq K_{max}^{1/2}$, using definition of $K_{max}$ we have $\tilde{K}(y,y)\leq K_{max}$, we bound the term $T_1$ by:
	\begin{align}
	T_1&=\sup_{\substack{\|{\bf v}\|_{\ell_2}\leq 1, \\\|{\bf u}\|_{\ell_2}\leq 1}}\int \big[ {\bf v}^T\Phi(x) \big] \big[ {\bf u}^T\Phi(x) \big] \big(p^\alpha(x|x_{(k-1)\alpha+l}\big)-\pi(x))p(y|x)\tilde{K}(y,y)dxdy\\
	&\leq K_{max}^2\int |p^\alpha(x|x_{(k-1)\alpha+l}\big)-\pi(x)|dx\\
	&\leq \bar{\lambda},
	\end{align}
	where in the last inequality we use the definition of $\alpha(\epsilon)$. By definition of $\bar{\lambda}$ we have $\|\V_1\|\leq \bar{\lambda}$, therefore by ~(\ref{eqn:aux65}) we get:
	\begin{align}
	\|\E\bigg(\hat{\Q}_{k\alpha+l}\hat{\Q}_{k\alpha+l}^T | x_{(k-1)\alpha+l}\bigg)\|\leq 2\bar{\lambda}.
	\end{align}
	Similarly, we have:
	\begin{align}
	\|\E\bigg(\hat{\Q}_{k\alpha+l}^T\hat{\Q}_{k\alpha+l} | x_{(k-1)\alpha+l}\bigg)\|\leq 2\bar{\lambda}.
	\end{align}
	Combining the fact that:
	\begin{align}
	0&\preceq \E\bigg(\check{\Q}_k^{(l)}(\check{\Q}_k^{(l)})^T|\check{\Q}_{k-1}^{(l)}\bigg)\\
	&=\E\bigg(\hat{\Q}_{k\alpha+l}\hat{\Q}_{k\alpha+l}^T|x_{(k-1)\alpha+l}\bigg)-\E\bigg(\hat{\Q}_{k\alpha+l}|x_{(k-1)\alpha+l}\bigg)\E\bigg(\hat{\Q}_{k\alpha+l}|x_{(k-1)\alpha+l}\bigg)\\
	&\preceq\E\bigg(\hat{\Q}_{k\alpha+l}\hat{\Q}_{k\alpha+l}^T|x_{(k-1)\alpha+l}\bigg),\end{align}
	we have
	\begin{align}
	\|\E\bigg(\check{\Q}_k^{(l)}(\check{\Q}_k^{(l)})^T|\check{\Q}_{k-1}^{(l)}\bigg)\|\leq \|\E\bigg(\hat{\Q}_{k\alpha+l}\hat{\Q}_{k\alpha+l}^T | x_{(k-1)\alpha+l}\bigg)\|\leq 2\bar{\lambda}.
	\end{align}
	Similar argument yields:
	\begin{align}
	\|\E\bigg((\check{\Q}_k^{(l)})^T\check{\Q}_k^{(l)}|\check{\Q}_{k-1}^{(l)}\bigg)\|\leq 2\bar{\lambda}.
	\end{align}
	Therefore for $1\leq k\leq n_0$, $1\leq l\leq \alpha$:
	\begin{align}
	\max\bigg\{\|\E\bigg(\check{\Q}_k^{(l)}(\check{\Q}_k^{(l)})^T|\check{\Q}_{k-1}^{(l)}\bigg)\|, \|\E\bigg((\check{\Q}_k^{(l)})^T\check{\Q}_k^{(l)}|\check{\Q}_{k-1}^{(l)}\bigg)\| \bigg\}\leq  2\bar{\lambda}.
	\end{align} 
	The norm of predictable quadratic variation process of the matrix martingale $\{\check{\Q}_k^{(l)}\}_{k=1}^{n_0}$ can be bounded by:
	\begin{align}
	\|\sum_{k=1}^{n_0}\E\bigg(\check{\Q}_k^{(l)}(\check{\Q}_k^{(l)})^T|\check{\Q}_{k-1}^{(l)}\bigg)\|\leq\sum_{k=1}^{n_0}\|\E\bigg(\check{\Q}_k^{(l)}(\check{\Q}_k^{(l)})^T|\check{\Q}_{k-1}^{(l)}\bigg)\|\leq 2n_0 \bar{\lambda},\\
	\|\sum_{k=1}^{n_0}\E\bigg((\check{\Q}_k^{(l)})^T\check{\Q}_k^{(l)}|\check{\Q}_{k-1}^{(l)}\bigg)\|\leq\sum_{k=1}^{n_0}\|\E\bigg((\check{\Q}_k^{(l)})^T\check{\Q}_k^{(l)}|\check{\Q}_{k-1}^{(l)}\bigg)\|\leq 2n_0\bar{\lambda}.
	\end{align}
	By Matrix Freedman Inequality \cite{tropp2011freedman}, we have:
	\begin{align}\label{eqn:aux77}
	\mathbbm{P}(\|\frac{1}{n_0}\sum_{k=1}^{n_0}\check{\Q}_k^{(l)}\|\geq \epsilon/2)\leq 2N\exp(-\frac{(\epsilon n_0)^2/8}{ 2n_0\bar{\lambda}+\epsilon K_{\max}n_0/6}).
	\end{align}
	Next we note that:
	\begin{align}
	\bigg(\E(\hat{\Q}_{k\alpha+l}|x_{(k-1)\alpha+l})-\Q\bigg)_{i,j}=\int \Phi_i(x)\tilde{\Phi}_j(y)p(y|x)\big(p^\alpha(x|x_{{(k-1)}\alpha+l})-\pi(x)\big)dxdy,
	\end{align}
	\begin{align}
	& \|\E(\hat{\Q}_{k\alpha+l}|x_{(k-1)\alpha+l})-\Q\| \notag \\ =&\sup_{\substack{\|\v\|_{\ell_2}\leq 1\\\|\u\|_{\ell_2}\leq 1}} \v^T \bigg(\E(\hat{\Q}_{k\alpha+l}|x_{(k-1)\alpha+l})-\Q\bigg)\u\\\label{eqn:aux74}
	= & \sup_{\substack{\|{\bf v}\|_{\ell_2}\leq 1, \\\|{\bf u}\|_{\ell_2}\leq 1}}\int \big[ {\bf v}^T\Phi(x) \big] \big[ {\bf u}^T\tilde{\Phi}(y) \big] p(y|x) \bigg(p^\alpha(x|x_{(k-1)\alpha+1})-\pi(x)\bigg)dxdy\\\label{eqn:aux75}
	\leq & K_{\max}\int |p^\alpha(x|x_{(k-1)\alpha+1})-\pi(x)| dx\\\label{eqn:aux76}
	\leq & \epsilon/2,
	\end{align}
	where we used definition of mixing-time $\alpha(\epsilon)$ to get last inequality. Then combining ~(\ref{eqn:aux77}) and ~(\ref{eqn:aux76}) we get:
	\begin{align}
	\mathbbm{P}(\|\frac{1}{n_0}\sum_{k=1}^{n_0}\hat{\Q}_{k\alpha+l}-\Q\|\geq \epsilon)\leq 2N\exp(-\frac{(\epsilon n_0)^2/8}{ 2n_0\bar{\lambda}+{\epsilon}K_{\max}n_0/6}).
	\end{align}
	Using union bound we get:
	\begin{align}
	\mathbbm{P}(\|\hat{\Q}-\Q\|\geq \epsilon)=&\mathbbm{P}\bigg(\|\frac{1}{\alpha}\sum_{l=1}^\alpha\frac{1}{n_0}\sum_{k=1}^{n_0}\Q_{k\alpha+l}-\Q\|\geq \epsilon\bigg)\\
	&\leq \mathbbm{P}\bigg(\max_{1\leq l\leq\alpha}\|\frac{1}{n_0}\sum_{k=1}^{n_0}\Q_{k\alpha+l}-\Q\|\geq \epsilon\bigg)\\
	&\leq \alpha(\epsilon)\max_{1\leq l\leq\alpha}\mathbbm{P}\bigg(\|\frac{1}{n_0}\sum_{k=1}^{n_0}\Q_{k\alpha+l}-\Q\|\geq t\bigg)\\
	&\leq 2\alpha(\epsilon) N\exp(- \frac{n\epsilon^2/8}{2\bar{\lambda}\alpha(\epsilon) + \epsilon K_{\max}\alpha(\epsilon) / 6}).
	\end{align}
    To get Eqn.(\ref{eqn:matrix concentration 2}) from (\ref{eqn:matrix concentration 1}), we let $u={\log(\delta)}$ and write $\epsilon$ using $u$, then we need:
    \begin{align}
    \log \big(2\alpha(\epsilon)N\big)-\frac{n\epsilon^2/8}{\alpha(\epsilon)(2\bar{\lambda}+\epsilon K_{\max}/6)} \leq u,
    \end{align}
    using the fact that $\alpha(\epsilon)$ grows logarithmically (Lemma 5 in \cite{zhang2018spectral}) and solve the above inequality for $\epsilon$, we get (\ref{eqn:matrix concentration 2}). 
\end{proof}

\begin{lemma}
	Under Assumption 1, let $\V$ and $\tilde{\V}$ be the right singular matrix of $\Q$ and $\tilde{\Q}$ respectively, then we have:
	\begin{align}
	\inf_{\O\in \mathbbm{O}_{r\times r}}\|\V \O-\tilde{\V}\|\leq \frac{C}{\sigma_r(\P)}\bigg(\sqrt{\frac{t_{mix}\bar{\lambda}\log(2t_{mix}N/\delta)}{n}} + \frac{t_{mix}K_{\max}\log(2t_{mix}N/\delta)}{3n} \bigg)
	\end{align}
	with probability at least $1-\delta$, $\mathbbm{O}$ is the set of all $r\times r$ orthogonal matrices. 
\end{lemma}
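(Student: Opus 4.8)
The plan is to read this as a perturbation bound for the top-$r$ right singular subspace of $\Q$ under the perturbation $E := \hat{\Q} - \Q$, and to combine a Wedin-type $\sin\Theta$ theorem with the concentration estimate already proved in Lemma \ref{lemma:matrix concentration}. The crucial structural input is Lemma \ref{lemma:rank of P}: since $\Q$ has rank at most $r$, we have $\sigma_{r+1}(\Q)=0$, so the only relevant spectral gap is $\sigma_r(\Q)=\sigma_r(\P)$ itself, and no separate eigen-gap assumption is required. Because $\tilde{\Q}$ is the best rank-$r$ approximation of $\hat{\Q}$, its top-$r$ right singular vectors coincide with those of $\hat{\Q}=\Q+E$, so $\tilde{\V}$ is exactly the estimated top-$r$ right singular subspace.

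First I would apply Wedin's $\sin\Theta$ theorem to the right singular subspaces of $\Q$ and $\Q+E$. With $\sigma_{r+1}(\Q)=0$, a short case analysis (the bound is trivial when $\|E\|\geq\sigma_r(\Q)/2$ since $\|\sin\Theta\|\leq 1$, and otherwise Wedin applies with gap at least $\sigma_r(\Q)-\|E\|\geq \sigma_r(\Q)/2$) yields the clean universal bound
$$\|\sin\Theta(\V,\tilde{\V})\|\leq \frac{2\|E\|}{\sigma_r(\Q)}.$$
Next I would pass from principal angles to the orthogonal-Procrustes distance via the standard inequality $\inf_{\O\in\mathbbm{O}_{r\times r}}\|\V\O-\tilde{\V}\|\leq \sqrt{2}\,\|\sin\Theta(\V,\tilde{\V})\|$, valid for any two matrices with orthonormal columns. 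Chaining the two estimates gives $\inf_{\O}\|\V\O-\tilde{\V}\|\leq \frac{C\|E\|}{\sigma_r(\P)}$ for a universal constant $C$.

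Finally I would substitute the high-probability bound on $\|E\|=\|\hat{\Q}-\Q\|$ from \eqref{eqn:matrix concentration 2}, which holds with probability at least $1-\delta$, to obtain exactly the stated right-hand side. The only delicate point is the bookkeeping in the $\sin\Theta$ step: one must confirm that the exact rank-$r$ structure of $\Q$ collapses the gap condition to $\sigma_r(\Q)>0$ and that the perturbation cannot reorder the retained and discarded singular directions, which is precisely what lets the constant $C$ stay universal rather than depending on a separate spectral gap. Everything else is routine substitution of the concentration bound.
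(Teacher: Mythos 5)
Your proposal follows essentially the same route as the paper's proof: the paper likewise combines the Procrustes-to-principal-angle inequality $\inf_{\O}\|\V\O-\tilde{\V}\|\leq\sqrt{2}\,\|\sin\Theta(\V,\tilde{\V})\|$ (citing Lemma 1 of the Zhang--Xia singular-subspace paper), Wedin's theorem with the gap $\sigma_r(\P)$ coming from the exact rank-$r$ structure of $\Q$, and the concentration bound of Lemma \ref{lemma:matrix concentration}. Your explicit case analysis in the Wedin step (handling $\|E\|\geq\sigma_r(\Q)/2$ separately to get the universal constant) is in fact slightly more careful than the paper's direct invocation of Wedin, but it is a refinement of the same argument rather than a different one.
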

\begin{proof}
	By Lemma.1 of \cite{zhang2018singular}, we know:
	\begin{align}
	\inf_{\O\in \mathbbm{O}_{r\times r}}\|\V\O-\tilde{\V}\|\leq \sqrt{2}\|sin\Theta(\V,\tilde{\V})\|.
	\end{align}
	{Since $\tilde{\V}$ is also the right singular matrix of $\hat{\Q}$,} using Wedin's lemma \cite{wedin1972perturbation} we know:
	\begin{align}
	\|sin\Theta(\V,\tilde{\V})\|\leq \frac{\|\P-{\hat{\Q}}\|}{\sigma_r(\P)}.
	\end{align}
	Combining above two inequalities and Eqn.(\ref{eqn:matrix concentration 2}) finishes the proof. 
\end{proof}

\subsection{Proof of Theorem 1}
\begin{proof}We consider the KME $\mu_p(\cdot,\cdot)\in \HCal\times \THCal$ where $\HCal\times \THCal$ is the product RKHS, $K(x,y)=\Phi(x)^T\Phi(y)$ is the kernel of $\HCal$ and $\tilde{K}(x,y)=\tilde{\Phi}(x)^T \tilde{\Phi}(y)$ is the kernel of $\THCal$. 
Suppose that $\{ \Phi_i^{\circ} \}_{i=1}^{M}$ and $\{ \tilde{\Phi}_i^{\circ} \}_{i=1}^{\tilde{M}}$ are respectively the orthonormal bases of $\mathcal{H}$ and $\tilde{\mathcal{H}}$. $\{ \Phi_i^{\circ} \}_{i=1}^{M}$ is an orthonormal basis for $\HCal$ means
\begin{align}
\langle\Phi^{\circ}_i,\Phi^{\circ}_j \rangle_\HCal=\begin{cases}
1, & i=j,\\
0, & i\neq j.
\end{cases}
\end{align}
 Then there exist matrices ${\bf W} \in \mathbbm{R}^{N \times M}$ and $\tilde{\bf W} \in \mathbbm{R}^{N \times \tilde{M}}$ such that
\begin{align}
\Phi(\cdot) = {\bf W} \Phi^{\circ}(\cdot), \qquad \tilde{\Phi}(\cdot) = \tilde{\bf W} \tilde{\Phi}^{\circ}(\cdot).
\end{align} 
Note that $\{ \Phi_i^{\circ} \}_{i=1}^{M}$ is an orthonormal basis of $\HCal$ implies that $K(x,y)=[\Phi^{\circ}(x)]^T[\Phi^{\circ}(y)]$, this is because $K(x,\cdot)\in\HCal$, so we can write it as:
\begin{align}\label{eqn:thm1 k1}
	K(x,\cdot)=\sum_{i=1}^M a_{x,i}\Phi_i^{\circ}(\cdot),
\end{align}
where $a_{x,i}$ is the coefficient that depends on $x$. By reproducing property and above identity:
\begin{align}
\Phi_j^{\circ}(x)&=\langle K(x,\cdot), \Phi_j^{\circ}(\cdot)  \rangle_\HCal \\
& = \sum_{i=1}^M a_{x,i}\langle \Phi_i^{\circ},\Phi_j^{\circ} \rangle_\HCal\\
&=a_{x,j}.
\end{align}
Plug into (\ref{eqn:thm1 k1}):
\begin{align}
K(x,\cdot)=\sum_{i=1}^M a_{x,i}\Phi_i^{\circ}(\cdot)=\sum_{i=1}^M \Phi_i^{\circ}(x) \Phi_i^{\circ}(\cdot).
\end{align}

Note that $K(x,y) = \big[ \Phi^{\circ}(x) \big]^T \Phi^{\circ}(y)$ and $K(x,y) = \big[ \Phi(x) \big]^T \Phi(y) = \big[ \Phi^{\circ}(x) \big]^T {\bf W}^T {\bf W} \Phi^{\circ}(y)$,
therefore,
\begin{align}\label{eqn}
 \big[ \Phi^{\circ}(x) \big]^T \Phi^{\circ}(y) = \big[ \Phi^{\circ}(x) \big]^T {\bf W}^T {\bf W} \Phi^{\circ}(y). 
\end{align}
We can take $x_1, x_2, \ldots, x_M \in \Omega$ such that
\[ {\bf \Phi}^{\circ} = \big[ \Phi^{\circ}(x_1), \Phi^{\circ}(x_2), \cdots, \Phi^{\circ}(x_1) \big] \]
is a non-singular matrix. Then \eqref{eqn} implies
\[ \big( {\bf \Phi}^{\circ} \big)^T {\bf \Phi}^{\circ} = \big( {\bf \Phi}^{\circ} \big)^T {\bf W}^T{\bf W} {\bf \Phi}^{\circ},  \]
therefore,
\begin{align}
{\bf W}^T {\bf W} = {\bf I}_M. 
\end{align} 
Similar arguments imply
\begin{align}
\tilde{\bf W}^T \tilde{\bf W} = {\bf I}_{\tilde{M}}.
\end{align}  



If a matrix ${\bf M} \in \mathbb{R}^{N \times N}$ satisfies ${\bf M} = {\bf W} {\bf M}^{\circ} \tilde{\bf W}^{T}$ for some ${\bf M}^{\circ} \in \mathbb{R}^{M \times \tilde{M}}$, then
\begin{align}
\big\| {\bf M} \big\|_F^2 = & Tr\big( {\bf M}^T {\bf M} \big) = Tr\Big( \big( {\bf W} {\bf M}^{\circ} \tilde{\bf W}^T \big)^T \big( {\bf W} {\bf M}^{\circ} \tilde{\bf W}^T \big) \Big) \\ = & Tr \Big( \big({\bf M}^{\circ}\big)^T \big({\bf W}^T {\bf W}\big) {\bf M}^{\circ} \big( \tilde{\bf W}^T \tilde{\bf W} \big) \Big) \\ = & Tr \Big( \big({\bf M}^{\circ}\big)^T {\bf M}^{\circ} \Big) \\ = & \big\| {\bf M}^{\circ} \big\|_F^2.
\end{align}
Recall the inner product on $\HCal\times\THCal$ is given by the inner product on $\HCal$ and $\THCal$: for any $f_1 \otimes g_1 \in\HCal\times \THCal$ and $f_2 \otimes g_2\in \HCal\times \THCal$, $$\langle f_1 \otimes g_1,f_2 \otimes g_2 \rangle_{\HCal\times\THCal}=\langle f_1,f_2\rangle_{\HCal}\langle g_1,g_2\rangle_{\THCal}.$$
It follows that
\begin{align}\label{eqn:thm2 aux1}
	\Big\| \big[\Phi(\cdot)\big]^T {\bf M} \tilde{\Phi}(\cdot)  \Big\|_{\mathcal{H} \times \tilde{\mathcal{H}}}^2 = & \Big\| \big[\Phi^{\circ}(\cdot)\big]^T {\bf W}^T {\bf M} \tilde{\bf W} \tilde{\Phi}^{\circ}(\cdot)  \Big\|_{\mathcal{H} \times \tilde{\mathcal{H}}}^2 = \Big\| \big[\Phi^{\circ}(\cdot)\big]^T {\bf M}^{\circ} \tilde{\Phi}^{\circ}(\cdot)  \Big\|_{\mathcal{H} \times \tilde{\mathcal{H}}}^2 \\ = & \bigg\langle \sum_{i=1}^{M}\sum_{j=1}^{\tilde{M}} \Phi_i^{\circ}(\cdot) {\bf M}_{ij}^{\circ} \tilde{\Phi}_j(\cdot), \sum_{k=1}^{M}\sum_{p=1}^{\tilde{M}} \Phi_k^{\circ}(\cdot) {\bf M}_{kp}^{\circ} \tilde{\Phi}_p(\cdot) \bigg\rangle_{\mathcal{H} \times \tilde{\mathcal{H}}} \\ \label{eqn:thm2 aux2} = & \sum_{i,k=1}^M \sum_{j,p = 1}^{\tilde{M}} {\bf M}_{ij}^{\circ} {\bf M}_{kp}^{\circ} \big\langle \Phi_i^{\circ}, \Phi_k^{\circ} \big\rangle_{\mathcal{H}} \big\langle \tilde{\Phi}_j^{\circ}, \tilde{\Phi}_p^{\circ} \big\rangle_{\tilde{\mathcal{H}}} \\ \label{eqn:thm2 aux3} = & \sum_{i=1}^M\sum_{j=1}^{\tilde{M}} \big( {\bf M}_{ij}^{\circ} \big)^2 \\ = & \big\| {\bf M}^{\circ} \big\|_F^2 = \big\| {\bf M}\big\|_F^2.
\end{align}
We can conclude that 
\begin{equation}\label{iso} \big\|\Phi(\cdot)^T{\bf M}\tilde{\Phi}(\cdot)\big\|_{\mathcal{H} \times \tilde{\mathcal{H}}} = \|{\bf M}\|_F. \end{equation}

The matrix ${\bf P}$ in our paper satisfies
\begin{align}
{\bf P} = \int_{\Omega \times \Omega} p(x,y)\Phi(x)\big[\tilde{\Phi}(y)\big]^T dxdy = {\bf W} \bigg( \int_{\Omega \times \Omega} p(x,y) \Phi^{\circ}(x)\big[\tilde{\Phi}^{\circ}(y)\big]^T dxdy \bigg) \tilde{\bf W}^T.
\end{align}
We also have
\begin{align} \hat{\bf P} = \frac{1}{n} \sum_{t=1}^n \Phi(X_t)\big[\tilde{\Phi}(X_{t+1})\big]^T = {\bf W}\bigg( \frac{1}{n} \sum_{t=1}^n \Phi^{\circ}(X_t)\big[\tilde{\Phi}^{\circ}(X_{t+1})\big]^T \bigg)\tilde{\bf W}^T. 
\end{align}
Therefore, $\hat{\bf U} = {\bf W}{\bf \Gamma}$ for some ${\bf \Gamma} \in \mathbb{R}^{M \times N}$ and $\hat{\bf V} = \tilde{\bf W}\tilde{\bf \Gamma}$ for some $\tilde{\bf \Gamma} \in \mathbb{R}^{\tilde{M} \times N}$. It further implies

\begin{align} \tilde{\bf P} = \hat{\bf U} \hat{\bf \Sigma}_{[1 \ldots r]} \hat{\bf V}^T = {\bf W} \big( {\bf \Gamma}  \hat{\bf \Sigma}_{[1 \ldots r]} \tilde{\bf \Gamma}^T \big) \tilde{\bf W}^T. 
\end{align}

Then using \eqref{iso} we know that:
\begin{align}
\|\mu_p-\hat{\mu}_p\|_{\HCal\times\THCal}=\|\P-\tilde{\P}\|_F \leq \sqrt{2r}\|\P-\tilde{\P}\|,
\end{align}
where the inequality follows the fact that $\P$ and $\tilde{\P}$ are both of rank at most $r$ hence $\P-\tilde{\P}$ has rank at most $2r$. 
{According to Weyl's inequality \cite{weyl1912asymptotische}, $\| \hat{\P} - \tilde{\P} \| = \sigma_{r+1}(\hat{\P}) \leq \| \hat{\P} - \P \|$. It follows that
	\[ \| \P-\tilde{\P} \| \leq \| \P - \hat{\P} \| + \| \hat{\P} - \tilde{\P} \| \leq 2 \| \hat{\P} - \P \|. \]}
Using Eqn.(\ref{eqn:matrix concentration 2}) we finish the proof. 
\end{proof}

\section{Proof of Results in Section 4}
\subsection{Representation of $p(\cdot|\cdot)$}
\begin{lemma}\label{lemma:rep of p}
Under Assumption 1-2, $p(\cdot|\cdot)$ has following representation:
\begin{align}
p(y|x)=\Phi(x)^T\C^{-1}\P\tilde{\C}^{-1}\tilde{\Phi}(y).
\end{align}
where $\P:=\int \pi(x)p(y|x)\Phi(x)\tilde{\Phi}(y)^Tdxdy$, $\C:=diag[\rho_1,\cdots,\rho_N]$ and $\tilde{\C}:=diag[\tilde{\rho}_1,\cdots,\tilde{\rho}_N]$.
\end{lemma}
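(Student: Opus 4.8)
The plan is to use Assumption 2 to expand $p(\cdot\mid\cdot)$ in the product feature basis, and then identify the coefficient matrix by pairing against $\pi(x)\Phi_k(x)\tilde\Phi_l(y)$ and invoking the orthogonality of the features. The computation is short; essentially everything reduces to bookkeeping with the orthogonality relations.

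First I would record that, because $\{\Phi_i\}_{i=1}^N$ and $\{\tilde\Phi_j\}_{j=1}^N$ are bases of $\HCal$ and $\THCal$, the tensor products $\{\Phi_i\otimes\tilde\Phi_j\}$ span the product space $\HCal\times\THCal$. Assumption 2 places $p(\cdot\mid\cdot)$ in $\HCal\times\THCal$, so there is a matrix $A\in\mathbb{R}^{N\times N}$ with
\[ p(y\mid x)=\Phi(x)^T A\,\tilde\Phi(y). \]
The orthogonality hypotheses on the features read $\int\pi(x)\Phi_i(x)\Phi_k(x)\,dx=\rho_i\delta_{ik}$ (the $L^2(\pi)$-orthogonality, with squared norms $\rho_i$) and $\int\tilde\Phi_j(y)\tilde\Phi_l(y)\,dy=\tilde\rho_j\delta_{jl}$ (the $L^2$-orthogonality).

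Next I would substitute the expansion into $\P=\int\pi(x)p(y\mid x)\Phi(x)\tilde\Phi(y)^T\,dxdy$ and compute entrywise. Writing $p(y\mid x)=\sum_{i,j}A_{ij}\Phi_i(x)\tilde\Phi_j(y)$, the $(k,l)$ entry becomes
\[ \P_{kl}=\sum_{i,j}A_{ij}\Big(\int\pi(x)\Phi_i(x)\Phi_k(x)\,dx\Big)\Big(\int\tilde\Phi_j(y)\tilde\Phi_l(y)\,dy\Big)=\rho_k\tilde\rho_l A_{kl}, \]
so that $\P=\C A\tilde\C$. Since each $\rho_i,\tilde\rho_j>0$ (they are squared norms of nonzero basis functions), $\C$ and $\tilde\C$ are invertible, giving $A=\C^{-1}\P\tilde\C^{-1}$; substituting back into $p(y\mid x)=\Phi(x)^T A\tilde\Phi(y)$ yields the claimed identity.

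The only step that requires genuine care is the first one: justifying that Assumption 2, combined with the basis property of the features, produces a finite-dimensional representation $p(y\mid x)=\Phi(x)^T A\,\tilde\Phi(y)$ with a well-defined coefficient matrix. Once that representation is in hand, the remainder is routine orthogonality bookkeeping, and the positivity of the $\rho_i,\tilde\rho_j$ (needed for invertibility of $\C,\tilde\C$) is the only other thing to note.
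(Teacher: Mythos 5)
Your proof is correct and takes essentially the same route as the paper's: both expand $p$ in the feature basis (which is where Assumption 2 enters) and use the $L^2(\pi)$/$L^2$ orthogonality of the features to identify the coefficient matrix as $\C^{-1}\P\tilde{\C}^{-1}$. The only difference is organizational—the paper computes the expansion coefficients directly in the orthonormalized basis $\C^{-1/2}\Phi$, $\tilde{\C}^{-1/2}\tilde{\Phi}$, whereas you posit an unknown coefficient matrix $A$ and solve $\P=\C A\tilde{\C}$ for it; if anything, your version makes the role of Assumption 2 slightly more explicit.
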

\begin{proof}
We know that $\Upsilon(\cdot):=\C^{-1/2}\Phi(\cdot)$ is a vector of orthonormal functions in $L^2(\pi)$, and $\tilde{\Upsilon}(\cdot):=\tilde{\C}^{-1/2}\tilde{\Phi}(\cdot)$ is a vector of orthonormal functions in $L^2$. Then the coefficient matrix of $p(\cdot|\cdot)$ in expansion under $L^2(\pi)\times L^2$ inner product is given by:
\begin{align}
\int \pi(x)p(y|x)\C^{-1/2}\Phi(x)\tilde{\Phi}(y)^T\tilde{\C}^{-1/2}dxdy&=\C^{-1/2}\bigg(\int\pi(x)p(y|x)\Phi(x)\tilde{\Phi}(y)^Tdxdy\bigg)\tilde{\C}^{-1/2}\\
&=\C^{-1/2}\P\tilde{\C}^{-1/2}.
\end{align}
Then we have:
\begin{align}
p(y|x)&=\Upsilon(x)^T\C^{-1/2}\P\tilde{\C}^{-1/2}\tilde{\Upsilon}(y)\\
&=\Phi(x)^T\C^{-1}\P\tilde{\C}^{-1}\Phi(y).
\end{align}
\end{proof}

\subsection{Proof of Theorem 2}\label{proof:thm2}
\begin{proof} 
To simplify notation, denote $\R:=\C^{-1/2}\P\tilde{\C}^{-1/2}$. Let $\U^{(\rho)}\BSigma_{[1\cdots r]}^{(\rho)}(\V^{(\rho)})^T=\R$ be the SVD of $\R$. Similarly we use $\hat{\R}:=\C^{-1/2}\hat{\P}\tilde{\C}^{-1/2}$ with SVD  $\hat{\U}^{(\rho)}\hat{\BSigma}^{(\rho)}(\hat{\V}^{(\rho)})^T=\hat{\R}$. Let $\tilde{\R}:=\hat{\U}^{(\rho)}\hat{\BSigma}_{[1\cdots r]}^{(\rho)}(\hat{\V}^{(\rho)})^T$ be the best rank $r$ approximation of $\hat{\R}$. Use $\mathbbm{O}_{r\times r}$ to denote set of all $r\times r$ orthogonal matrices, let $\O\in\mathbbm{O}_{r\times r}$,  using triangle inequality we have:
	\begin{align}
	&\|\Psif(x)-\Psif(z)\|=\|\O\Psif(x)-\O\Psif(z)\| \notag \\ \leq & \|\O\Psif(x)-\hat{\Psif}(x)\| + \|\hat{\Psif}(x)-\hat{\Psif}(z)\| + \|\O\Psif(z)-\hat{\Psif}(z)\|,
	\end{align}
	this yields:
	\begin{align}
	dist(x,z)-\widehat{dist}(x,z)&=\|\Psif(x)-\Psif(z)\| - \|\hat{\Psif}(x)-\hat{\Psif}(z)\|\\
	&\leq  \|\O\Psif(x)-\hat{\Psif}(x)\| + \|\O\Psif(z)-\hat{\Psif}(z)\|.
	\end{align}
	Similarly we can get:
	\begin{align}
	\widehat{dist}(x,z)-dist(x,z)&=\|\hat{\Psif}(x)-\hat{\Psif}(z)\| -\|\Psif(x)-\Psif(z)\|\\
	&\leq  \|\O\Psif(x)-\hat{\Psif}(x)\| + \|\O\Psif(z)-\hat{\Psif}(z)\|.
	\end{align}
	Therefore, taking infimum over $\O\in\mathbbm{O}_{r\times r}$ we have:
	\begin{align}
	\bigg|dist(x,z)-\widehat{dist}(x,z)\bigg|&\leq \inf_{\O\in \mathbbm{O}_{r\times r}} \|\O\Psif(x)-\hat{\Psif}(x)\| + \|\O\Psif(z)-\hat{\Psif}(z)\|\\\notag
	&=\inf_{\O\in \mathbbm{O}_{r\times r}}\|\Phi(x)^T\C^{-1/2}(\U^{(\rho)}\BSigma_{[1\cdots r]}^{(\rho)} \O^T-\hat{\U}^{(\rho)}\hat{\BSigma}_{[1\cdots r]}^{(\rho)})\|\\
	&\quad\quad\quad\ + \|\Phi(z)^T\C^{-1/2}(\U^{(\rho)}\BSigma_{[1\cdots r]}^{(\rho)} \O^T-\hat{\U}^{(\rho)}\hat{\BSigma}_{[1\cdots r]}^{(\rho)})\|\\
	&\leq \inf_{\O\in \mathbbm{O}_{r\times r}} 2L_{max}^{1/2}\|\U^{(\rho)}\BSigma_{[1\cdots r]}^{(\rho)}\O^T-\hat{\U}^{(\rho)}\hat{\BSigma}_{[1\cdots r]}^{(\rho)}\|\\
	&=\inf_{\O\in \mathbbm{O}_{r\times r}} 2L_{max}^{1/2}\|\R\V^{(\rho)}\O^T-\tilde{\R}\hat{\V}^{(\rho)}\|\\
	&=\inf_{\O\in \mathbbm{O}_{r\times r}} 2L_{max}^{1/2}\|\R(\V^{(\rho)}\O^T-\hat{\V}^{(\rho)})+(\R-\tilde{\R})\hat{\V}^{(\rho)}\|\\
	&\leq \inf_{\O\in \mathbbm{O}_{r\times r}} 2L_{max}^{1/2} \bigg(\|\R\|\cdot\|\V^{(\rho)}\O^T-\hat{\V}^{(\rho)}\| + \|\R-\tilde{\R}\|\cdot \|\hat{\V}^{(\rho)}\|\bigg)\\
	&= \inf_{\O\in \mathbbm{O}_{r\times r}} 2L_{max}^{1/2} \bigg(\|\R\|\cdot\|\V^{(\rho)}\O^T-\hat{\V}^{(\rho)}\| + \|\R-\tilde{\R}\|\bigg)\\
	&\leq 2L_{max}^{1/2} (\|\R\|\sqrt{2}\|sin\Theta (\V^{(\rho)},\hat{\V}^{(\rho)})\| + \|\R-\tilde{\R}\|)\\
	&\leq 2L_{max}^{1/2} (\sqrt{2}\|\R\|\frac{\|\R-\tilde{\R}\|}{\sigma_r(\R)} + \|\R-\tilde{\R}\|)\\\label{eqn:thm3 eq1}
	&\leq 2L_{max}^{1/2} (1 + \sqrt{2}\kappa(\R)) \|\R-\tilde{\R}\|\\\label{eqn:thm3 eq2}
	&\leq 4L_{max}^{1/2} (1 + \sqrt{2}\kappa(\R)) \|\R-\hat{\R}\|\\
	&= 4L_{max}^{1/2} (1 + \sqrt{2}\kappa(\R)) \|\C^{-1/2}(\P-\hat{\P})\tilde{\C}^{-1/2}\|\\
	&\leq 4\sqrt{\frac{L_{max}}{\rho_N\tilde{\rho}_N}} \bigg[1 + \sqrt{2}\kappa(\R)\bigg] \|\P-\hat{\P}\|,
	\end{align}
	where from (\ref{eqn:thm3 eq1}) to (\ref{eqn:thm3 eq2}) we use Weyl's inequality \cite{weyl1912asymptotische} and the fact that rank of $\tilde{\R}$ is $r$ to get $\sigma_{r+1}(\hat{\R})\leq \|\R-\hat{\R}\|$, therefore,
	\begin{equation}\label{Weyl} \|\R-\tilde{\R}\|\leq \|\R-\hat{\R}\| + \|\tilde{\R}-\hat{\R}\|=\|\R-\hat{\R}\| + \sigma_{r+1}(\hat{\R})\leq 2\|\R-\hat{\R}\|. \end{equation}
		Note that $\R$ is the coefficient matrix of $p(y|x)$ in expansion with bases $\{\frac{\Phif_i(\cdot)}{\sqrt{\rho_i}}\}_{i=1}^N\times\{\frac{\tilde{\Phif}_i(\cdot)}{\sqrt{\tilde{\rho}_i}}\}_{i=1}^N$ using $L^2(\pi)\times L^2$ inner product, equivalently it is coefficient matrix of $\sqrt{\pi(x)}p(y|x)$ in expansion with bases $\{\frac{\Phif_i(\cdot)\sqrt{\pi(\cdot)}}{\sqrt{\rho_i}}\}_{i=1}^N\times\{\frac{\tilde{\Phif}_i(\cdot)}{\sqrt{\tilde{\rho}_i}}\}_{i=1}^N$ using $L^2\times L^2$ inner product. By assumption 2, $\sqrt{\pi(x)}p(y|x)$ can be represented using $\{\frac{\Phif_i(\cdot)\sqrt{\pi(\cdot)}}{\sqrt{\rho_i}}\}_{i=1}^N\times\{\frac{\tilde{\Phif}_i(\cdot)}{\sqrt{\tilde{\rho}_i}}\}_{i=1}^N$, therefore we have $\kappa\bigg(\sqrt{\pi(x)}p(y|x)\bigg)=\kappa(\R)$.
	We conclude proof using Eqn.(\ref{eqn:matrix concentration 2}). 
\end{proof}

\subsection{Proof of Theorem 3}
\begin{proof}
We show that $\|p(\cdot|\cdot)-\hat{p}(\cdot|\cdot)\|_{L^2(\pi)\times L^2}=\|\R-\tilde{\R}\|_F$. From Lemma \ref{lemma:rep of p} and definition of $\hat{p}(y|x)$ we have:
\begin{align}
p(y|x)&=\Phi(x)^T\C^{-1}\P\tilde{\C}^{-1}\tilde{\Phi}(y)=\Phi(x)^T\C^{-1/2}\R\tilde{\C}^{-1/2}\tilde{\Phi}(y),\\
\hat{p}(y|x)&=\Phi(x)^T\C^{-1/2}\tilde{\R}\tilde{\C}^{-1/2}\tilde{\Phi}(y).
\end{align}
Then we have:
\begin{align}
p(y|x)-\hat{p}(y|x)=\Phi(x)^T\C^{-1/2}(\R-\tilde{\R})\tilde{\C}^{-1/2}\tilde{\Phi}(y).
\end{align}
Recall that $\Upsilon(\cdot):=\C^{-1/2}\Phi(\cdot)$ is a vector of orthonormal functions in $L^2(\pi)$ and $\tilde{\Upsilon}(\cdot):=\tilde{\C}^{-1/2}\tilde{\Phi}(\cdot)$ is a vector of orthonormal functions in $L^2$. Then we have:
\begin{align}
\|p(\cdot|\cdot)-\hat{p}(\cdot|\cdot)\|_{L^2(\pi)\times L^2}&=\|\Phi(\cdot)^T\C^{-1/2} (\R-\tilde{\R}) \tilde{\C}^{-1/2}\tilde{\Phi}(\cdot) \|\\
&=\|\Upsilon(\cdot)(\R-\tilde{\R})\tilde{\Upsilon}(\cdot)\|_{L^2(\pi)\times L^2}\\\label{eqn:thm4 eq1}
&=\|\R-\tilde{\R}\|_F.
\end{align}
Combining \eqref{Weyl} with (\ref{eqn:thm4 eq1}) yields
\begin{align}
\|p(\cdot|\cdot)-\hat{p}(\cdot|\cdot)\|_{L^2(\pi)\times L^2}&=\|\R-\tilde{\R}\|_F\\
&\leq \sqrt{r}\|\R-\tilde{\R}\|\\
&\leq 2\sqrt{r} \|\R-\hat{\R}\|\\
&= 2\sqrt{r}\|\C^{-1/2}(\P-\hat{\P})\tilde{\C}^{-1/2}\|\\
&\leq 2\sqrt{\frac{r}{\rho_N\tilde{\rho}_N}}\|\P-\hat{\P}\|.
\end{align}
We conclude the proof upon using (\ref{eqn:matrix concentration 2}).
\end{proof}

\section{Proof of Results in Section 5}
\subsection{Technical Lemmas}
\begin{lemma}\label{lemma:p star}
	Under Assumption 1-2, for each $q^*_i(\cdot)$, it can be written as:
	\begin{align}\label{eqn:p star svd}
	q^*_i(\cdot)=\sum_{k=1}^r z_{ik}v_k(\cdot),
	\end{align}
	where $v_k(\cdot)$ are the right singular functions for $p(\cdot|\cdot)$. Each $q^*_i(\cdot)$ is a probability density function. 
\end{lemma}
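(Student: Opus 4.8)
The plan is to solve the inner minimization in \eqref{kmeans} for the fixed optimal partition $(\Omega^*_1,\ldots,\Omega^*_m)$ and then read off the structure of each $q^*_i$. Fixing $i$ and writing $\|p(\cdot|x)-q_i(\cdot)\|_{L^2}^2=\int(p(z|x)-q_i(z))^2\,dz$, I would swap the order of integration so that the $i$-th summand becomes $\int\big[\int_{\Omega^*_i}\pi(x)(p(z|x)-q_i(z))^2\,dx\big]\,dz$. For each fixed $z$ the bracketed quantity is a scalar weighted least-squares problem in the value $q_i(z)$, minimized at the $\pi$-weighted average; since the minimization decouples pointwise in $z$, the unconstrained minimizer over $L^2$ is
\[
q^*_i(z)=\frac{1}{\pi(\Omega^*_i)}\int_{\Omega^*_i}\pi(x)\,p(z|x)\,dx .
\]

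Next I would substitute Assumption 1, $p(z|x)=\sum_{k=1}^r\sigma_k u_k(x)v_k(z)$, and interchange the finite sum with the integral to obtain
\[
q^*_i(z)=\sum_{k=1}^r\bigg(\frac{\sigma_k}{\pi(\Omega^*_i)}\int_{\Omega^*_i}\pi(x)u_k(x)\,dx\bigg)v_k(z),
\]
which is precisely the asserted expansion with $z_{ik}:=\frac{\sigma_k}{\pi(\Omega^*_i)}\int_{\Omega^*_i}\pi(x)u_k(x)\,dx$. Thus $q^*_i$ lies in the $r$-dimensional span of the right singular functions $\{v_k\}$. Under Assumption 2 each $v_k\in\THCal$, so this span sits inside $\THCal$; the unconstrained $L^2$ minimizer is therefore feasible for the constrained problem over $\THCal$, and the two optima coincide.

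Finally I would verify that $q^*_i$ is a probability density. Nonnegativity is immediate since $p(z|x)\ge 0$ and $\pi(x)\ge 0$ make the integrand nonnegative for every $z$. Normalization follows by integrating over $z$ and using $\int p(z|x)\,dz=1$ for each $x$:
\[
\int q^*_i(z)\,dz=\frac{1}{\pi(\Omega^*_i)}\int_{\Omega^*_i}\pi(x)\bigg(\int p(z|x)\,dz\bigg)dx=\frac{\pi(\Omega^*_i)}{\pi(\Omega^*_i)}=1 .
\]

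The step I expect to require the most care is the feasibility argument in the second paragraph: one must confirm that the Hilbert-space constraint $q_i\in\THCal$ is inactive, i.e.\ that the unconstrained $L^2$ minimizer already belongs to $\THCal$. This is exactly where Assumption 2 ($p\in\HCal\times\THCal$) enters, since it guarantees that the Schmidt decomposition of $p$ has its right singular functions in $\THCal$, allowing the $L^2$ projection argument to transfer verbatim to the constrained problem without changing the minimizer.
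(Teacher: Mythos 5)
Your proof is correct and follows essentially the same route as the paper's: identify $q^*_i$ as the $\pi$-weighted average of $p(\cdot|x)$ over $\Omega^*_i$ by solving the pointwise weighted least-squares problem, substitute the rank-$r$ decomposition of Assumption 1 (taken, as the paper notes without loss of generality, to be the singular value decomposition), and verify nonnegativity and normalization via Fubini. Your explicit argument that the unconstrained $L^2$ minimizer is feasible for the constrained problem over $\THCal$ (via Assumption 2) makes precise a step the paper passes over silently, but it is the same proof in substance.
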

\begin{proof}
$\{\Omega^*_i\}_{i=1}^m$ forms the best partition in terms of solving k-means problem. Then on each $\Omega^*_i$, we must have $q^*_i(\cdot)$ solves the problem 
\begin{align}
\min_{q_i(\cdot)\in\THCal}\int_{{\Omega_i^*}} \pi(x)\|p(\cdot|x)-q_i(\cdot)\|_{L^2}^2 dx.
\end{align}
This is solved by 
\begin{align}\label{eqn: p star}
q^*_i(\cdot)=\frac{1}{\pi(\Omega^*_i)}\int_{\Omega^*_i}\pi(x)p(\cdot|x)dx.
\end{align}
To show $q^*_i(\cdot)$ is probability distribution, note that $q^*_i(y)\geq 0$ for all $y$ because $p(y|x)\geq 0$ for all $y$ and $x$. Furthermore, we have:
\begin{align}
\int_{\Omega^*} q^*_i(y)dy&=\frac{1}{\pi(\Omega_i)}\int_{\Omega}\int_{\Omega_i^*}\pi(x)p(y|x)dxdy\\
&=\frac{1}{\pi(\Omega^*_i)}\int_{\Omega^*_i}\int_{\Omega}\pi(x)p(y|x)dydx\\
&=\frac{1}{\pi(\Omega^*_i)}\int_{\Omega^*_i} \pi(x)dx\\
&=1.
\end{align}
Without loss of generality, we assume that decomposition $p(y|x) = \sum_{i=1}^r \sigma_k u_k(x) v_k(x)$ in Assumption 1 is the SVD, {\it i.e.},
\[ \sigma_k = {\bf \Sigma}^{(\rho)}_{k,k}, \quad u_k(\cdot) = \big({\bf U}^{(\rho)}_{[:,k]}\big)^T \C^{-1/2} \Phi(\cdot), \quad v_k(\cdot) = \big({\bf V}^{(\rho)}_{[:,k]}\big)^T \tilde{\C}^{-1/2} \tilde{\Phi}(\cdot). \]
To prove Eqn.(\ref{eqn:p star svd}), we plug in SVD of $p(y|x)$ into Eqn.(\ref{eqn: p star}):
\begin{align}
{q}^*_i(\cdot)&=\frac{1}{\pi(\Omega^*_i)}\int_{\Omega^*_i}\pi(x)p(\cdot|x)dx\\
&=\sum_{k=1}^r\frac{{\sigma_k}}{\pi(\Omega^*_i)}\int_{\Omega^*_i}\pi(x)u_k(x)v_k(\cdot) dx.
\end{align}
Taking $z_{ik}:=\frac{{\sigma_k}}{\pi(\Omega^*_i)}\int_{\Omega^*_i}\pi(x)u_k(x)dx$, we finish the proof. 
\end{proof}


%

Next lemma is key to prove Theorem 5. Before proving the lemma, we define a function $T(\cdot|\cdot)$ that represents the perturbation of $p(\cdot|x)$ from its closest probability distribution $q^*_i(\cdot)$:
\begin{align}
T(\cdot|x)=\sum_{i=1}^m\mathbbm{1}_{\Omega^*_i}(x)\bigg(p(\cdot|x)-q^*_i(\cdot)\bigg).
\end{align}
By definition of $\Delta_2^2$ we know that:
\begin{align}
\|T(\cdot|\cdot)\|^2_{L^2(\pi)\times L^2}=\Delta_2^2.
\end{align}
Equivalently, one can rewrite the k-means problem in $\mathbbm{R}^r$ using the $\Psif(\cdot)$ coordinate as:
	\begin{align*}
	\min_{(\Omega_1,\cdots, \Omega_m)}\min_{s_1,\cdots,s_k\in\mathbbm{R}^r}\sum_{i=1}^m\int_{\Omega_i}\pi(x)\| \Psif(x)-s_i\|_{l_2}^2dx.
	\end{align*}\\
We showed in Lemma 6 that $s_i^*=[z_{i1},\cdots,z_{ir}]^T$, then we construct function $E(\cdot):\Omega\rightarrow \mathbbm{R}^r$ by:
\begin{align}
E(x)=\sum_{i=1}^m\mathbbm{1}_{\Omega^*_i}(x) (\Psif(x)-[z_{i1},\cdots,z_{ir}]^T).
\end{align} 
It is easy to verify that
\begin{align}
\Psif(x)=\Z\theta(x)+E(x),
\end{align}
where $\Z=[z_{ij}]_{r\times m}$  is given in Lemma 5, $\theta:=[\mathbbm{1}_{\Omega^*_1},\cdots,\mathbbm{1}_{\Omega^*_m}]^T$, moreover, since $E(\cdot)$ is the counterpart of $T(\cdot|x)$ in $\mathbbm{R}^r$, we have:
\begin{align}
\|E(\cdot)\|^2_{L^2(\pi)}=\Delta_2^2.
\end{align}
We further define following quantities which are useful for the statement of the lemma:
\begin{align}
&\delta_k^2:=\min_{l\neq k}\|q^*_l-q^*_k\|^2_{L^2}=\min_{1\neq k}\|\Z_{*l}-\Z_{*k}\|,\\\label{eqn: bar Psif}
 &\overline{\Psif}(x):=\hat{\Z}\hat{\theta}(x) \qquad \hat{\Z}_{*i} = \hat{s}_i, \qquad \hat{\theta}(x) := [\mathbbm{1}_{\hat{\Omega}_1^*},\cdots,\mathbbm{1}_{\hat{\Omega}_m^*}]^T.
\end{align}
We use $\mathbbm{O}_{r\times r}$ to denote the set of all $r\times r$ orthogonal matrices. For any $\O\in\mathbbm{O}_{r \times r}$, we define:
\begin{align}
S_k(\O\overline{\Psif}) := \big\{  x \in \Omega_k^* : \| \O\overline{\Psif}(x) - \Z_{*k} \| \geq \delta_k/2 \big\}.
\end{align}

For the ease of notation, we will use $S_k$ instead of $S_k(\O\overline{\Psif})$. For a vector valued function $A(\cdot):\Omega\rightarrow \mathbbm{R}^r$, let its $L^2(\pi)$ norm to be $\|A(\cdot)\|_{L^2(\pi)}:=\Big(\int_{\Omega} \pi(x)\|A(x)\|_{\ell_2}^2 dx\Big)^{1/2}$.
\begin{lemma}\label{lemma:misclassification}
 With quantities defined above, for any $\O \in \mathbbm{O}_{r \times r}$ we have:
	\begin{align}\label{eqn:thm 5.1 main1}
	\sum_{k=1}^m \pi(S_k)\delta_k^2&\leq 16\Big(\|\O\hat{\Psif}(\cdot)-\Psif(\cdot)\|_{L^2(\pi)} +\|E(\cdot)\|_{L^2(\pi)}\Big)^2\\\label{eqn:misclf aux1}
	&=16\Big(\|\U^{(\rho)}\BSigma^{(\rho)}_{[1\cdots r]}\O-\hat{\U}^{(\rho)}\hat{\BSigma}^{(\rho)}_{[1\cdots r]}\|_F+ \Delta_2\Big)^2.
	\end{align}
	In addition, if for any $1\leq k\leq m$ we have:
	\begin{align}\label{eqn:thm 5.1 condition 1}
	\frac{16\Big(\|\U^{(\rho)}\BSigma^{(\rho)}_{[1\cdots r]}\O-\hat{\U}^{(\rho)}\hat{\BSigma}^{(\rho)}_{[1\cdots r]}\|_F + \Delta_2\Big)^2}{\delta_k^2} < \pi(\Omega^*_k).
	\end{align}
	Then every data point on $G:=\cup_{k=1}^r (\Omega^*_k \setminus S_k)$ is correctly classified. 
\end{lemma}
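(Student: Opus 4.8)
The plan is to establish the two displayed claims separately. For the bound~(\ref{eqn:thm 5.1 main1})--(\ref{eqn:misclf aux1}), the backbone is the $k$-means optimality of $\overline{\Psif}=\hat{\Z}\hat{\theta}$: by construction it is the best $m$-piecewise-constant approximation of $\hat{\Psif}$ in $L^2(\pi)$, so $\|\hat{\Psif}-\overline{\Psif}\|_{L^2(\pi)}\le\|\hat{\Psif}-B\|_{L^2(\pi)}$ for every piecewise-constant competitor $B$. First I would take $B=\O^T\Z\theta$, which is constant on each true cluster $\Omega^*_k$ with center $\O^T\Z_{*k}$. Using orthogonal invariance of the Euclidean norm and $\Z\theta=\Psif-E$, the right-hand side equals $\|\O\hat{\Psif}-\Z\theta\|_{L^2(\pi)}\le\|\O\hat{\Psif}-\Psif\|_{L^2(\pi)}+\|E\|_{L^2(\pi)}$. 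A triangle inequality $\|\O\overline{\Psif}-\Psif\|\le\|\overline{\Psif}-\hat{\Psif}\|+\|\O\hat{\Psif}-\Psif\|$ then gives $\|\O\overline{\Psif}-\Psif\|_{L^2(\pi)}\le 2\|\O\hat{\Psif}-\Psif\|_{L^2(\pi)}+\|E\|_{L^2(\pi)}$, hence $\|\O\overline{\Psif}-\Psif\|_{L^2(\pi)}+\|E\|_{L^2(\pi)}\le 2\big(\|\O\hat{\Psif}-\Psif\|_{L^2(\pi)}+\|E\|_{L^2(\pi)}\big)$.

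Next I would convert the membership condition defining $S_k$ into an integral bound. On $S_k\subseteq\Omega^*_k$ one has $\delta_k^2\le 4\|\O\overline{\Psif}(x)-\Z_{*k}\|^2$ pointwise, and since $\Z_{*k}=\Z\theta(x)$ there, summing over the disjoint $S_k$ yields $\sum_k\pi(S_k)\delta_k^2\le 4\|\O\overline{\Psif}-\Z\theta\|_{L^2(\pi)}^2\le 4\big(\|\O\overline{\Psif}-\Psif\|_{L^2(\pi)}+\|E\|_{L^2(\pi)}\big)^2$. Combining with the previous paragraph produces the factor $16$ and inequality~(\ref{eqn:thm 5.1 main1}). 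To reach the Frobenius form~(\ref{eqn:misclf aux1}), I would write $\O\hat{\Psif}(x)-\Psif(x)=\big(\O\hat{\BSigma}^{(\rho)}_{[1\cdots r]}(\hat{\U}^{(\rho)})^T-\BSigma^{(\rho)}_{[1\cdots r]}(\U^{(\rho)})^T\big)\Upsilon(x)$ and invoke the $L^2(\pi)$-orthonormality of $\Upsilon=\C^{-1/2}\Phi$ (exactly the isometry used for Theorems 1 and 3) to identify the $L^2(\pi)$ norm with $\|\O\hat{\BSigma}^{(\rho)}_{[1\cdots r]}(\hat{\U}^{(\rho)})^T-\BSigma^{(\rho)}_{[1\cdots r]}(\U^{(\rho)})^T\|_F$; transposing and right-multiplying by the orthogonal $\O$ (which preserves $\|\cdot\|_F$) rewrites this as $\|\U^{(\rho)}\BSigma^{(\rho)}_{[1\cdots r]}\O-\hat{\U}^{(\rho)}\hat{\BSigma}^{(\rho)}_{[1\cdots r]}\|_F$, while $\|E\|_{L^2(\pi)}=\Delta_2$.

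For the classification claim, note first that the hypothesis together with the just-proved bound (each summand being nonnegative) forces $\pi(S_k)<\pi(\Omega^*_k)$, so every good region $G_k:=\Omega^*_k\setminus S_k$ has positive $\pi$-measure. The crux is a separation argument: since $\overline{\Psif}$ is constant $(=\hat{s}_i)$ on each estimated cluster $\hat{\Omega}^*_i$, any $x\in G_k$ lying in $\hat{\Omega}^*_i$ obeys $\|\O\hat{s}_i-\Z_{*k}\|<\delta_k/2$. If a single $\hat{\Omega}^*_i$ contained good points from two distinct true clusters $k\neq l$, then $\|\Z_{*k}-\Z_{*l}\|<\delta_k/2+\delta_l/2$, contradicting $\delta_k,\delta_l\le\|\Z_{*k}-\Z_{*l}\|$. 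Hence the sets of estimated clusters meeting different good regions are disjoint. Since each of the $m$ positive-measure good regions meets at least one of the $m$ estimated clusters and these meetings are disjoint across regions, a counting (pigeonhole) argument forces each $G_k$ to lie, up to $\pi$-null sets, inside exactly one estimated cluster $\hat{\Omega}^*_{\sigma(k)}$, with $\sigma$ a permutation; thus every point of $G=\cup_k G_k$ is assigned to its matched cluster, i.e.\ correctly classified.

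The routine ingredients are the two triangle inequalities and the norm isometry to the Frobenius norm. The step I expect to require the most care is the combinatorial argument in the second part: making precise (modulo $\pi$-null sets) that the positive-measure good regions inject into the estimated clusters, and then upgrading injectivity to a bijection $\sigma$ by pigeonhole. The separation inequality $\delta_k/2+\delta_l/2\le\|\Z_{*k}-\Z_{*l}\|$, which is precisely where the gap condition underlying $\Delta_1>4\Delta_2$ enters, is the linchpin preventing two true clusters from collapsing into one estimated cluster.
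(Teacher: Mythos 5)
Your proposal is correct and follows essentially the same route as the paper's proof (which adapts Lemma 5.3 of Lei--Rinaldo): the same pointwise bound $\delta_k^2\le 4\|\O\overline{\Psif}(x)-\Z_{*k}\|^2$ on $S_k$, the same $k$-means optimality step with competitor $\O^T\Z\theta$, the same triangle inequalities yielding the factor $16$, the same $L^2(\pi)$-to-Frobenius isometry via the orthonormality of $\C^{-1/2}\Phi$, and the same separation-plus-counting argument for correct classification. Your pigeonhole phrasing of the final counting step is just a cleaner restatement of the paper's ``$\overline{\Psif}$ cannot take more than $m$ values'' argument, so the two proofs are interchangeable.
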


\begin{proof}
	We follow the proof of Lemma 5.3 in \cite{lei2015consistency}.
	By definition of $S_k$, we have:
	\begin{align}
		\sum_{k=1}^m \pi(S_k) \delta_k^2 \leq & 4 \sum_{k=1}^m \int_{S_k} \pi(x) \| \O\overline{\Psif}(x) - \Z_{*k} \|^2 dx \\ \leq & 4 \int_{\Omega} \pi(x) \| \O\overline{\Psif}(x) - \Z\theta(x) \|^2 dx\\\label{eqn:thm5.1 aux 3}
		&=4 \big\| \O\overline{\Psif}(\cdot) - \Z\theta(\cdot) \big\|_{L^2(\pi)}^2.
	\end{align} 
	To bound $\big\| \O\overline{\Psif}(\cdot) - \Z\theta(\cdot) \big\|_{L^2(\pi)}$:
	\begin{align}	
			 & \big\| \O\overline{\Psif}(\cdot) - \Z\theta(\cdot) \big\|_{L^2(\pi)} \notag \\ \leq &  \big\| \O\overline{\Psif}(\cdot) - \O\hat{\Psif}(\cdot) \big\|_{L^2(\pi)} + \big\| \O\hat{\Psif}(\cdot) - \Psif(\cdot) \big\|_{L^2(\pi)} + \big\| \Psif(\cdot) - \Z\theta(\cdot) \big\|_{L^2(\pi)}  \label{eqn:thm5.1 aux 1} \\
			 = & \big\| \overline{\Psif}(\cdot) - \hat{\Psif}(\cdot) \big\|_{L^2(\pi)} + \big\| \O\hat{\Psif}(\cdot) - \Psif(\cdot) \big\|_{L^2(\pi)} + \big\| E(\cdot) \big\|_{L^2(\pi)}, \label{eqn:thm5.1 aux 2}
	\end{align}
	where we use the fact that $\Psif(x)=\Z\theta(x)+E(x)$.
	Because $\overline{\Psif}(x) = \hat{\bf Z}\hat{\theta}(\cdot)$ solves the empirical k-means problem,
	\begin{equation}\label{bound} \begin{aligned} \big\| \overline{\Psif}(\cdot) - \hat{\Psif}(\cdot) \big\|_{L^2(\pi)} \leq & \big\| \O^T {\bf Z}\theta(\cdot) - \hat{\Psif}(\cdot) \big\|_{L^2(\pi)} = \big\| {\bf Z}\theta(\cdot) - \O\hat{\Psif}(\cdot) \big\|_{L^2(\pi)} \\ \leq & \big\| {\bf Z}\theta(x) - {\Psif}(\cdot) \big\|_{L^2(\pi)} + \big\| \O\hat{\Psif}(\cdot) - \Psif(\cdot) \big\|_{L^2(\pi)} \\ = &  \big\| E(\cdot) \big\|_{L^2(\pi)} + \big\| \O\hat{\Psif}(\cdot) - \Psif(\cdot) \big\|_{L^2(\pi)}. \end{aligned} \end{equation}
	Plugging \eqref{bound} into \eqref{eqn:thm5.1 aux 2} gives
	\begin{align}
			\big\| \O\overline{\Psif}(\cdot) - \Z\theta(\cdot) \big\|_{L^2(\pi)} \leq 2\| \O\hat{\Psif}(x) - \Psif(x) \|_{L^2(\pi)} + 2\| E(\cdot) \|_{L^2(\pi)}. 
	\end{align}
	It follows from Eqn.(\ref{eqn:thm5.1 aux 3}) that
	\begin{align}
	\sum_{k=1}^m\pi(S_k)\delta_k^2&\leq 16 \Big(\| \O\hat{\Psif}(x) - \Psif(x) \|_{L^2(\pi)} + \|E(\cdot)\|_{L^2(\pi)}\Big)^2 \label{eqn:thm5.1 aux 4} \\
	&= 16 \Big(\| \O\hat{\Psif}(x) - \Psif(x) \|_{L^2(\pi)}+ \Delta_2\Big)^2.
	\end{align}
     We then show:
	 $\| \O\hat{\Psif}(x) - \Psif(x) \|^2_{L^2(\pi)}=\|\U^{(\rho)}\BSigma_{[1\cdots r]}^{(\rho)}\O-\hat{\U}^{(\rho)}\hat{\BSigma}^{(\rho)}_{[1\cdots r]}\|_F^2$:
	 \begin{align}
	 \| \O\hat{\Psif}(x) - \Psif(x) \|^2_{L^2(\pi)}&= \| \hat{\Psif}(x) - \O^T\Psif(x) \|^2_{L^2(\pi)}\\\notag
	 &=\int_{\Omega}\pi(x)\Phi(x)^T\C^{-1/2}(\U^{(\rho)}\BSigma_{[1\cdots r]}^{(\rho)}\O-\hat{\U}^{(\rho)}\hat{\BSigma}_{[1\cdots r]}^{(\rho)}) \\ & \qquad \qquad \cdot (\U^{(\rho)}\BSigma_{[1\cdots r]}^{(\rho)}\O-\hat{\U}^{(\rho)}\hat{\BSigma}_{[1\cdots r]}^{(\rho)})^T\C^{-1/2}\Phi(x)dy\\
	 &=Tr\bigg((\U^{(\rho)}\BSigma_{[1\cdots r]}^{(\rho)}\O-\hat{\U}^{(\rho)}\hat{\BSigma}_{[1\cdots r]}^{(\rho)})(\U^{(\rho)}\BSigma_{[1\cdots r]}^{(\rho)}\O-\hat{\U}^{(\rho)}\hat{\BSigma}_{[1\cdots r]}^{(\rho)})^T\bigg)\\
	 &=\|\U^{(\rho)}\BSigma_{[1\cdots r]}^{(\rho)}\O-\hat{\U}^{(\rho)}\hat{\BSigma}^{(\rho)}_{[1\cdots r]}\|_F^2.
	 \end{align} 
  Plug this back into Eqn.(\ref{eqn:thm5.1 aux 4}) 
  \begin{align}
  \sum_{k=1}^m\pi(S_k)\delta_k^2&\leq 16 \Big(\|\U^{(\rho)}\BSigma_{[1\cdots r]}^{(\rho)}\O-\hat{\U}^{(\rho)}\hat{\BSigma}^{(\rho)}_{[1\cdots r]}\|_F + \Delta_2\Big)^2.
  \end{align}
  	which finishes the proof of Eqn.(\ref{eqn:thm 5.1 main1}). 
  
  We then prove if condition (\ref{eqn:thm 5.1 condition 1}) holds, then every data point on $G:=\cup_{k=1}^r (\Omega^*_k \setminus S_k)$ is correctly classified. From Eqn.(\ref{eqn:thm 5.1 main1}): 
   \begin{align}\label{eqn:thm 5.1 aux5}
  \pi(S_k)\delta_k^2&\leq \sum_{k=1}^m\pi(S_k)\delta_k^2\leq 16 \Big(\|\U^{(\rho)}\BSigma_{[1\cdots r]}^{(\rho)}\O-\hat{\U}^{(\rho)}\hat{\BSigma}^{(\rho)}_{[1\cdots r]}\|_F + \Delta_2\Big)^2.
  \end{align}
  
  If condition (\ref{eqn:thm 5.1 condition 1}) holds, dividing $\delta_k^2$ on both sides of Eqn.(\ref{eqn:thm 5.1 aux5}) gives:
  \begin{align}
  \pi(S_k)\leq \frac{16 \Big(\| \O\hat{\Psif}(x) - \Psif(x) \|_{L^2(\pi)} + \Delta_2\Big)^2}{\delta_k^2} < \pi(\Omega_k^*).
  \end{align}
  From this we know $T_k:=\Omega_{k}^{*}\setminus S_k\neq\emptyset$ for all $k$.  We then prove data on $T_k$ are correctly classified for any $k$. If $x\in T_k$, $y\in T_l$ for $k\neq l$, we must have $\overline{\Psif}(x)\neq \overline{\Psif}(y)$, otherwise we have
  $$\max(\delta_k,\delta_l)\leq\|\Z_{*k}-\Z_{*l}\|_{l_2}\leq\|\Z_{*k}-\O\overline{\Psif}(x)\|+\|\Z_{*l}-\O\overline{\Psif}(y)\| < \delta_k / 2 + \delta_l / 2$$
  which is impossible. On the other hand, of $x,y\in T_k$ for some $k$, then we must have $\overline{\Psif}(x)=\overline{\Psif}(y)$, otherwise $\overline{\Psif}(x)$ will take more than $m$ values which is impossible due to is definition in Eqn.(\ref{eqn: bar Psif}). 
\end{proof}

\subsection{Proof of Theorem 4}
\begin{proof}
	
	For ease of notation we omit the perturbation $\sigma$. Let $\mathbbm{O}_{r\times r}$ be the set of all $r\times r$ orthogonal matrices. Let $\O\in\mathbbm{O}_{r\times r}$ be an $r\times r$ orthogonal matrix that will be specified later. Denote $\R=\U^{(\rho)}\BSigma^{(\rho)}_{[1\cdots r]}(\V^{(\rho)})^T=\C^{-1/2}\P\tilde{\C}^{-1/2}$ and $\tilde{\R}=\hat{\U}^{(\rho)}\hat{\BSigma}^{(\rho)}_{[1\cdots r]}(\hat{\V}^{(\rho)})^T$. From Eqn.(\ref{eqn:misclf aux1}) we have:
	\begin{align}
	\sum_{k=1}^m\pi(S_k)\delta_k^2&\leq 16\Big(\|\U^{(\rho)}\BSigma^{(\rho)}_{[1\cdots r]}\O-\hat{\U}^{(\rho)}\hat{\BSigma}^{(\rho)}_{[1\cdots r]}\|_F+ \Delta_2\Big)^2\\
	&=16\Big(\|\R\V^{(\rho)}\O-\tilde{\R}\hat{\V}^{(\rho)}\|_F+ \Delta_2\Big)^2.
	\end{align}
	To bound $\|\R\V^{(\rho)}\O-\tilde{\R}\hat{\V}^{(\rho)}\|_F$:
	\begin{align}
	\|\R\V^{(\rho)}\O-\tilde{\R}\hat{\V}^{(\rho)}\|_F&=\|\R(\V^{(\rho)}\O-\hat{\V}^{(\rho)})+(\R-\tilde{\R})\hat{\V}^{(\rho)}\|_F\\
	&\leq \|\R\|\cdot\|\V^{(\rho)}\O-\hat{\V}^{(\rho)}\|_F + \|\R-\tilde{\R}\|\cdot\|\hat{\V}^{(\rho)}\|_F\\\label{eqn: thm 5.5 aux1}
	&=\|\R\|\cdot\|\V^{(\rho)}\O-\hat{\V}^{(\rho)}\|_F + \sqrt{r}\|\R-\tilde{\R}\|,
	\end{align}
	where on the last equality we use the fact that $\hat{\V}^{(\rho)}$ is orthogonal hence $\|\hat{\V}^{(\rho)}\|_F=\sqrt{r}$. 
	We know that there exists some $\O\in\mathbbm{O}_{r\times r}$ such that 
	\begin{align}
	\|\V^{(\rho)}\O-\hat{\V}^{(\rho)}\|_F\leq \sqrt{2}\|sin\mathbf{\Theta}(\V^{(\rho)},\hat{\V}^{(\rho)})\|_F\leq \sqrt{2r}\|sin\mathbf{\Theta}(\V^{(\rho)},\hat{\V}^{(\rho)})\|.
	\end{align}
	By Wedin's lemma and the fact that $\R$ and $\tilde{\R}$ have rank $r$ we know that 
	\begin{align}
	\|\V^{(\rho)}\O-\hat{\V}^{(\rho)}\|_F\leq \frac{\sqrt{2r}\|\R-\tilde{\R}\|}{\sigma_r(\R)}.
	\end{align}
	Plug this back into Eqn.(\ref{eqn: thm 5.5 aux1}) we have:
	\begin{align}\
	\|\R\V^{(\rho)}\O - \tilde{\R}\hat{\V}^{(\rho)}\|_F&\leq (\sqrt{r} + \frac{\sqrt{2r}\|\R\|}{\sigma_r(\R)})\|\R-\tilde{\R}\|\\\label{eqn:thm 5.5 axu3}
	&\leq 2\sqrt{2r}\kappa(\R)\|\R-\tilde{\R}\|\\
	&\leq 4\sqrt{2r}\kappa(\R)\|\R-\hat{\R}\|\\
	&\leq 4\sqrt{\frac{2r}{\rho_N\tilde{\rho}_N}}\kappa(\R)\|\P-\hat{\P}\|\\
	&=4\sqrt{\frac{2r}{\rho_N\tilde{\rho}_N}}\kappa\|\P-\hat{\P}\|,
	\end{align}
	where in the last equality we use the fact that $\kappa(\R)=\kappa\bigg(\sqrt{\pi(x)}p(y|x)\bigg)$ which we proved in the proof of Theorem 2.
	
	According to Lemma \ref{lemma:matrix concentration}, there exists a constant $c>0$ such that
	if \begin{equation}\label{n_bound} n\geq c \cdot \frac{\kappa^2r\bar{\lambda} t_{mix}\log(2t_{mix}N/\delta)}{\rho_N\tilde{\rho}_N} \cdot \max\bigg\{ \frac{1}{(\Delta_1/4-\Delta_2)^2}, \frac{2}{\epsilon \Delta_1^2}, \frac{4\Delta_2^2}{\epsilon^2\Delta_1^4} \bigg\}, \end{equation}
	then with probability at least $1-\delta$,
	\begin{equation} \label{p_bound} \big\| \hat{\bf P} - {\bf P} \big\| \leq \frac{1}{4\kappa({\bf R})}\sqrt{\frac{\rho_N\tilde{\rho}_N}{2r}} \min\bigg\{ \Delta_1/4 - \Delta_2, \sqrt{\epsilon/2} \Delta_1, \frac{\epsilon \Delta_1^2}{2\Delta_2} \bigg\}. \end{equation}
	Under condition \eqref{p_bound}, $16\big(\|\R\V^{(\rho)}\O-\hat{\R}\hat{\V}^{(\rho)}\|_F+ \Delta_2\big)^2 < \Delta_1^2$, which ensures (\ref{eqn:thm 5.1 condition 1}) is true hence we can use Lemma \ref{lemma:misclassification}. We also have $\|\R\V^{(\rho)}\O-\hat{\R}\hat{\V}^{(\rho)}\|_F^2\leq \epsilon\Delta_1^2/2$ and $\|\R\V^{(\rho)}\O-\hat{\R}\hat{\V}^{(\rho)}\|_F\Delta_2\leq \epsilon\Delta_1^2/2$. These two inequalities together imply
	\begin{align}
	& 16\Big(\|\R\V^{(\rho)}\O-\hat{\R}\hat{\V}^{(\rho)}\|_F+ \Delta_2\Big)^2 \\ = & 16\bigg(\|\R\V^{(\rho)}\O-\hat{\R}\hat{\V}^{(\rho)}\|^2_F+ 2\|\R\V^{(\rho)}\O-\hat{\R}\hat{\V}^{(\rho)}\|_F\Delta_2 +  \Delta_2^2\bigg)\\
	\leq & \epsilon\Delta_1^2 + 16\Delta_2^2.
	\end{align}
	We can now derive an upper bound for the misclassification rate $M$:
	\begin{align}
	M(\hat{\Omega}^*_1,\cdots,\hat{\Omega}^*_m)&\leq \sum_{k=1}^m\frac{\pi(S_k)}{\pi(\Omega_k^*)}\\
	&\leq \sum_{k=1}^m\frac{\pi(S_k)\delta_k^2}{\Delta_1^2}\\
	&\leq	\frac{16\Big(\|\R\V^{(\rho)}\O-\hat{\R}\hat{\V}^{(\rho)}\|_F+ \Delta_2\Big)^2}{\Delta_1^2}\\
	&\leq \epsilon + \frac{16\Delta_2^2}{\Delta_1^2}.
	\end{align}
\end{proof}

\section{Experiment with DQN}


The game of Demon Attack is simulated in the Arcade Learning Environment (\cite{bellemare13arcade}), which provides an interface to hundreds of Atari 2600 games and serves an important testbed for deep reinforcement learning algorithms. We closely follow the experimental setting, network structure and training method used by \cite{mnih2015humanlevel}. In this environment, each game frame $o_t$ is a $210\times160\times3$ image. In each interactive step the agent takes in the last 16 frames and preprocesses them to be the input state $s_t = \phi(\{o_{t-i}\}_{i=0}^{15})$. The state $s_t$ is an $84\times84\times4$ rescaled, grey-scale image, and is the input to the neural network $Q(s_t, \cdot; \theta)$. The first convolution layer in the network has 32 filters of size 8 stride 4, the second layer has 64 layers of size 4 stride 2, the final convolution layer has 64 filters of size 3 stride 1, and is followed by a fully-connected hidden layer of 512 units. The output is another fully-connected layer with six units that correspond to the six action values $\left\{Q(s_t, a_i; \theta)\right\}_{i=1}^6$. The agent selects an action based on these state-action values, repeats the selected action four times, observes four subsequent frames $\{o_{t+i}\}_{i=1}^4$ and receives an accumulated reward $r_t$. 

The agent in the DQN algorithm "learns" through a novel variant of Q-learning that employs the techniques of target net ($\theta^-$) and experience replay ($D$) (\cite{mnih2015humanlevel}), and conducts gradient descent to the following loss function at each iteration:
$$\mathcal{L}(\theta) = \mathbf{E}_{s,a,r,s'\sim U(D)} \left[\left( r+\gamma\max_{a'}Q(s',a';\theta^-) - Q(s,a;\theta) \right)^2\right].$$
While our training regime and hyper-parameters are almost the same as those of \cite{mnih2015humanlevel}, we use Adam optimizer (\cite{DBLP:journals/corr/KingmaB14}) with a decaying learning rate, and a smaller replay buffer of size 500k frames. Training is done over 2.5 million steps, i.e., 10 million game frames. The Q-network is stored and evaluated every 500k steps. The best policy among these evaluations attains a 150-200\% human-level performance (which was reported in \cite{mnih2015humanlevel}), and is later used as our sampling policy.

The raw input to the state embedding algorithm is a time series of length 47936 and dimension 512, comprising 130 trajectories generated by the fully-connected hidden layer in DQN when it is running the sampling policy. The embeddings are obtained through the same process as in Experiment 6.1 with a Gaussian kernel and 200 random Fourier features. The rank $r$ is set to be 3, and the time interval $\tau$ corresponds to 12 game frames, i.e., 0.36 second in real time. Both the raw date and the embeddings are projected onto 2D planes by t-SNE with a perplexity of 40.




\end{alphasection}	


%
%
%
%
%
%
%
%

\end{document}